\newcommand{\hathat}[1]{%
\begingroup%
  \let\macc@kerna\z@%
  \let\macc@kernb\z@%
  \let\macc@nucleus\@empty%
  \hat{\raisebox{.2ex}{\vphantom{\ensuremath{#1}}}\smash{\hat{#1}}}%
\endgroup%
}
\newcommand{\cp}[1]{\textcolor{red}{}} 
\newcommand{\my}[1]{\textcolor{green!50!black}{}}
\newcommand{\cpedit}[1]{{#1}}
\newcommand{\dk}[1]{{}}
\newcommand{\kzedit}[1]{{#1}}
\newcommand{\neurips}[1]{{}}
\newcommand{\arxiv}[1]{{#1}}
\title{RLHF  from Heterogeneous Feedback via  \\
Personalization and Preference Aggregation}
\author{
Chanwoo Park\\
 \texttt{cpark97@mit.edu}\\
 MIT  \and
 Mingyang Liu\\\texttt{liumy19@mit.edu}\\ MIT \and 
Dingwen Kong \\ \texttt{dingwenk@mit.edu}\\ MIT  \and
 Kaiqing Zhang \\ \texttt{kaiqing@umd.edu} \\ University of Maryland, College Park \and
 Asuman Ozdaglar \\ \texttt{asuman@mit.edu} \\ MIT
   }
\date{April 30, 2024}
\begin{document}
\maketitle
\begin{abstract}
Reinforcement learning from human feedback (RLHF) has been an effective technique for aligning  AI systems with human values, with remarkable successes in fine-tuning large-language models recently. Most existing RLHF paradigms make the underlying assumption that human preferences are relatively \emph{homogeneous}, and can be encoded by a single reward model. 
In this paper, we focus on addressing the issues due to the inherent \textit{heterogeneity} in human preferences, as well as their potential \emph{strategic}  behavior in providing feedback. Specifically, we propose two frameworks to address heterogeneous human feedback in principled ways: personalization-based one and preference-aggregation-based one. For the former, we propose two approaches based on representation learning and clustering, respectively, for learning \emph{multiple} reward models that trade-off the bias (due to preference heterogeneity) and variance (due to the use of fewer data for learning each model by personalization). We then establish sample complexity guarantees for both approaches. For the latter, we aim to adhere to the single-model framework, as already deployed in the current RLHF paradigm, by carefully \emph{aggregating} diverse and truthful preferences from humans. We propose two approaches based on reward and preference aggregation, respectively:  the former utilizes social choice theory to aggregate individual reward models, with sample complexity guarantees; the latter directly aggregates the human feedback in the form of probabilistic opinions. Under the probabilistic-opinion-feedback model, we also develop an approach to handle strategic human labelers who may bias and manipulate the aggregated preferences with untruthful feedback. Based on the ideas in mechanism design, our approach ensures truthful preference reporting, with the induced aggregation rule maximizing  social welfare functions. 
\end{abstract}
\neurips{\vspace{-10pt}}
\section{Introduction}
\neurips{\vspace{-7pt}}
As AI models are becoming more powerful, there is greater emphasis on aligning their performance and priorities with the preferences of human users. In this context, reinforcement learning from human feedback (RLHF) has emerged as a promising approach, because it combines pre-trained large language models with direct human feedback \citep{ziegler2019fine, ouyang2022training, bai2022training}. RLHF utilizes human feedback in the form of \emph{preferences} over multiple responses in order to fine-tune the output of a pre-trained model, for example, by encouraging certain responses or types of output. The finetuning can be done by either learning a user reward model over user preference data, or by using the preference data directly (through direct preference optimization  \citep{rafailov2024direct}). In either case, accurately approximating user preferences is an important task, which becomes way more challenging when the target group of users is heterogeneous (\Cref{fig:intro-fig})  \citep{pollak1992demand, boxall2002understanding}.

\neurips{\vspace{-5pt}}
This paper contributes to this literature by providing a holistic study of learning (different) reward models from heterogeneous user preference data. There are two major challenges in this context. The first (\textbf{{\color{blue}C1}}) is a \textbf{pure learning one:} preference data from each individual might not be sufficiently rich to construct an accurate model of heterogeneous users. The second (\textbf{{\color{blue}C2}}) is after learning different reward models for heterogeneous users, \textbf{how to aggregate} them carefully to learn a single model. Moreover, with humans (who are oftentimes viewed as rational decision-makers) involved in the loop, they might \textit{strategically misreport} their preferences to manipulate this aggregated model. 
For example, in online rating systems, users may provide extreme feedback to disproportionately influence the overall ratings toward their viewpoint. Our approach develops ways of tackling these challenges.

\begin{figure}
    \centering
    \includegraphics[width= \textwidth]{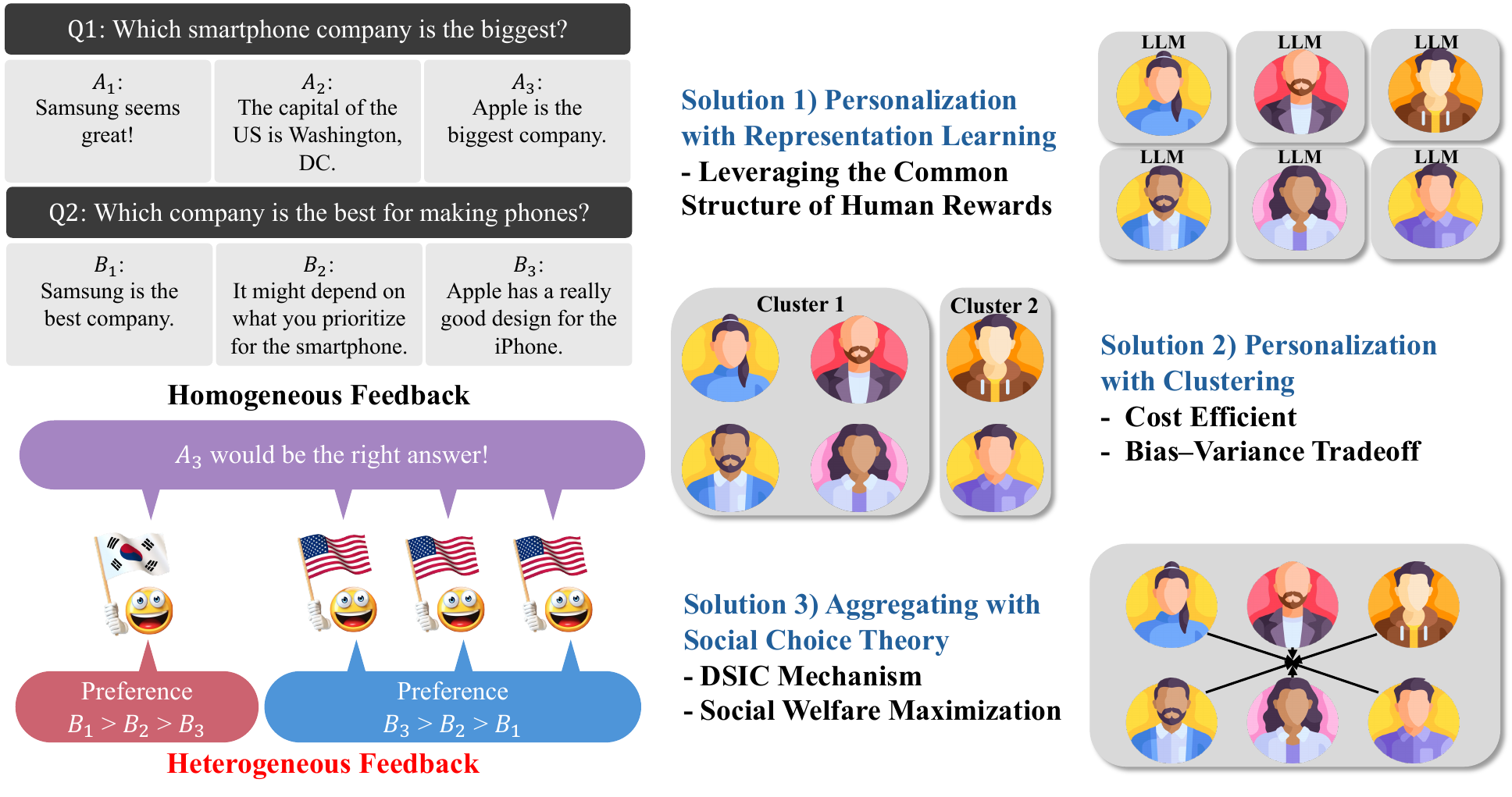}
    \caption{We demonstrate a setting where humans might have heterogeneous feedback. We provide a personalization-based framework and a human preference aggregation-based framework.}
    \label{fig:intro-fig}
    \neurips{\vspace{-17pt}}
\end{figure}
\neurips{\vspace{-4pt}}

To address (\textbf{{\color{blue}C1}}), we adopt two approaches based on representation learning, which assume that individual reward functions share a structure through a common representation. We model each reward function as the inner product of a common representation and a parameter vector. Given the lack of sufficient individual feedback, having a shared structure by representation helps articulate each user’s reward model.
The first approach \textbf{constructs a personalized reward model} for each user. In this approach, we find a common representation and learn each individual’s parameter vector by pooling every individual feedback. The second approach \textbf{segments user preferences into clusters} and learns a reward model for each cluster. This approach is useful when individual reward functions might not be available due to insufficient data. By assuming ``diversity of user’s parameter vectors'', which means that individual parameter vectors span the entire space of parameters (a common assumption in multi-task learning),  we show that this approach enables better sample complexity results. Leveraging data from all users helps learn the common representation, as the diversity assumption guarantees sufficient information about every dimension of the representation.

\neurips{\vspace{-4pt}}
To address (\textbf{{\color{blue}C2}}), we first estimate the parameters for each
individual’s reward model using the individual’s preference comparison data. Then, we aggregate reward models using a family of reward aggregation rules, which follows six pivotal axioms from social choice theory. 
We then provide sample complexities of the policy induced from the single aggregated reward model. We additionally provide a model with a different feedback type - probabilistic opinion. Concretely, instead of choosing a single answer from a pool of candidate answers, we allow the human labeler to choose a probability distribution over the answers, which indicates how much the labeler likes those answers. This type of feedback can arguably express the labeler's preference more accurately. Moreover, probabilistic opinion feedback does not require the relationship between the human reward model and preference. We consider various aggregation rules to aggregate their probabilistic opinion vectors into one. We showed that our suggested probabilistic opinion aggregation rule is equivalent to reward aggregation rules following six pivotal axioms, under the Plackett-Luce model \citep{plackett1975analysis, luce2005individual}.

\neurips{\vspace{-4pt}}
To deal with the \textit{strategic misreport} problem, we adopt a mechanism design approach whereby users correctly reporting their preferences is incentivized.
We model each human labeler's utility as a quasi-linear function, considering both the distance between her probabilistic opinion vector and the aggregated opinion vector, and the associated costs. Under this model, we show that our proposed aggregation rule maximizes \textit{social welfare}. Lastly, we design an incentive-compatible mechanism to guarantee truthful reporting by inducing proper cost in the human feedback collection process.

\neurips{\vspace{-10pt}}

\neurips{\subsection{Related Work} 
\neurips{\vspace{-8pt}}
We defer a detailed related work and comparison with recent work to \Cref{ssec:related}. }
\arxiv{\subsection{Related Works} \label{ssec:related}
\paragraph{Reinforcement Learning from Human Feedback.}  
Empirical evidence has demonstrated the efficacy of incorporating human preferences into reinforcement learning (RL) for enhancing robotics \citep{abramson2022improving, hwang2023promptable} and for refining large-scale language models \citep{ziegler2019fine, ouyang2022training, bai2022training}. These human inputs take various forms, such as rankings \citep{ziegler2019fine, ouyang2022training, bai2022training}, demonstrations \citep{finn2016guided}, and scalar ratings \citep{warnell2018deep}. A few approaches have been explored {empirically} to personalize RLHF.  \cpedit{For example, assigning fine-grained rewards to small text segments to enhance the training process \citep{wu2024fine}, or training each human labeler's reward model with Multi-Objective Reinforcement Learning perspective \citep{jang2023personalized, hwang2023promptable} have been proposed. Moreover, \citep{li2024personalized} suggested the training of each human labeler's reward model directly using personalized feedback with human embedding obtained by the human model, and also an approach for the clustering with finding cluster embedding.}

On the theory front, the studies of RLHF  have received increasing research interest. The most related prior works are \citep{zhu2023principled,zhan2023provable,wang2024rlhf}, where 
\citep{zhu2023principled} investigated the Bradley-Terry-Luce (BTL) model \citep{bradley1952rank} within the context of a linear reward framework; while \citep{zhan2023provable} generalized the results to encompass more general classes of reward functions. Both works concern the setting with offline preference data. Additionally,  \citep{kim2024unified} provided a linear programming framework for offline reward learning.
\citep{xiong2023gibbs} provided a theoretical analysis for KL-regularized RLHF. In the online setting, \citep{wang2024rlhf} established a correlation between online preference learning and online RL through a \textit{preference-to-reward} interface.

{Yet, to the best of our knowledge, there is no prior work that has analyzed RLHF with heterogeneous feedback with theoretical guarantees {(except the recent independent works discussed in detail below).}}

\paragraph{Representation Learning.} 
Early work of \citep{baxter2000model} established a generalization bound that hinges on the concept of a task generative model within the representation learning framework. More recently,  \citep{tripuraneni2021provable, du2020few} demonstrated that, in the setup with linear representations and squared loss functions, task diversity can significantly enhance the efficiency of learning representations. \cpedit{Moreover, \citep{tripuraneni2020theory} provided a representation learning with general representation and general loss functions. Representation learning has been extended to the reinforcement learning setting as well. } \cpedit{For low-rank Markov Decision Processes, where both the reward function and the probability kernel are represented through the inner products of state and action representations with certain parameters, \citep{agarwal2020flambe, ren2022free, uehara2021representation} explored the theoretical foundations for learning these representations.} Also, \citep{ishfaq2024offline, bose2024offline} analyzed the sample complexity of multi-task offline RL. 

\cpedit{
\paragraph{Reward and Preference Aggregation.} 
Preference aggregation is the process by which multiple humans' preference orderings of various social alternatives are combined into a single, collective preference or choice \citep{list2013social}. Arrow's Impossibility Theorem demonstrates that no aggregation rule for preference orderings can simultaneously meet specific criteria essential for ensuring a fair and rational aggregation of each human user's preferences into a collective decision \citep{arrow1951alternative}. Therefore, people considered replacing preference orderings with assigning real numbers to social alternatives \citep{sen2018collective, moulin2004fair}, which is sometimes called a reward (welfare) function in social choice theory.} for each human user. \citep{skiadas2016scale, moulin2004fair} provided reward (welfare) aggregation rules which satisfy several desirable properties. \cp{is it too simple?} Furthermore, an alternative method to circumvent Arrow's impossibility theorem involved aggregating preferences via probabilistic opinion \citep{stone1961opinion, lehrer2012rational}. In this approach, opinions are represented as probability assignments to specific events or propositions of interest.

\paragraph{Comparison with Recent  Works.}
While preparing the present work, we noticed two recent independent works that are closely related. Firstly, \citep{chakraborty2024maxmin} considered the aggregation of reward models with heterogeneous preference data, 
{focusing on aligning with the Egalitarian principle in social choice theory. In contrast, we provide a framework with various aggregation rules} and also prove that {the aggregation rules we considered are} also welfare-maximizing. More importantly, we design mechanisms for human feedback providers so that they can truthfully report their preferences {even when they may be strategic}. 
Moreover, we also develop another framework to handle heterogeneous preferences: the personalization-based one. 
Finally, we establish near-optimal sample complexity analyses for the frameworks we developed. 

More recently, {\citep{zhong2024provable}, which is a concurrent work with this paper, provided a theoretical analysis of reward aggregation in RLHF, focusing primarily on linear representations. Our work, in comparison, considers {general representation functions and general relationships between reward function and preference.} Unlike \citep{zhong2024provable}, where they focused on reward aggregation, we focus on personalization for every human labeler and also employ clustering techniques for personalization. \citep{zhong2024provable} and our paper also both investigated the case that reward and preference are not related. Our paper suggested a probabilistic opinion pooling with a mechanism design to effectively elicit truthful human preferences, presuming human labelers may be {strategic}. In contrast, \citep{zhong2024provable} analyzed an algorithm for a von Neumann winner policy, where a von Neumann winner policy is a policy that has at least a 50\% chance of being preferred compared to any other policy. Moreover, \citep{zhong2024provable} also explored the Pareto efficiency of the resulting policy.}  

\paragraph{Fundamentals of Auction Theory. }
Consider the sealed-bid auction mechanism \citep{vickrey1961counterspeculation}, where each participant $i \in [N]$ privately submits a bid $b_{i}(x)$ for every possible outcome $x \in X$, whose true value is $p_i(x) \in \RR$. {An auction is termed a {Dominant Strategic Incentive-Compatible (DSIC) auction} \citep{roughgarden2010algorithmic} if revealing each participant's true valuation is a weakly dominant strategy, i.e., an individual's optimal strategy is to bid their true valuation of the item, $b_i(x) = p_i(x)$ for all $x \in X$, irrespective of the bids $b_{-i}(x)$ submitted by others for all $x \in X$. This mechanism is also called a \emph{truthful} mechanism \citep{roughgarden2010algorithmic}.  An auction has {a social-welfare-maximizing  allocation rule} \citep{roughgarden2010algorithmic} if the outcome $x$ is $\argmax_{x \in X} \sum_{i \in [N]} p_i(x)$.}  }

\neurips{\vspace{-6pt}}

\paragraph{Notation.}  
The matrix $\mathbf{O}$ denotes an all-zero matrix, while $I$ stands for an identity matrix, of proper dimensions. We use $A \succ \mathbf{O}$ to denote that matrix $A$ is a positive definite matrix. The function $\sigma$ represents the Sigmoid function, defined by $\sigma(x) = 1/(1 + \exp(-x))$. The notation $[K]$ denotes the set $\{1, 2, \dots, K\}$. $\Delta(\cA)$ refers to a probability vector in $\RR^{|\cA|}$. The term $\sigma_k^2(A)$ denotes the $k$-th largest singular value of matrix $A$. 
\arxiv{A function \( f(x) \) is categorized based on the complexity notation as follows: \( f(x) = O(g(x)) \) {if there exists $C>0$ and $x_0$ such that  $f(x) \leq C g(x) $ holds for all $x \geq x_0$, \( f(x) = \Omega(g(x)) \) if there exists $C>0$ and $x_0$ such that \( f(x) \geq C \cdot g(x) \) for all \( x \geq x_0 \), \( f(x) = o(g(x)) \) if \( \lim_{x \to \infty} \frac{f(x)}{g(x)} = 0 \), and \( f(x) = \tilde{O}(g(x)) \) if \( f(n) = O(g(x) \cdot \log^k(g(x))) \) for some finite \( k \)}. 
The vector $e_1$ is defined as the standard basis vector of proper dimension with the first component being $1$. For a finite-dimensional vector $x$, the norm $\norm{x}_1$ refers to its $\ell_1$-norm, while }$\norm{x}_2$ refers to the $\ell_2$-norm\arxiv{, unless otherwise specified}.  We also define $\norm{x}_{\Sigma} = \sqrt{x^\intercal \Sigma x}$ for 
a positive definite matrix $\Sigma$. For a matrix $M$, the norm $\norm{M}_F$ denotes the Frobenius norm of $M$.  
\arxiv{The multinomial distribution is denoted by \(\text{Multinomial}(p_1, \ldots, p_n)\), where \(p_1, \ldots, p_n\) are the probabilities of outcomes for each of the \(n\) categories, respectively, with \( \sum_{i=1}^{n} p_i = 1 \) and $p_i\geq 0$ for all $i\in[n]$. Kullback-Leibler (KL) divergence between two probability distributions \( P, Q \in \Delta(X) \)  is defined as $\sum_{x \in \text{supp}(X)} P(x) \log \left(\frac{P(x)}{Q(x)}\right) $.}

\neurips{\vspace{-12pt}}
\section{Preliminaries}
\neurips{\vspace{-9.5pt}}

Most existing RLHF processes  (for language model fine-tuning)  consist of two main stages: (1) learning a model of human rewards {(oftentimes from preference data),} 
and (2) fine-tuning with the reference policy through Reinforcement Learning algorithms, e.g., Proximal Policy Optimization (PPO) \citep{schulman2017proximal}. {It may also be possible to avoid the explicit learning of reward functions while fine-tuning the policy directly from preference data \citep{rafailov2024direct}.}  
\neurips{\vspace{-9pt}}

\paragraph{Markov Decision Processes. }
We define the state \(s\) as an element of the set of possible prompts or questions, denoted by \(\mathcal{S}\), and the set of actions \(a\), contained in \(\mathcal{A}\), as the potential answers or responses to these questions.
{Consider an RLHF setting with $N$ human labelers (or users), each of whom has their own reward function. This setting can be characterized by a Markov Decision Process (MDP) with $N$ reward functions, represented by the tuple \(M = (\mathcal{S}, \mathcal{A}, H, (P_h)_{h \in [H]}, \br = (r_{ i})_{i \in [N]})\),} where \(H\) denotes the length of the horizon, 
\(P_h: \mathcal{S} \times \mathcal{A} \mapsto \Delta(\mathcal{S})\) is the state transition probability  at step \(h\in[H]\), 
$\cT:= (\mathcal{S} \times \mathcal{A})^H$ denotes the  set of all possible trajectories, and \(r_{i}: \cT\rightarrow \mathbb{R}\) is the reward function for individual \(i\) and trajectory $\tau \in \cT$, 
representing the utility of {human user} \(i\) from a sequence of responses to a given prompt.
{We assume $-R_{\max} \leq r_i(\tau) \leq R_{\max}$ for every $\tau \in \cT$ and $i \in [N]$, for some $R_{\max}>0$.} 
This reward model also covers  the case that $r_i(\tau) = \sum_{h \in [H]} r_{h, i}(s_h, a_h)$, {where $r_{h,i}: \cS \times \cA \to \RR$ denotes the state-action reward function for each step $h$ and individual $i$,}  
and $\tau = (s_1, a_1, s_2, a_2, \dots, s_H, a_H)$.  
The MDP concludes at an absorbing termination state with zero reward after \(H\) steps. 
A policy \(\pi_h: (\mathcal{S} \times \cA)^{h-1} \times \cS \to \Delta(\mathcal{A})\) is defined as a function mapping trajectories to distributions over actions for each step \(h \in [H]\)  within the horizon \(H\). {We define the history-dependent policy class as $\Pi$.} The collection of these policies across all steps is denoted by \(\pi\)$=(\pi_h)_{h=1}^{H-1}$. The expected cumulative reward of a policy \(\pi\) is given by $J(\pi; r_i):= \mathbb{E}_{\tau, \pi}[r_i(\tau)]$ where the expectation in the formula is taken over the distribution of the trajectories under the policy $\pi$. Trajectory occupancy measures, denoted by \(d_\pi: \cT \to [0, 1]\), are defined as \(d_\pi(\tau) := \mathbb{P}_\pi (\tau)\), which denotes the probability of generating trajectory $\tau$  following policy \(\pi\). 
\neurips{\vspace{-9pt}}

\paragraph{Relationship between Preference and Reward Function. }
For the MDP with \(M = (\mathcal{S}, \mathcal{A}, H, (P_h)_{h \in [H]}, \br = (r_{i})_{i \in [N]})\),  if we compare two trajectories $\tau_0$ and $\tau_1$, we define {some random variable} $o$ such that $o = 0$ if $\tau_{0} \succ \tau_{1}$, and $o = 1$ if $\tau_{0} \prec \tau_{1}$. Here, $\tau_0 \succ \tau_1$ indicates that $\tau_0$ is preferred than  $\tau_1$.  We assume that $P_{r_i}(o = 0\mid \tau_0, \tau_1) = \Phi(r_i (\tau_0) - r_i (\tau_1))$  for all $i \in [N]$, where $\Phi: \RR \to [0,1]$ is a monotonically increasing function, which satisfy $\Phi(x) + \Phi(-x) = 1$ and {$\log \Phi(x)$ is a strongly convex function.} For example, $\Phi(x) = \sigma(x)$ indicates the BTL model (\Cref{def:PL} below), a {frequently used model for the relationship between preference and reward}. Also, we define $P_{\br}(\cdot \mid \tau_0, \tau_1) := (P_{r_1}(\cdot \mid \tau_0, \tau_1)^\intercal, \dots, P_{r_N}(\cdot \mid \tau_0, \tau_1)^\intercal)^\intercal$. We call $P_{\br}$ and $P_{r_i}$ a preference {probability vector} induced by the reward vector $\br$ and the reward $r_i$.

\neurips{\vspace{-9pt}}

\section{Provable Personalized RLHF via Representation Learning}
\neurips{\vspace{-9pt}}
\subsection{Learning Personalized Reward Model}
\label{sec:personalization}
\neurips{\vspace{-9pt}}
{In this subsection, we provide the first approach in the personalization-based framework, based on representation learning.}
\neurips{\vspace{-9pt}}

\paragraph{Reward Function Class.} 
We will assume that we have access to a pre-trained feature function $\phi:\cT \to \RR^{d}$, which encodes a trajectory of states and actions (i.e.,  questions and answers) to a $d$-dimensional feature vector. {This covers the case where feature $\phi_h: \cS \times \cA \to \RR^d$ is defined at each state-action pair, i.e., $\phi(\tau):=\sum_{h \in [H]} \phi_h(s_h, a_h)$ for trajectory  $\tau = (s_1, a_1, \dots, s_H, a_H)$.} For example, it is common to use the penultimate layer of an existing {pre-trained LLM or other pre-trained backbones} to encode a long sentence to a feature vector \citep{donahue2014decaf, gulshan2016development, tang2015effective}. 

\neurips{\vspace{-4pt}}

Our first goal is to learn {multiple}  reward models for {each human user} using preference datasets. First, we define the reward function class as $$\cG_{\br} = \left\{ (\langle \psi_\omega(\phi(\cdot)), \theta_i \rangle)_{i \in [N]} \biggiven \psi_{\omega} \in \Psi, \theta_i \in \RR^k \text{ and } \norm{\theta_i}_2 \leq B\text{ for all }i \in [N]\right\},$$
{for some $B>0$,} 
where $\Psi$ is the set of representation functions parameterized by $\omega \in \Omega$, i.e.,  $\Psi = \{\psi_\omega \mid \omega \in \Omega \}$, where $\psi_\omega: \RR^d \to \RR^k$. We assume that $d\gg k$.  We denote $\btheta = (\theta_1, \dots, \theta_N)$, and to emphasize the relationship between reward and $(\omega, \btheta)$, we will write $r_{ \omega, \theta_i}(\cdot) := \langle \psi_\omega(\phi(\cdot)), \theta_i \rangle$ {for each individual $i \in [N]$} and $\br_{\omega, \btheta}(\cdot) := (r_{\omega, \theta_1} (\cdot), \cdots, r_{\omega, \theta_N}(\cdot))^\intercal \in \RR^N$. From this section, we will write $\br^\star = (r_1^\star, \dots, r_N^\star)$ as the underlying human reward functions.

\neurips{\vspace{-2pt}}
{\begin{assumption}[Realizability]
\label{assum:real}
 We assume that the underlying true reward can be represented as $r_i^\star(\cdot) = \langle \psi^\star (\phi(\cdot)), \theta_i^\star \rangle$ {for some representation function $ \psi^\star \in \Psi$ (in other words, there exists some  $\omega^\star \in \Omega$ such that $\psi_{\omega^\star} = \psi^\star$) and $\norm{\theta_i^\star}_2 \leq B$ for each individual $i \in [N]$.} 
\end{assumption}}
\neurips{\vspace{-7pt}}
To emphasize $(\omega, \btheta)$, we define shorthand notation  $P_{\omega, \btheta}:= P_{r_{\omega, \btheta}}$ as the preference probability induced by $r_{\omega, \btheta}$. \cpedit{We also write $P_{\omega, \theta}:= P_{\langle \psi_{\omega}(\phi(\cdot)), \theta \rangle}$, which is the probability induced by $\langle \psi_{\omega}(\phi(\cdot)), \theta \rangle$.}

\neurips{\vspace{-10pt}}
\subsubsection{Algorithms}
\label{ssec:alg-sec3}
\neurips{\vspace{-7pt}}

\cpedit{We introduce our algorithm for learning personalized policy. Compared to traditional RLHF algorithms \citep{ziegler2019fine, ouyang2022training, zhu2023principled}, we consider personalized reward function  by representation learning. }

\neurips{\vspace{-3pt}}

\Cref{alg:personal} outputs a joint estimation of $\psi^\star$ and $\btheta^\star$ with maximum likelihood estimation (MLE), together with personalized policies. The input of the algorithm is $\hat{\cD} = \cup_{i \in [N]} \hat{\cD}_i$ where $\hat{\cD}_i = \{(o_i^{(j)}, \tau_{i, 0}^{(j)}, \tau_{i,1}^{(j)})_{j \in [N_{p}]}\}$. Here, \cpedit{$\tau_{i, t}^{(j)}$ is sampled from the distribution $\mu_t$ for $t = 0, 1$}, and $o_i^{(j)} \sim P_{r^\star_i}(\cdot|\tau_0^{(j)}, \tau_1^{(j)})$. First, we estimate the reward function of human users. \cpedit{After estimating the reward functions, we construct a confidence set for the reward function as follows: Confidence set (\Cref{eqn:confidenceset-alg1-1}) with $\zeta' = C_8  \left(k \frac{ \xi^2\kappa^2 \log(\cN_{\cG_{\br}}(1/(NN_p))/ \delta)}{{\eta^2 NN_p}} + \frac{\xi^2(k + \log(N/\delta))}{\eta^2 N_p} + \lambda B^2\right) $, where $C_8, \lambda>0$ are constants, $\xi := \max_{x \in [-2R_{\max}, 2R_{\max}]}\left|\frac{\Phi'(x)}{\Phi(x)}\right|$, $\kappa:= (\min_{x \in [-2R_{\text{max}}, 2R_{\text{max}}]} \Phi'(x))^{-1}$, and $\eta := \min_{x \in [-2R_{\max},2R_{\max}]}\left(\frac{\Phi'(x)^2 - \Phi''(x)\Phi(x)}{\Phi(x)^2}\right)$. In the case that $\Phi(x) = \sigma(x)$ (i.e. $\Phi$ is a Sigmoid), $\xi \leq 1$ and $\kappa = \eta = \frac{1}{2 + \exp(-2R_{\max}) + \exp(2R_{\max})}$. This confidence set will be related to \Cref{thm:diverse}. 
 Lastly, we find \cpedit{the best policy based on 
the pessimistic expected value function.} $\mu_{i, \text{ref}}$ in \Cref{alg:personal} is a known reference trajectory distribution for individual $i\in [N]$, \cpedit{and it can be set as $\mu_1$. }} \neurips{We defer \Cref{alg:new-person} which addresses a scenario where a new human user, who was not a labeler before, aims to learn their own reward models.}
\neurips{\vspace{-7pt}}
\arxiv{\begin{algorithm}[!h]
	\caption{Personalized RLHF via Representation Learning \label{alg:personal}}
	\begin{algorithmic}
    \STATE \textbf{Input:} Dataset $\hat{\cD}=\cup_{i \in [N]} \hat{\cD}_i$ where $\hat{\cD}_i = \{(o_i^{(j)}, \tau_{i, 0}^{(j)}, \tau_{i,1}^{(j)})_{j \in [N_{p}]}\}$ is the preference dataset for the $i$th individual.    
    \STATE Estimate $\omega^\star$ and $\btheta^\star$ by 
    \[
    (\hat{\omega},\hat{\btheta})\leftarrow \argmax_{\omega \in \Omega, \norm{\theta_i}_2 \leq B \text{ for all } i \in [N]}\sum_{i \in [N]}\sum_{j \in [N_{p}]}  \log P_{\omega, \theta_i} (o_{i}^{(j)} \mid \tau_{i,0}^{(j)}, \tau_{i,1}^{(j)})
    \]
    \STATE Construct a confidence set of the reward function by 
    {\small
        \begin{equation}
        \begin{aligned}
           \cR'(\hat{\cD}) \leftarrow \cap_{i \in [N]}
        \biggl\{ & \br_{\omega, \btheta} \Biggiven   \frac{1}{N_p}\sum_{j \in [N_p]}\big|(r_{ \hat{\omega}, \hat{\theta}_i}(\tau_{i,0}^{(j)}) - r_{ \hat{\omega}, \hat{\theta}_i}(\tau_{i,1}^{(j)})) - (  r_{ \omega, \theta_i} (\tau_{i,0}^{(j)}) - r_{ \omega, \theta_i}(\tau_{i,1}^{(j)})) \big|^2  \leq \zeta' \biggr\}
        \end{aligned}
        \label{eqn:confidenceset-alg1-1}
    \end{equation}}
    \STATE Compute policy with respect to $\cR(\hat{\cD})$ (or $\cR'(\hat{\cD})$) for all $i \in [N]$ by
    \begin{align}
        \hat{\pi}'_i\leftarrow \argmax_{\pi \in \Pi} \min_{\br \in \cR'(\hat{\cD})} \left(J(\pi; r_i) - \EE_{\tau \sim \mu_{i, \text{ref}}}[r_i(\tau)]\right) \label{eqn:robust-alg1}
    \end{align}
    
    \STATE \textbf{Output:} $ (\hat{\omega},\hat{\btheta}, (\hat{\pi}'_i)_{i \in [N]})$.
    \end{algorithmic}
\end{algorithm}}

\arxiv{{\Cref{alg:new-person} addresses a scenario where a new human user, who was not a labeler before, aims to learn their own reward models using representations previously learned by other human users, focusing solely on learning $\theta^\star_0$.} They leverage the learned representation $\psi_{\hat{\omega}}$ from \Cref{alg:personal}. The input of the algorithm is $\hat{\cD}_0 = \{(o_0^{(j)}, \tau_{0, 0}^{(j)}, \tau_{0,1}^{(j)})_{j \in [N_{p}]}\}$. \Cref{alg:new-person} provides an estimation of $\theta^\star_0$ with MLE using the {frozen} representation $\psi_{\hat{\omega}}$. Similarly, after estimating the reward function, we construct confidence set for the MLE estimation with $\zeta = C_8  \left(k \frac{ \xi^2\kappa^2 \log(\cN_{\cG_{\br}}(1/(NN_p))/ \delta)}{{\eta^2 NN_p}} + \frac{\xi^2(k + \log(1/\delta))}{\eta^2 N_p} + \lambda B^2\right)$ for a constant $C_8>0$. Lastly, we find {the best policy based on the pessimistic expected value function.} $\mu_{0, \text{ref}}$ in \Cref{alg:new-person} is a known reference trajectory distribution.

\begin{algorithm}[!h]
	\caption{Transferable RLHF for a New Human User via Representation Learning \label{alg:new-person}}
	\begin{algorithmic}
    \STATE \textbf{Input:} Dataset $\hat{\cD}_0 = \{(o_0^{(j)}, \tau_{0, 0}^{(j)}, \tau_{0,1}^{(j)})_{j \in [N_{p}]}\}$ and $\hat{\omega}$ from \Cref{alg:personal}.     
    \STATE Estimate $\theta_0^\star$ by 
    \[
    \hat{\theta}_0\leftarrow \argmax_{\norm{\theta_0}_2 \leq B }\sum_{j \in [N_{p}]}  \log P_{\hat{\omega}, \theta_0} (o_{0}^{(j)} \mid \tau_{0,0}^{(j)}, \tau_{0,1}^{(j)})
    \]
    \STATE Construct a confidence set of the reward function by{ \small
    \begin{equation*}
        \begin{aligned}
           \cR(\hat{\cD}) \leftarrow
        \biggl\{ & r_{ \omega, \theta_0} \biggiven  \frac{1}{N_p}\sum_{j \in [N_p]}\big|(r_{ \hat{\omega}, \hat{\theta}_0}(\tau_{0,0}^{(j)}) - r_{ \hat{\omega}, \hat{\theta}_0}(\tau_{i,1}^{(j)})) - (  r_{ \omega, \theta_0} (\tau_{0,0}^{(j)}) - r_{ \omega, \theta_0}(\tau_{0,1}^{(j)})) \big|^2  \leq \zeta \biggr\}
        \end{aligned}
    \end{equation*}}
    \STATE Compute policy with respect to $\cR(\hat{\cD})$ by
    \[\hat{\pi}_0\leftarrow \argmax_{\pi \in \Pi} \min_{r_0 \in \cR(\hat{\cD}_0)} \left(J(\pi; r_0) - \EE_{\tau \sim \mu_{0, \text{ref}}}[r_0(\tau)] \right)
    \] 
    \STATE \textbf{Output:} $(\hat{\pi}_i)_{i \in [N]}$.
    \end{algorithmic}
\end{algorithm}
\neurips{
\subsubsection{Expected Value Function Gap for a New Human User}
\label{ssec:newhuman-theorem}
We have expected value function gap for a new human user as follows: 

\begin{theorem}
\label{thm:diverse-newmodel}
 \emph{(Expected Value Function Gap for a New Human User).}
 Suppose Assumptions \ref{assum:real}, \ref{assum:task_diverse}, \ref{assum:psi-unique}, and \ref{assum:point_concen} hold. For any $\delta \in (0, 1]$ and $\lambda >0$, with probability {at least} $1-\delta$, the output $\hat{\pi}_0$ of \Cref{alg:new-person} satisfies 
\begin{align*}
        &J(\pi_{0, \text{tar}}; r^\star_0) - J(\hat{\pi}_0; r^\star_0) \\
        &\leq 
         \sqrt{c  C_{\br}(\cG_{\br}, \pi_{i, \text{tar}}, \mu_{i, \text{ref}}, i)^2 \left( k \frac{ \xi^2\kappa^2 \log(\cN_{\cG_{\br}}(1/(NN_p))/ \delta)}{{\eta^2 NN_p}} + \frac{\xi^2(k + \log(1/\delta))}{\eta^2 N_p} + \lambda B^2\right)} 
\end{align*}    
where $c >0$ is a constant. 

\end{theorem}
}}
\neurips{\vspace{-4pt}}
\subsubsection{Results and Analyses}
\neurips{\vspace{-7pt}}
For ease of analysis, we consider the case where the sizes of preference datasets for each individual $i \in \{0\} \cup [N]$ are identical, i.e., $\hat{\cD}_i = \{(o_i^{(j)}, \tau_{i, 0}^{(j)}, \tau_{i,1}^{(j)})_{j \in [N_{p}]}\}$, satisfies $|\hat{\cD}_i| = N_p$ for all  $i \in \{0\} \cup [N]$. The result in this section can also be extended to the case with $|\hat{\cD}_i| = N_{p, i}$ for each individual $i$. 
We defer all the proofs of this section to  \Cref{appendix:sec3}. 

\begin{definition}
[Concentrability Coefficient]
\label{def:concentrability-coef}
The concentrability coefficient, \arxiv{with respect to}\neurips{\textit{w.r.t}}
a reward vector class $\mathcal{G}_{\br}$, human user $i$, a target policy $\pi_{\text{tar}}$ (which policy to compete with, which potentially can be the optimal policy $\pi_i^\star$ corresponding to $r^{\star}_i$), and a reference policy $\mu_{\text{ref}}$, is defined as follows:
\neurips{\small $
C_{\br}\left(\mathcal{G}_{\br}, \pi_{\text {tar }}, \mu_{\text {ref }}, i \right):=\max \left\{0, \sup _{\br \in \mathcal{G}_{\br}} \frac{\mathbb{E}_{\tau_0 \sim \pi_{\text {tar }}, \tau_1 \sim \mu_{\text {ref }}}\left[r^{\star}_i\left(\tau_0\right)-r^{\star}_i\left(\tau_1\right)-r_i\left(\tau_0\right)+r_i\left(\tau_1\right)\right]}{\sqrt{\mathbb{E}_{\tau_0 \sim \mu_0, \tau_1 \sim \mu_1}\left[\left|r^{\star}_i\left(\tau_0\right)-r^{\star}_i\left(\tau_1\right)-r_i\left(\tau_0\right)+r_i\left(\tau_1\right)\right|^2\right]}}\right\}.
$}
\arxiv{$$
C_{\br}\left(\mathcal{G}_{\br}, \pi_{\text {tar }}, \mu_{\text {ref }}, i \right):=\max \left\{0, \sup _{\br \in \mathcal{G}_{\br}} \frac{\mathbb{E}_{\tau_0 \sim \pi_{\text {tar }}, \tau_1 \sim \mu_{\text {ref }}}\left[r^{\star}_i\left(\tau_0\right)-r^{\star}_i\left(\tau_1\right)-r_i\left(\tau_0\right)+r_i\left(\tau_1\right)\right]}{\sqrt{\mathbb{E}_{\tau_0 \sim \mu_0, \tau_1 \sim \mu_1}\left[\left|r^{\star}_i\left(\tau_0\right)-r^{\star}_i\left(\tau_1\right)-r_i\left(\tau_0\right)+r_i\left(\tau_1\right)\right|^2\right]}}\right\}.
$$}
We also define the concentrability coefficient \neurips{($C_{r}(\cG_r, \pi_{\text{tar}}, \mu_{\text{ref}})$)} of the reward scalar class in \Cref{ssec:concnetrability}\arxiv{, and we denote this as $C_{r}(\cG_r, \pi_{\text{tar}}, \mu_{\text{ref}})$.}\neurips{.} 
\end{definition}
\neurips{\vspace{-3pt}}
\citep{zhan2023provable} provides an interpretation of concentrability coefficient. For example, if $\mu_{\text {ref}}=\mu_1$, 
the value of $C_{\br}\left(\mathcal{G}_{\br}, \pi_{\text {tar }}, \mu_1, i \right) \leq \sqrt{\max_{\tau \in \cT} \frac{d_{\pi_{\text{tar}}}(\tau)}{\mu_0(\tau)}}$, so 
this reflects the concept of ``single-policy concentrability'' \citep{rashidinejad2021bridging, zanette2021provable, ozdaglar2023revisiting}, which is commonly assumed to be bounded in the offline RL literature.  
\neurips{\vspace{-12pt}}

\cpedit{We consider the case that $(\theta_i)_{i \in [N]}$ are diverse (\Cref{assum:task_diverse}), which is critical for improving the sample complexity of \Cref{alg:personal} by outputting $(\hat{\pi}_i')_{i \in [N]}$. We will additionally assume the uniqueness of the representation up to the orthonormal linear transformation (\Cref{assum:psi-unique}), and uniform concentration of covariance (\Cref{assum:point_concen}). These assumptions are commonly used in multi-task learning \citep{du2020few, tripuraneni2021provable, lu2021power}}

\begin{assumption}[Diversity]
\label{assum:task_diverse}
The matrix $\Theta^\star=[\theta_{1}^\star,\cdots,\theta_{N}^\star]\in\RR^{k\times N}$ satisfies $\sigma_{k}^2(\Theta^\star)\geq \Omega\left(N/k\right)$.
\end{assumption}
\neurips{\vspace{-5pt}}

\cpedit{\Cref{assum:task_diverse} means that $\theta_i$ is evenly distributed in $\RR^d$ space for $i \in [N]$, which indicates ``diverse'' human reward function.} 
\begin{assumption}[Uniqueness of  Representation (up to Orthonormal-Transformation)]
\label{assum:psi-unique}
For any representation functions \cpedit{$\psi,\psi'\in\Psi$  and $\epsilon>0$, if there exists $\{v_i\}_{i=1}^T,\{v_i'\}_{i=1}^T$, and a trajectory distribution $\mu$ that satisfy}
$
\frac{1}{T} \sum_{i \in [T]} \EE_{\tau \sim \mu}\|\psi(\phi(\tau))^\top v_i-\psi'(\phi(\tau))^\top v_i'\|^2\leq \epsilon,
$ where 
$W=[v_1,v_2,\cdots,v_T]\in\RR^{k\times T}$ satisfies $\sigma^2_k(W)\geq \Omega\left(T/k\right)$, and $\norm{v_i}_2 \leq B$ for all $i \in [T]$. Then, there exists a constant orthonormal matrix $P$ such that
\[
\|\psi(\phi(\tau))-P\psi'(\phi(\tau))\|^2\leq  c k\epsilon/B
\]
\cpedit{for all trajectory $\tau$ where $c>0$ is a constant.}
\end{assumption}
\neurips{\vspace{-7pt}}

\cpedit{This assumption posits that if two representation functions,  \(\psi\) and \(\psi'\), yield sufficiently small differences in expected squared norms of their inner products with corresponding vectors over trajectory distributions, then they are related by a constant orthonormal transformation.} If $\psi_\omega(\phi(s,a)):= \omega \phi(s,a)$ where $\omega$ is ${k \times d}$ orthonormal matrix, we can prove that \Cref{assum:psi-unique} holds  with non-degenerate $\phi(s,a)$ distribution (\Cref{sssec:linear}). 

\begin{definition}
Given distributions $\mu_0, \mu_1$ and two representation functions  $\psi,\psi'\in\Psi$, define the covariance between $\psi$ and $\psi'$ with respect to $\mu_0, \mu_1$ to be 
        \neurips{\vspace{-3pt}}
\[
\Sigma_{\psi, \psi'}(\mu_0, \mu_1) :=\EE_{\tau_0 \sim \mu_0, \tau_1 \sim \mu_1 }[(\psi(\phi(\tau_0)) - \psi(\phi(\tau_1)))(\psi'(\phi(\tau_0)) - \psi'(\phi(\tau_1)))^\intercal]\in\RR^{k\times k}.
\]
        \neurips{\vspace{-3pt}}
Define the symmetric covariance as
        \neurips{\vspace{-3pt}}
\[
\Lambda_{\psi,\psi'}(\mu_0, \mu_1)=
\begin{bmatrix}
\Sigma_{\psi, \psi}(\mu_0, \mu_1) &\Sigma_{\psi, \psi'}(\mu_0, \mu_1)\\
\Sigma_{\psi', \psi}(\mu_0, \mu_1) & \Sigma_{\psi, \psi'}(\mu_0, \mu_1)
\end{bmatrix}.
\]
\end{definition} 
\neurips{\vspace{-7pt}}

  We make the following assumption on the concentration property of the representation covariances.
\begin{assumption}
\label{assum:point_concen}\emph{(Uniform Concentrability).}
For any $\delta \in (0, 1]$, there exists a number $N_{\text{unif}}(\Psi, \mu_0, \mu_1, \delta)$ such that for any $n \geq N_{\text{unif}}(\Psi, \mu_0, \mu_1, \delta)$, the empirical estimation $\hat{\Lambda}_{\psi,\psi'}(\mu_0, \mu_1)$ of $\Lambda_{\psi,\psi'}(\mu_0, \mu_1)$ based on {$n$ 
 independent trajectory sample pairs from distributions  $(\mu_0, \mu_1)$}, with probability at least $1-\delta$, will satisfy   the following inequality for all $\psi, \psi' \in \Psi$:

 \neurips{\vspace{-15pt}}

\[
1.1 \Lambda_{\psi,\psi'}(\mu_0, \mu_1) \succeq \hat{\Lambda}_{\psi,\psi'}(\mu_0, \mu_1) \succeq 0.9 \Lambda_{\psi,\psi'}(\mu_0, \mu_1).
\] 
\end{assumption}

\neurips{\vspace{-7pt}}
\Cref{assum:point_concen} means that the empirical estimate $\hat{\Lambda}_{\psi,\psi'}(\mu_0, \mu_1)$ closely approximates the true $\Lambda_{\psi,\psi'}(\mu_0, \mu_1)$ with high probability. Similarly, if $\psi_\omega(\phi(\tau)):= \omega \phi(\tau)$, $N_{\text{point}}(\Psi, \mu_0, \mu_1, \delta) = \tilde{\cO}(d)$ \citep[Claim A.1]{du2020few}. {If distributions $\mu_0, \mu_1$ are clear from the context, we omit the notation $\mu_0, \mu_1$ for $\Sigma_{\psi,\psi'}(\mu_0, \mu_1)$ and $\Lambda_{\psi,\psi'}(\mu_0, \mu_1)$.} 
Moreover, we also write $\Sigma_{\psi, \psi}$ as $\Sigma_{\psi}$ for  notational convenience.
\neurips{\vspace{-4pt}}

\arxiv{
With \Cref{assum:task_diverse} and \Cref{assum:psi-unique}, {$\psi^\star$ and $\psi_{\omega}$ are close up to an orthonormal matrix transformation, as asserted below}:

\begin{restatable}{corollary}{labelcorrect}
    \label{cor:label-is-correct} \emph{(Closeness between $\psi^\star$ and $\psi_{\omega}$).}
Suppose Assumptions \ref{assum:real}, \ref{assum:task_diverse}, and \ref{assum:psi-unique} hold. For any $\delta \in (0,1]$, with probability at least $1-\delta$, if $\br_{\omega, \btheta} \in \cR'(\cD)$ as specified in \Cref{alg:personal}, then there exists an orthonormal matrix $P_\omega$ such that 
\begin{align*}
   \left[ \norm{\psi^\star(\phi(\tau_0)) - \psi^\star(\phi(\tau_1)) - P_\omega(\psi_\omega(\phi(\tau_0))- \psi_\omega(\phi(\tau_1)))}^2 \right] \leq  k \frac{c_{\text{rep}} \kappa^2 \log(\cN_{\cG_{\br}}(1/(NN_p))/ \delta)}{{NN_p B^2}}
\end{align*}
for all $\tau_0, \tau_1$, where $c_{\text{rep}} >0$ is a constant.
\end{restatable}}

\cpedit{We present the gap of the expected value function between the target policy $\pi_{i, \text{tar}}$ and the estimated policy $\hat{\pi}_i$ for each individual $i \in [N]$. Here, $\pi_{i, \text{tar}}$, which may be the optimal policy $\pi_i^\star$ over $r_i^\star$, serves as the policy that $\hat{\pi}_i$ will compare with.} 

\begin{restatable}{theorem}{diverse}
    \label{thm:diverse}
    \emph{(Expected Value Function Gap).}
Suppose Assumptions \ref{assum:real}, \ref{assum:task_diverse}, \ref{assum:psi-unique}, and \ref{assum:point_concen} hold. For any $\delta \in (0, 1]$, all $i \in [N]$ and $\lambda >0$, with probability at least $1-\delta$, the output $\hat{\pi}_i'$ of \Cref{alg:personal} satisfies 
\neurips{{\small
\begin{equation}
\begin{aligned}
        &J(\pi_{i, \text{tar}}; r^\star_i) - J(\hat{\pi}_i'; r^\star_i) 
        \\
        &\leq 
         \sqrt{c  C_{\br}(\cG_{\br}, \pi_{i, \text{tar}}, \mu_{i, \text{ref}}, i)^2 \left( k \frac{ \xi^2\kappa^2 \log(\cN_{\cG_{\br}}(1/(NN_p))/ \delta)}{{\eta^2 NN_p}} + \frac{\xi^2(k + \log(N/\delta))}{\eta^2 N_p} + \lambda B^2\right)} 
\end{aligned} 
\label{eqn:thm3.4}
\end{equation}}}
\arxiv{
\begin{equation}
\begin{aligned}
        &J(\pi_{i, \text{tar}}; r^\star_i) - J(\hat{\pi}_i'; r^\star_i) 
        \\
        &\leq 
         \sqrt{c  C_{\br}(\cG_{\br}, \pi_{i, \text{tar}}, \mu_{i, \text{ref}}, i)^2 \left( k \frac{ \xi^2\kappa^2 \log(\cN_{\cG_{\br}}(1/(NN_p))/ \delta)}{{\eta^2 NN_p}} + \frac{\xi^2(k + \log(N/\delta))}{\eta^2 N_p} + \lambda B^2\right)} 
\end{aligned} 
\label{eqn:thm3.4}
\end{equation}}
where $c >0$ is a constant. 
\end{restatable}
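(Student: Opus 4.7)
The plan is to combine a maximum-likelihood concentration argument over the full preference dataset with the diversity/uniqueness assumptions on the representation, and then apply a standard pessimism-based offline-RLHF argument (via the concentrability coefficient) to convert a reward-prediction error into a value-function gap. At a high level, the bound in \Cref{eqn:thm3.4} has two data-dependent pieces: a representation-learning term at rate $1/(NN_p)$ and a per-user parameter-fitting term at rate $1/N_p$. Correspondingly the proof must carefully route the statistical gain from all $NN_p$ samples into learning the shared $\psi^\star$, while admitting that each individual $\theta_i^\star$ can only be pinned down by that user's $N_p$ samples.

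\textbf{MLE concentration and validity of the confidence set.} Because $\log \Phi$ is strongly convex and rewards are bounded by $R_{\max}$, a standard MLE/bracketing argument (in the spirit of \citep{zhan2023provable,zhu2023principled}) applied to the joint class $\cG_{\br}$ at scale $1/(NN_p)$ yields, with probability at least $1-\delta/2$,
\[
\frac{1}{NN_p}\sum_{i\in[N]}\sum_{j\in[N_p]} \bigl|(r_i^\star(\tau_{i,0}^{(j)})-r_i^\star(\tau_{i,1}^{(j)})) - (r_{\hat\omega,\hat\theta_i}(\tau_{i,0}^{(j)})-r_{\hat\omega,\hat\theta_i}(\tau_{i,1}^{(j)}))\bigr|^2 = O\!\left(\frac{\xi^2\kappa^2\log(\cN_{\cG_{\br}}(1/(NN_p))/\delta)}{\eta^2 NN_p}\right).
\]
The factors $\xi,\kappa,\eta$ enter when converting a squared-Hellinger/KL bound on the induced preference distributions into a squared-error bound on reward differences, using the regularity of $\Phi$. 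With the same argument one checks that $\br^\star \in \cR'(\hat\cD)$ for the specified $\zeta'$, so that the pessimistic policy $\hat\pi_i'$ defined by \Cref{eqn:robust-alg1} is meaningful and $J(\pi_{i,\text{tar}};r_i^\star) \le \min_{\br\in\cR'(\hat\cD)} J(\pi_{i,\text{tar}};r_i) + \EE_{\tau\sim\mu_{i,\text{ref}}}[r_i^\star(\tau)]$.

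\textbf{Decoupling representation from per-user parameters.} Next I would pass from the empirical average above to the population expectation using \Cref{assum:point_concen}, and then apply \Cref{assum:task_diverse} together with \Cref{assum:psi-unique}, exactly as packaged by \Cref{cor:label-is-correct}, to extract an orthonormal $P_{\hat\omega}$ with
\[
\bigl\|\psi^\star(\phi(\tau_0))-\psi^\star(\phi(\tau_1)) - P_{\hat\omega}(\psi_{\hat\omega}(\phi(\tau_0))-\psi_{\hat\omega}(\phi(\tau_1)))\bigr\|^2 = O\!\left(\frac{k\kappa^2\log(\cN_{\cG_{\br}}(1/(NN_p))/\delta)}{NN_p}\right).
\]
With the representation essentially fixed, the remaining task per user is to fit a $k$-dimensional parameter $\theta_i^\star$ from $N_p$ comparisons on the ball $\norm{\theta_i}_2\le B$; a standard finite-dimensional MLE analysis, followed by a union bound over the $N$ users, contributes the second term $O(\xi^2(k+\log(N/\delta))/(\eta^2 N_p))$. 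Combining these two pieces with the regularization slack $\lambda B^2$ bounds the population reward-difference error $\EE_{\tau_0\sim\mu_0,\tau_1\sim\mu_1}[((r_i^\star-r_i)(\tau_0)-(r_i^\star-r_i)(\tau_1))^2]$ uniformly over $\br\in\cR'(\hat\cD)$. Invoking the definition of $C_{\br}(\cG_{\br},\pi_{i,\text{tar}},\mu_{i,\text{ref}},i)$ to shift the expectation to $(\pi_{i,\text{tar}},\mu_{i,\text{ref}})$, and then using pessimism of $\hat\pi_i'$ (which is valid because $\br^\star\in\cR'(\hat\cD)$ with high probability), upper-bounds $J(\pi_{i,\text{tar}};r_i^\star)-J(\hat\pi_i';r_i^\star)$ by $C_{\br}(\cdots)$ times the square root of the sum of the above three error terms, yielding \Cref{eqn:thm3.4}.

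\textbf{Main obstacle.} The crux is the decoupling in the second paragraph: the confidence set in \Cref{eqn:confidenceset-alg1-1} is written per-user, so naively one would only get a $1/N_p$ rate on every term and lose the entire benefit of pooling the $N$ users. Extracting the faster $1/(NN_p)$ rate for the representation requires combining the joint MLE bound with the diversity assumption (which prevents the $N$ tasks from collapsing onto a lower-dimensional subspace of $\RR^k$) and with \Cref{assum:psi-unique} (which turns a small population-level inner-product mismatch into a pointwise bound on the representations up to an orthonormal rotation). Tracking the sharp dependence on $\xi,\eta,\kappa,k$ through this decoupling, while making sure the residual per-user estimation error is of the correct form $\xi^2(k+\log(N/\delta))/(\eta^2 N_p)$ rather than something that reintroduces the covering number $\cN_{\cG_{\br}}$, is the main technical effort of the proof.
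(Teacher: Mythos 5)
Your proposal is correct and follows essentially the same route as the paper's proof: a joint MLE/bracketing bound over $\cG_{\br}$ at scale $1/(NN_p)$, converted via Assumptions \ref{assum:task_diverse} and \ref{assum:psi-unique} (i.e., \Cref{cor:label-is-correct}) into a pointwise representation-closeness bound up to an orthonormal rotation, followed by a per-user strongly-concave MLE analysis (Bernstein in quadratic form) that yields the $\xi^2(k+\log(N/\delta))/(\eta^2 N_p)$ term, a population conversion via \Cref{assum:point_concen}, and the standard pessimism-plus-concentrability step. You also correctly identify the central difficulty — routing the pooled $NN_p$ samples into the shared representation while keeping the per-user residual free of the covering number — which is exactly where the paper's \Cref{lem:high-confidence-set-alg1-2} does its work.
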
 

        \neurips{\vspace{-5pt}}
Lastly, we can also use the learned representation for a new human user \arxiv{as follows:

\begin{theorem}
\label{thm:diverse-newmodel}
 \emph{(Expected Value Function Gap for a New Human User).}
 Suppose Assumptions \ref{assum:task_diverse}, \ref{assum:psi-unique}, and \ref{assum:point_concen} hold. For any $\delta \in (0, 1]$ and $\lambda >0$, with probability {at least} $1-\delta$, the output $\hat{\pi}_0$ of \Cref{alg:new-person} satisfies 
\begin{align*}
        &J(\pi_{0, \text{tar}}; r^\star_0) - J(\hat{\pi}_0; r^\star_0) \\
        &\leq 
         \sqrt{c  C_{\br}(\cG_{\br}, \pi_{i, \text{tar}}, \mu_{i, \text{ref}}, i)^2 \left( k \frac{ \xi^2\kappa^2 \log(\cN_{\cG_{\br}}(1/(NN_p))/ \delta)}{{\eta^2 NN_p}} + \frac{\xi^2(k + \log(1/\delta))}{\eta^2 N_p} + \lambda B^2\right)} 
\end{align*}    
where $c >0$ is a constant. 

\end{theorem}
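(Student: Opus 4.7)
The plan is to mimic the proof of \Cref{thm:diverse} but to split the error analysis according to the two distinct data sources used by \Cref{alg:new-person}: the representation $\psi_{\hat{\omega}}$, which was trained jointly on the $N$ prior users and enters \Cref{alg:new-person} as a frozen object, and the new user's parameter $\theta_0^\star$, which is estimated by a single-task MLE on the $N_p$ samples in $\hat{\cD}_0$. This decomposition is what produces the two separate terms in the bound: the $NN_p$-term is inherited from the representation, while the $N_p$-term comes from the new user's own data. In particular, because only one user's tail event has to be controlled here, no union bound over $N$ users is needed, and the second term carries $\log(1/\delta)$ rather than the $\log(N/\delta)$ that appears in \Cref{thm:diverse}.

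The first step is to import the representation guarantee from \Cref{alg:personal}. Under Assumptions~\ref{assum:real}, \ref{assum:task_diverse}, \ref{assum:psi-unique}, and \ref{assum:point_concen}, \Cref{cor:label-is-correct} furnishes an orthonormal matrix $P_{\hat{\omega}}$ such that $\psi^\star(\phi(\cdot))$ and $P_{\hat{\omega}}\psi_{\hat{\omega}}(\phi(\cdot))$ agree pairwise up to an $L^2$-error of order $k\log(\cN_{\cG_{\br}}(1/(NN_p))/\delta)/(NN_p)$. Setting $\tilde{\theta}_0^\star := P_{\hat{\omega}}^\intercal \theta_0^\star$, which still lies in the radius-$B$ ball since $P_{\hat{\omega}}$ is orthonormal, and applying Cauchy--Schwarz with $\norm{\theta_0^\star}_2\le B$, I obtain $r_0^\star(\tau_0) - r_0^\star(\tau_1) = \langle \psi_{\hat{\omega}}(\phi(\tau_0)) - \psi_{\hat{\omega}}(\phi(\tau_1)),\tilde{\theta}_0^\star\rangle + \mathrm{res}(\tau_0,\tau_1)$ with residual controlled by the representation-error scale. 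The new-user problem thereby reduces to an almost-realizable linear-in-feature MLE problem with feature map $\psi_{\hat{\omega}}\circ\phi$ and parameter $\tilde{\theta}_0^\star$.

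Next, I would run a single-task MLE analysis on this reduced problem. Using the strong convexity of $-\log\Phi$ (modulus $\eta$) together with the bounded-score constant $\xi$, a standard generalized-linear-model argument---as in \citep{zhu2023principled}, and as used per-user inside the proof of \Cref{thm:diverse}---produces, with probability at least $1-\delta$, an empirical $\ell_2$ bound on the per-sample reward differences $(r_{\hat{\omega},\hat{\theta}_0}(\tau_{0,0}^{(j)}) - r_{\hat{\omega},\hat{\theta}_0}(\tau_{0,1}^{(j)})) - (r_0^\star(\tau_{0,0}^{(j)}) - r_0^\star(\tau_{0,1}^{(j)}))$ at the rate $\xi^2(k+\log(1/\delta))/(\eta^2 N_p)$, up to the residual from the previous step and the ridge term $\lambda B^2$. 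Folding the representation residual back in shows that $r_0^\star$ (under the reparameterization $\tilde{\theta}_0^\star$) lies in the confidence set $\cR(\hat{\cD}_0)$ with high probability, with radius exactly the $\zeta$ specified before the statement.

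Finally, I would convert this in-sample $\ell_2$ bound into a value-function gap via pessimism, mirroring the endgame of \Cref{thm:diverse}. Because $\hat{\pi}_0$ is the maximin policy over $\cR(\hat{\cD}_0)$ and $r_0^\star \in \cR(\hat{\cD}_0)$, one bounds $J(\pi_{0,\text{tar}};r_0^\star) - J(\hat{\pi}_0;r_0^\star)$ by the worst-case deviation inside $\cR(\hat{\cD}_0)$ and invokes \Cref{def:concentrability-coef} (with the new user's index) to pay the factor $C_{\br}(\cG_{\br},\pi_{0,\text{tar}},\mu_{0,\text{ref}},0)$ in front of $\sqrt{\zeta}$, yielding the displayed inequality after taking a square root. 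The main obstacle in this plan is ensuring that plugging the estimated $\psi_{\hat{\omega}}$ in place of $\psi^\star$ introduces only a misspecification of the same order as the first term of the bound, so that it can be absorbed into $\zeta$ without inflating the rate; \Cref{cor:label-is-correct} is tailored precisely for this purpose and makes the reduction clean.
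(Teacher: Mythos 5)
Your proposal is correct and follows essentially the same route as the paper: the paper itself proves \Cref{thm:diverse-newmodel} by remarking that it is "exactly the same" as the proof of \Cref{thm:diverse}, whose core (Lemma~\ref{lem:high-confidence-set-alg1-2}) performs precisely your decomposition — importing the representation error via \Cref{cor:label-is-correct} as a perturbation $V'-V$ of the score, running the strongly-concave single-task MLE argument of \citep{zhu2023principled} for the mean-zero part, and closing with \Cref{assum:point_concen}, pessimism, and the concentrability coefficient. Your observation that dropping the union bound over $N$ users replaces $\log(N/\delta)$ by $\log(1/\delta)$ is exactly the only change needed.
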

}\neurips{in \Cref{ssec:newhuman-theorem}.}

\begin{remark}[Sample Complexity] 
\cpedit{For Theorem \ref{thm:diverse}, }if we naively learn the personalization model without representation learning, $\cN_{\cG_{\br}}(1/(NN_p))$ will be very large. For example, if we use linear representation $\phi_{\omega}(x) = \omega x$ and $\omega$ is a $d \times k$ orthonormal matrix, then $\log(\cN_{\cG_{\br}}(1/NN_p)/\delta) \leq \cO\left((dk + Nk) \log\left(R_{\max} NN_p/\delta\right)\right)$ while naive personalization with

        \neurips{\vspace{-13pt}}
\cpedit{$$\cG_{\br}' = \left\{ (\langle \phi(\cdot), \theta_i \rangle)_{i \in [N]} \rangle  \biggiven  \theta_i \in \RR^d \text{ and } \norm{\theta_i}_2 \leq B\text{ for all }i \in [N]\right\} $$  %
provides $\cN_{\cG_{\br}'}(1/(NN_p)) \leq \cO\left( Nd \log\left(R_{\max} NN_p/\delta\right)\right)$. Since $d\gg k$, the bound of \Cref{eqn:thm3.4}'s right-hand side has a significant improvement when we use representation learning.} If the representation function class is an MLP class, we can use a known bracket number by \citep{bartlett2017spectrally}. 
\end{remark}

        \neurips{\vspace{-9pt}}
        
\cpedit{We also point out that the existing technique from representation learning literature does not cover the case with general representation function learning with a log-likelihood loss function with $\cO(1/N_p)$ rate, to the best of our knowledge. The technical results are thus of independent interest.}

        \neurips{\vspace{-5pt}}

Lastly, we examine the tightness of our analysis by the theoretical lower bound of the sub-optimality gap of personalization.
        \neurips{\vspace{-2pt}}

\neurips{\begin{theorem}[Informal, Lower Bound for the Sub-Optimality Gap of Personalization] \label{thm:lower-bound-3}
    For any $k>6$ and large $N_p$, there exists a representation function $\phi(\cdot)$ and $C>0$ so that 
            \neurips{\vspace{-15pt}}

{\begin{align*}
        \min_{i \in [N]} \inf_{\hat\bpi}\sup_{Q \in {\rm CB}} \left(\max_{\pi^* \in \Pi} J(\pi^*; r_{\omega, \theta_i})-J(\hat\pi;r_{\omega, \theta_i} )\right)\geq C\cdot\sqrt{\frac{k}{N_p}},
    \end{align*}}
    where $\text{CB}$ is a family of MDP with $N$ reward functions. 
\end{theorem}}

\arxiv{\begin{restatable}{theorem}{lbpersonal}
\label{thm:lower-bound-3}
\emph{(Lower Bound for the Sub-Optimality Gap of Personalization).} For any $k>6,N_p\geq Ck\Lambda^2$ and $\Lambda\geq 2$, there exists a representation function $\phi(\cdot)$ so that 
{\begin{align*}
        \min_{i \in [N]} \inf_{\hat\bpi}\sup_{Q \in {\rm CB}(\Lambda)} \left(\max_{\pi^* \in \Pi} J(\pi^*; r_{\omega, \theta_i})-J(\hat\pi;r_{\omega, \theta_i} )\right)\geq C\Lambda\cdot\sqrt{\frac{k}{N_p}},
    \end{align*}}
    where 
    \begin{align*}
        {\rm CB}(\Lambda)\coloneqq \cbr{Q\coloneqq  \left(\cbr{\mu_0,\mu_1},\{\tau_{i, 0}^{(j)},\tau_{i,1}^{(j)}\}_{i \in [N],j \in [N_p]} ,\omega,\btheta \right) \biggiven C_{\br}'(\cG_{\br},\pi^\star,\mu_{1}, i)\leq\Lambda \text{ for all }i \in [N]}
    \end{align*}    
    is the family of MDP with $N$ reward functions and $H=1$ instances, where
    \begin{align}
        C_{\br}'(\cG_{\br},\pi^\star,\mu_{1}, i)\coloneqq \max \left\{0, \sup _{\br \in \mathcal{G}_{\br}} \frac{\mathbb{E}_{\tau_0 \sim \pi^\star, \tau_1 \sim \mu_1}\left[r^{\star}_i\left(\tau_0\right)-r^{\star}_i\left(\tau_1\right)-r_i\left(\tau_0\right)+r_i\left(\tau_1\right)\right]}{\sqrt{\frac{1}{N_p} \sum_{j=1}^{N_p}\left[\left|r^{\star}_i\left(\tau_{i,0}^{(j)}\right)-r^{\star}_i\left(\tau_{i,1}^{(j)}\right)-r_i\left(\tau_{i,0}^{(j)}\right)+r_i\left(\tau_{i,1}^{(j)}\right)\right|^2\right]}}\right\}.
        \label{eq:define-Cr-prime}
    \end{align}
\end{restatable}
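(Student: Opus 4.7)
The plan is a classical minimax lower bound via reduction to multiple hypothesis testing, with Assouad's lemma applied coordinate-wise and the data distribution carefully calibrated so that the concentrability coefficient equals $\Lambda$. Since the outer $\min_{i \in [N]}$ allows focusing on a single user, I would exhibit a hard family indexed only by one user's parameter.

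For the construction I would take $H=1$, $\cS = [k]$, $\cA = \{0,1\}$, feature dimension $d=k$, feature map $\phi(s,a) = (2a-1)e_s$, and $\Psi$ containing only the identity so that $r_{\theta_i}(s,a) = (2a-1)\theta_{i,s}$. The adversary chooses $\theta_i^\star = \Delta \epsilon$ for $\epsilon \in \{-1,+1\}^k$ and $\Delta = c_1 \Lambda \sqrt{k/N_p}$ to be tuned; the optimal policy $\pi^\star$ then plays the action matching $\mathrm{sign}(\epsilon_s)$ in state $s$ and attains value $\Delta$. For the data, I would set $\mu_1$ to be uniform on $\cS\times\cA$, place $M := \lfloor p N_p/k\rfloor$ in-state-$s$ comparison pairs $((s,0),(s,1))$ for each $s\in [k]$ with $p := 1/(4\Lambda^2)$, and fill the remaining $N_p - kM$ slots with trivial duplicates $\tau_0 = \tau_1$ (which carry zero signal under every competing reward). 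A direct computation of $C_{\br}'$ then gives, with $\delta := \theta_i - \theta_i^\star$, a numerator of magnitude $\tfrac{1}{k}|\sum_s \epsilon_s \delta_s| \le \|\delta\|_2/\sqrt{k}$ and a denominator of $2\sqrt{p/k}\|\delta\|_2$; Cauchy--Schwarz, saturated at $\delta \propto \epsilon$, yields $C_{\br}' = 1/(2\sqrt{p}) = \Lambda$ as required, and the assumption $N_p \geq Ck\Lambda^2$ ensures $M \geq 1$ and $p \leq 1/4$.

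For the information-theoretic step, consider hypotheses $\epsilon$ and $\epsilon^{(j)}$ differing only in coordinate $j$: only the $M$ in-state-$j$ comparisons contribute nonzero KL, and since the reward-difference $2\theta_{i,j}$ flips between $\pm 2\Delta$, each such observation contributes $\mathrm{KL} \leq C_2 \Delta^2$ by the strong convexity of $\log\Phi$ on $[-2R_{\max},2R_{\max}]$ (via the parameter $\eta$). The total KL between the $N_p$-fold product distributions of the observations is thus at most $C_2 M \Delta^2 \leq C_3 N_p \Delta^2/(k\Lambda^2) = O(c_1^2)$, which is bounded by a small constant for small $c_1$. Coordinate-wise Assouad's lemma (combined with Pinsker) then implies that for some adversarial $\epsilon$, the plurality decoding $\hat\epsilon_s := \arg\max_a \hat\pi(a\mid s)$ disagrees with $\epsilon_s$ on at least $\Omega(k)$ coordinates in expectation over the observations. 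Since $J(\pi^\star;r_i^\star) - J(\hat\pi;r_i^\star) = \tfrac{2\Delta}{k}\sum_s \hat\pi(\text{wrong action}\mid s)$ by additivity over the uniform initial distribution, and any state on which the plurality decoding errs contributes at least $\tfrac{\Delta}{k}$ to this sum, the expected sub-optimality is $\Omega(\Delta) = \Omega(\Lambda\sqrt{k/N_p})$, matching the claimed bound.

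The main obstacle is bounding the supremum over $\br \in \cG_{\br}$ inside $C_{\br}'$ uniformly by $\Lambda$, rather than only at the MLE plug-in. The trivial-duplicate device resolves this cleanly: on any such pair, every $\br \in \cG_{\br}$ assigns equal values to $\tau_0$ and $\tau_1$, so those pairs contribute zero to the denominator uniformly in $\br$, and the supremum reduces to the finite-dimensional optimization in $\delta \in \RR^k$ solved above. Once this calibration is in place, the per-pair KL bound via strong convexity of $\log\Phi$ and Assouad's lemma in its coordinate-wise form are routine, and the constants track so that the condition $N_p \geq Ck\Lambda^2, \Lambda \geq 2$ guarantees a non-vacuous lower bound with the stated constant $C$.
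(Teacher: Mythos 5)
Your proposal is correct in its essential structure, but it takes a more self-contained route than the paper. The paper's proof is essentially a reduction: it transplants the three-action hard instance of Zhu et al.\ (2023, Theorem 3.10) --- states $\{0,\dots,k/3-1\}$, a zero-feature reference action $a_3$, blocks $v_{\pm 1}$ differing by $\Delta$ in one coordinate --- verifies that the new concentrability coefficient $C_{\br}'$ is bounded by $\Lambda$ for that instance (using that $\mu_1$ sits entirely on the zero-feature action, so $C_{\br}'$ reduces to the quantity $\|\Sigma_{\cD}^{-1/2}\EE_{s\sim\rho}[\psi_\omega(\phi(s,\pi^\star(s)))]\|_2$ already controlled in that reference), and then invokes their conclusion. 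You instead build a two-action instance from scratch and run Assouad's lemma directly. The two constructions implement the same underlying idea --- dilute the informative comparisons to a $\Theta(1/\Lambda^2)$ fraction of the dataset so that the concentrability equals $\Lambda$ while the effective sample size per coordinate drops to $N_p/(k\Lambda^2)$ --- but by different devices: the paper concentrates most of $\mu_0$ on an uninformative middle action, whereas you pad the design with trivial duplicate pairs $\tau_0=\tau_1$. Your duplicate-padding trick is a genuinely clean way to handle the supremum over $\br\in\cG_{\br}$ inside $C_{\br}'$, since duplicates contribute zero to the empirical denominator uniformly in $\br$ and the remaining ratio is homogeneous of degree zero in $\delta=\theta-\theta^\star$, so the Cauchy--Schwarz calibration $C_{\br}'=1/(2\sqrt{p})=\Lambda$ holds exactly. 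What the paper's route buys is brevity and reuse of a verified result; what yours buys is a transparent, checkable argument whose constants are visible.

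One calibration point to fix: with $\theta^\star=\Delta\epsilon$ and $\Delta=c_1\Lambda\sqrt{k/N_p}$ you get $\norm{\theta^\star}_2=\Delta\sqrt{k}=c_1\Lambda k/\sqrt{N_p}$, which under the stated hypothesis $N_p\geq Ck\Lambda^2$ is only $O(\sqrt{k})$, not $O(1)$. So either $B$ in the definition of $\cG_{\br}$ must be allowed to scale as $\sqrt{k}$ (harmless for your concentrability bound, which is scale-invariant in $\delta$, but it should be stated), or you need the stronger condition $N_p\gtrsim k^2\Lambda^2$. The paper's construction sidesteps this by embedding perturbations at scale $1/d$ so that $\norm{\theta_w}_2\leq 1$, at the price of the side condition $\Delta<1/(6d)$. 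This is a bookkeeping issue rather than a gap in the argument, but it should be reconciled with the exact hypotheses of the theorem.
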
}
Our approach for personalized reward lower bound builds upon \citep[Theorem 3.10]{zhu2023principled}. \arxiv{Note that all results in this paper still hold for the new concentrability coefficient $C_{\br}'$.} By \Cref{thm:lower-bound-3}, for general representation function class, we establish that \Cref{alg:personal} is near-optimal for the sub-optimality of the induced personalization policy, \cpedit{as $\log(\cN_{\cG_{\br}}(1/NN_p))$ can be small so that $\sqrt{k \frac{\log(\cN_{\cG_{\br}}(1/(NN_p))/ \delta)}{NN_p} }$ can be dominated by $\sqrt{\frac{k}{N_p}}$}. Note that if $\Psi$ is a linear representation class, \arxiv{our result for personalization (\Cref{thm:diverse})}\neurips{\Cref{thm:diverse}} still has a $\sqrt{k}$ gap compared to the lower bound (\Cref{thm:lower-bound-3}). \cpedit{This gap is also observed in \citep{tripuraneni2020theory}.} {We will leave the sharpening of this $\sqrt{k}$ factor as a future work.} 

\neurips{\vspace{-7pt}}
\subsection{Personalized RLHF  via Human User Clustering}
\label{sec:k-LLM}
\neurips{\vspace{-7pt}}

{We now provide the second approach in the personalization-based framework, through human user clustering.} In particular, fine-tuning an LLM  for each individual may be impractical. We thus propose an alternative approach that segments human users into clusters and fine-tunes an LLM for \emph{each cluster}. This strategy entails deploying $K$ clustered models{, which can be smaller than the number of human users $N$}. A critical aspect of this methodology is the way to generate clusters. {This clustering-based personalization has also been studied in the federated (supervised) learning literature \citep{mansour2020three,ghosh2020efficient,sattler2020clustered}. We introduce our algorithm next, based on the algorithmic idea in \citep{mansour2020three}.} 

\neurips{\vspace{-7pt}}
\subsubsection{Algorithms}
\neurips{\vspace{-7pt}}

We {partition all the $N$ human users} into $K$ clusters and find the best parameters for each cluster\arxiv{ as follows}:
\arxiv{\begin{align*}
    \max_{(r_{(k)})_{k \in [K]}} \sum_{i \in [N]}\frac{1}{N} \max_{k \in [K]} \EE_{\cD_i}\left[\log P_{r_{(k)}} \left(o_i \mid \tau_{i,0}, \tau_{i,1}\right)\right]. 
\end{align*}
Since we only have access to the empirical data distribution, we instead  solve the following problem:}
\begin{align}
    \max_{(r_{(k)})_{k \in [K]}} \sum_{i \in [N]}\frac{1}{N} \max_{k \in [K]} \sum_{j \in [N_p]} \log P_{r_{(k)}} \left(o_i^{(j)} \biggiven  \tau_{i,0}^{(j)}, \tau_{i,1}^{(j)}\right). 
    \label{eqn:sec4-maxmax}
\end{align}

\neurips{\vspace{-8pt}}
\Cref{alg:cluster} outputs $K$ clustered policies and a map from human users to clusters. The input of the algorithm is $\hat{\cD} = \cup_{i \in [N]} \hat{\cD}_i$ where $\hat{\cD}_i = \{(o_i^{(j)}, \tau_{i, 0}^{(j)}, \tau_{i,1}^{(j)})_{j \in [N_{p}]}\}$, which is the same as \Cref{alg:personal}. After estimating the representation parameter $\hat{\omega}$, the algorithm will estimate the reward function parameters $(\hat{\theta}_{(k)})_{k \in [K]}$ with \Cref{eqn:learntheta(i)}. Lastly, we find {the best policy based on 
 the expected value function}. We defer a practical algorithm that uses DPO \citep{rafailov2024direct} (and also refer to \Cref{ssec:dpo}) and EM \citep{moon1996expectation} algorithms to solve \Cref{eqn:learntheta(i)} to \Cref{alg:clusterDPO}. 
\arxiv{
\begin{algorithm}[!h]
	\caption{Personalized RLHF via Clustering \label{alg:cluster}}
	\begin{algorithmic}
    \STATE \textbf{Input:} Dataset $\hat{\cD}=\cup_{i \in [N]} \hat{\cD}_i$ where $\hat{\cD}_i = \{(o_i^{(j)}, \tau_{i, 0}^{(j)}, \tau_{i,1}^{(j)})_{j \in [N_{p}]}\}$ is the preference dataset for the $i$th individual, and $\hat{\omega}$ form \Cref{alg:personal}.
    \STATE Learn $\theta_{(i)}$ and the clustering map $f:[N] \to [K]$ by
    \begin{align}
        &(\hat{\theta}_{(k)})_{k \in [K]} \leftarrow \argmax_{\norm{\theta_{(k)}}_2 \leq B \text{ for all } {k \in [K]}} \sum_{i \in [N]} \max_{k \in [K]} \sum_{j \in [N_p]} \log P_{\hat{\omega}, \theta_{(k)}}(o_i^{(j)} \mid \tau_{i,0}^{(j)}, \tau_{i,1}^{(j)}) \label{eqn:learntheta(i)}
        \\
        &\hat{f}(i) \leftarrow \argmax_{k \in [K]} \sum_{j \in [N_p]} \log P_{\hat{\omega}, \hat{\theta}_{(k)}}(o_i^{(j)} \mid \tau_{i,0}^{(j)}, \tau_{i,1}^{(j)}) \text{ for all } i \in [N] \nonumber
    \end{align}
    
    \STATE For each $k \in [K]$,  
    \[\hat{\pi}_{(k)}\leftarrow \argmax_{\pi \in \Pi} \left(J(\pi; r_{\hat{\omega}, \hat{\theta}_{(k)}}) - \EE_{\tau \sim \mu_1}[r_{\hat{\omega}, \hat{\theta}_{(k)}}(\tau)]\right).
    \] 
    \STATE \textbf{Output:} $((\hat{\pi}_{(k)})_{k \in [K]}, (\hat{\theta}_{(k)})_{k \in [K]}, \hat{\omega}, \hat{f})$.
    \end{algorithmic}
\end{algorithm}

}

\neurips{\vspace{-8pt}}
\subsubsection{Results and Analyses}
\neurips{\vspace{-8pt}}

{To analyze the clustering-based personalization approach, we adapt the notion of \emph{label discrepancy} in  \citep{mohri2012new} to our RLHF setting, for preference data and a given reward function class.} 
We defer all the proofs of this section to \Cref{appendix:sec4}.

\begin{definition}[Label Discrepancy]
\label{def:discrepancy}
     Label discrepancy for preference distribution $\bD_i$ and $\bD_j$, which are distributions of $(o, \tau_0, \tau_1)$, with reward function class $\cG_{r}$ 
     is defined as follows: 
\begin{align*}
    \texttt{disc}(\bD_i, \bD_j, \cG_{r}) = \max_{\br \in \cG_{r}}\Big|\EE_{\bD_i} \log P_{r}(o \mid \tau_1, \tau_0) - \EE_{\bD_j} \log P_{r}(o \mid \tau_1, \tau_0)\Big|.
\end{align*}     
\end{definition}

The discrepancy is defined as  the supremum value of the difference between the log-likelihood of the preference data when taking expectations over two human dataset distributions. This quantity will be used in the analysis to characterize the gap between the log-likelihood of the estimated parameters and the underlying parameters. A similar concept is frequently used in domain adaptation \citep{mansour2009domain} and federated learning \citep{mansour2020three}.

\arxiv{\begin{lemma}[\cite{mansour2020three}]
\label{lemma:mansour}
For any $\delta \in (0, 1]$, with probability at least $1-\delta$, the output $((\hat{\pi}_{(k)})_{k \in [K]}, (\hat{\theta}_{(k)})_{k \in [K]}, \hat{\omega}, \hat{f})$ of \Cref{alg:cluster} satisfies 
\begin{align*}
    & \max_{\norm{\theta'_i} \leq B \text{ for all }i \in [N]} \sum_{i \in [N]} \sum_{j \in [N_{p, i}]} \log \left(\frac{P_{\hat{\omega}, \theta_i'}(o_i^{(j)} \mid \tau_{i, 0}^{(j)}, \tau_{i, 1}^{(j)})}{P_{\hat{\omega}, \hat{\theta}_{\hat{f}(i)} }(o_i^{(j)} \mid  \tau_{i, 0}^{(j)}, \tau_{i, 1}^{(j)})}\right) 
    \\
    &\leq C_{\text{cluster}} NN_p \left(\sqrt{\frac{\log (2K/\delta)}{N_p}} + \sqrt{\frac{kK\log(N_p/k)}{N_p}}+ \sum_{i \in [N]} \frac{1}{N}\texttt{disc}(\cD_i, \cC_{\hat{f}(i)}, \cG_{\psi_{\hat{\omega}}}) \right),
\end{align*} 
where $\cC_k:=\cup_{\hat{f}(i) = k} \cD_i$, $C_{\text{cluster}}>0$ is a constant, and $\cG_{\psi_{{\omega}}}:=\{r_{\omega, \theta}\mid \norm{\theta} \leq B\}$ for all $\omega \in \Omega$.
\end{lemma}}

\begin{restatable}{theorem}{cluster}
\label{thm:cluster}
\emph{({Total} Expected Value Function Gap).} 
Suppose Assumptions \ref{assum:real},  \ref{assum:task_diverse}, \ref{assum:psi-unique}, and \ref{assum:point_concen} hold. Also, assume that $C_r(\cG_r, \pi, \mu_{i, \text{ref}}, i) \leq C_{\text{max}}'$ for all policy $\pi$ and $i \in [N]$. For any $\delta \in (0, 1]$, all $i \in [N]$ and $\lambda >0$, with probability at least $1-\delta$, the output $((\hat{\pi}_{(k)})_{k \in [K]}, \hat{f})$ of \Cref{alg:cluster} satisfies 
\arxiv{\begin{align*}
        &\sum_{i \in [N]} \left(J(\pi_{i, \text{tar}}; r^\star_i) - J(\hat{\pi}_{\hat{f}(i)}; r^\star_i)\right) 
       \\
      &\leq c N \kappa \Biggl(\underbrace{\frac{\log (2K/\delta)}{N_p} + {\frac{kK\log(N_p/k)}{N_p}}}_{(i)}
 +{\frac{k \xi^2 \kappa^2 \log(\cN_{\cG_r}(1/(NN_p))/ \delta)}{{NN_p}} }
 \\
 &\qquad\qquad\qquad + \underbrace{\left(\sum_{i \in [N]} \frac{1}{N}\texttt{disc}(\cD_i, \cC_{\hat{f}(i)}, \cG_{\psi^\star}))\right)^2}_{(ii)}  + \left(\frac{\log(\cN_{\cG_{\psi^\star}}(1/NN_p)/ \delta)}{NN_p}\right)^2\Biggr)^{1/4},
\end{align*}}
\neurips{{\scriptsize \begin{align*}
        &\sum_{i \in [N]} \left(J(\pi_{i, \text{tar}}; r^\star_i) - J(\hat{\pi}_{\hat{f}(i)}; r^\star_i)\right)  \leq c N \kappa \Biggl(\underbrace{{\frac{\log (2K/\delta) + kK\log(N_p/k)}{N_p}}
 +{\frac{k \xi^2 \kappa^2 \log(\cN_{\cG_r}(1/(NN_p))/ \delta)}{{NN_p}} }}_{(i)}
 \\
 &\qquad\qquad\qquad + \underbrace{\left(\sum_{i \in [N]} \frac{1}{N}\texttt{disc}(\cD_i, \cC_{\hat{f}(i)}, \cG_{\psi^\star}))\right)^2}_{(ii)}  + \left(\frac{\log(\cN_{\cG_{\psi^\star}}(1/NN_p)/ \delta)}{NN_p}\right)^2\Biggr)^{1/4},
\end{align*}}} 
where $c>0$ is a constant. 
\end{restatable}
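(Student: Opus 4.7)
The plan is to decompose the total value gap user-by-user and bound each term using four ingredients: (i) the clustering log-likelihood guarantee of \Cref{lemma:mansour}, (ii) strong convexity of $-\log\Phi$ with modulus $\eta$ to pass from log-likelihood ratios to squared reward-differences, (iii) the representation-closeness corollary \Cref{cor:label-is-correct} to reconcile $\psi_{\hat\omega}$ with $\psi^\star$, and (iv) the assumption $C_r(\cG_r,\pi,\mu_{i,\text{ref}},i)\leq C'_{\max}$ to convert an empirical reward error into a value-function gap, in the pessimism-style manner of \Cref{thm:diverse}.

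First, for each $i\in[N]$, I would argue exactly as in \Cref{thm:diverse}: using that $\hat\pi_{(k)}$ is by construction the exact optimizer of $J(\pi;r_{\hat\omega,\hat\theta_{(k)}})-\EE_{\mu_1}[r_{\hat\omega,\hat\theta_{(k)}}]$, together with the concentrability bound, to get
\[
J(\pi_{i,\text{tar}};r_i^\star)-J(\hat{\pi}_{\hat{f}(i)};r_i^\star) \;\leq\; C'_{\max}\sqrt{\tfrac{1}{N_p}\sum_{j\in[N_p]}\bigl|\Delta_{i,j}\bigr|^2},
\]
where $\Delta_{i,j}:=(r_{\hat\omega,\hat\theta_{\hat f(i)}}(\tau_{i,0}^{(j)})-r_{\hat\omega,\hat\theta_{\hat f(i)}}(\tau_{i,1}^{(j)}))-(r_i^\star(\tau_{i,0}^{(j)})-r_i^\star(\tau_{i,1}^{(j)}))$.

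Second, I would bound $\sum_i \tfrac{1}{N_p}\sum_j|\Delta_{i,j}|^2$ in two stages. Strong convexity of $-\log\Phi$ converts the sum of $|\Delta_{i,j}|^2$ into $1/\eta$ times a sum of log-likelihood ratios $\log\bigl(P_{r_i^\star}(o_i^{(j)}\mid\cdot)/P_{\hat\omega,\hat\theta_{\hat f(i)}}(o_i^{(j)}\mid\cdot)\bigr)$. Then \Cref{lemma:mansour}, applied with the choice $\theta_i'=\theta_i^\star$ (after reconciling $\psi^\star$ with $\psi_{\hat\omega}$ via \Cref{cor:label-is-correct}, whose orthonormal $P_{\hat\omega}$ can be absorbed into the parameter vector since $\|\theta_i^\star\|_2\leq B$ is preserved under orthonormal rotation), controls this sum by
\[
C_{\text{cluster}} NN_p\Bigl(\sqrt{\tfrac{\log(2K/\delta)}{N_p}}+\sqrt{\tfrac{kK\log(N_p/k)}{N_p}}+\sum_i\tfrac{1}{N}\texttt{disc}(\cD_i,\cC_{\hat f(i)},\cG_{\psi_{\hat\omega}})\Bigr),
\]
plus an additive representation-mismatch term which, by \Cref{cor:label-is-correct}, is of order $\tfrac{k\xi^2\kappa^2\log(\cN_{\cG_r}(1/(NN_p))/\delta)}{NN_p}$. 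Replacing $\cG_{\psi_{\hat\omega}}$ by $\cG_{\psi^\star}$ inside $\texttt{disc}$ likewise costs an additive term, contributing the $\bigl(\log(\cN_{\cG_{\psi^\star}}(1/(NN_p))/\delta)/(NN_p)\bigr)^2$ piece once one later squares the inner square-roots.

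Third, I would sum the per-user bound from step one and apply Cauchy--Schwarz $\sum_i\sqrt{X_i}\leq\sqrt{N\sum_i X_i}$ to get $\sum_i(J(\pi_{i,\text{tar}};r_i^\star)-J(\hat\pi_{\hat f(i)};r_i^\star)) \leq C'_{\max}\sqrt{N}\sqrt{\sum_i \tfrac{1}{N_p}\sum_j|\Delta_{i,j}|^2}$. Plugging in the bound from step two and using $(a+b)^2\leq 2a^2+2b^2$ to bring the additive square-root terms inside a single outer square root turns the square-root statistical and discrepancy terms into the linear-in-$N_p$ and squared-discrepancy contributions displayed in the theorem, and the composition of two square roots (one from reward-to-value, one from log-likelihood-to-reward) produces the final $(\cdot)^{1/4}$ rate. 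The overall prefactor $cN\kappa$ combines the $\sqrt{N}$ from Cauchy--Schwarz with the $1/\eta$ from strong convexity (absorbed into $\kappa$) and the remaining normalization.

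The hard part will be the clean handling of the representation mismatch between $\psi^\star$ and $\psi_{\hat\omega}$ inside the Mansour-style bound: \Cref{lemma:mansour} is stated relative to $\psi_{\hat\omega}$, yet the theorem is stated in terms of $\texttt{disc}(\cdot,\cdot,\cG_{\psi^\star})$. Carefully accounting for this swap via \Cref{cor:label-is-correct} --- absorbing the orthonormal $P_{\hat\omega}$ into the parameter vectors and tracking the covering-number-based additive corrections that appear both in the reward-difference bound and in the discrepancy --- is the delicate part. Once that reconciliation is made, the remaining work is bookkeeping: collecting the statistical term $\log(2K/\delta)/N_p+kK\log(N_p/k)/N_p$, the representation term $k\xi^2\kappa^2\log(\cN_{\cG_r}/\delta)/(NN_p)$, the squared-discrepancy term, and the squared covering-number term under the outer fourth-root.
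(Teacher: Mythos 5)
Your overall architecture matches the paper's proof: a per-user value decomposition that exploits the fact that $\hat{\pi}_{(k)}$ exactly maximizes $J(\pi;r_{\hat{\omega},\hat{\theta}_{(k)}})-\EE_{\mu_1}[r_{\hat{\omega},\hat{\theta}_{(k)}}]$ together with the uniform concentrability assumption (note there is no pessimism in \Cref{alg:cluster}, so calling this ``pessimism-style'' is a misnomer --- the paper needs a \emph{three}-term telescoping in which both the first and third terms are controlled by $C'_{\max}$, which is exactly why concentrability must hold for \emph{all} policies and not just $\pi_{i,\text{tar}}$); then \Cref{lemma:mansour} with the competitor $\theta_i'=P_{\hat{\omega}}^\intercal\theta_i^\star$ after reconciling $\psi^\star$ with $\psi_{\hat{\omega}}$ via \Cref{cor:label-is-correct} and \Cref{claim:diff}; then Cauchy--Schwarz and the outer fourth root. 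Those parts are sound.

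The genuine gap is in your middle conversion step. You propose to pass from the log-likelihood-ratio sum to $\sum_{i,j}|\Delta_{i,j}|^2$ via strong convexity of $-\log\Phi$ with modulus $\eta$, but $\hat{\theta}_{\hat{f}(i)}$ is a shared cluster center, not the per-user MLE, so the first-order term $\langle\nabla f_i(\cdot),\hat{\theta}_{\hat{f}(i)}-\theta_i^\star\rangle$ in the Taylor expansion does not vanish and cannot be discarded; moreover this route only yields a bound in the \emph{empirical} covariance norm $\frac{1}{N_p}\sum_j|\Delta_{i,j}|^2$, whereas the concentrability coefficient $C_r$ in the theorem's hypothesis is defined with the population expectation $\EE_{\mu_0,\mu_1}$, so you would still owe an empirical-to-population step. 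The paper instead applies the MLE $\ell_1$-concentration lemma (\Cref{lemma:l2distance-scalar}) to convert the log-likelihood bound directly into $\EE_{\mu_0,\mu_1}\big[\|P_{\hat{\omega},\hat{\theta}_{\hat{f}(i)}}(\cdot\mid\cdot)-P_{r_i^\star}(\cdot\mid\cdot)\|_1^2\big]$, and then uses the mean value theorem (factor $\kappa^2$, not $1/\eta^2$) to reach squared reward differences. This also corrects your accounting of the last term: the $\big(\log(\cN_{\cG_{\psi^\star}}(1/(NN_p))/\delta)/(NN_p)\big)^2$ contribution does \emph{not} come from swapping $\cG_{\psi_{\hat{\omega}}}$ for $\cG_{\psi^\star}$ inside \texttt{disc} --- that swap costs an additive $O\big(\sqrt{k\xi^2\kappa^2\log(\cN_{\cG_r}(1/(NN_p))/\delta)/(NN_p)}\big)$ by \Cref{claim:diff} and is absorbed into the representation term. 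It is the additive covering-number term of \Cref{lemma:l2distance-scalar}, which sits linearly alongside the $\sqrt{\cdot}$ terms and therefore gets squared when the whole expression is pulled under the outer fourth root.
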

Note that \Cref{thm:cluster} addresses the bias-variance tradeoff: as the number of clusters (\(K\)) increases, term (i) (variance) increases, while term (ii) (bias) decreases.
\arxiv{
Also, we note that due to the $\sqrt{kK/N_p}$ order on the right-hand side of \Cref{lemma:mansour}, we have a slower rate in \Cref{thm:cluster} than \Cref{thm:diverse}. This gap is mainly due to the {fact that the} analysis of \Cref{lemma:mansour} should cover uniformly for arbitrary $\hat{f}$, {and also due to a difference between $\max$ and expectation of $\max$, which is bounded using McDiarmid's inequality.}}

\arxiv{
\begin{remark}
   {In contrast to the results in \Cref{sec:personalization}},  
    we additionally assume \( C_r(\cG_r, \pi, \mu_1, i) \leq C_{\text{max}}' \) in \Cref{thm:cluster}. To adopt a pessimistic approach, constructing a confidence set for clustered reward functions across all clusters is necessary. However, the ambiguity of which human user belongs to which cluster complicates this analysis, as pessimism would need to be applied to every potential cluster. Consequently, defining a confidence set for every possible clustering scenario is required, significantly complicating the analysis of the algorithm.
\end{remark}}

\section{Reward and Preference Aggregation}
\label{ssec:reward-agg}
\neurips{\vspace{-7pt}}

{This section {adheres to the RLHF setting with} a single LLM, while handling the heterogeneous human feedback by reward/preference aggregation.}
For reward aggregation, we first estimate individual reward functions and then aggregate these functions to form a unified reward model. In comparison, for preference aggregation, we introduce a novel framework termed ``probabilistic opinion {pooling}''. {Specifically,} instead of relying on binary comparison data, human users provide feedback as \emph{probability vectors}. This approach eliminates the need to aggregate heterogeneous preferences via reward functions, allowing for the direct aggregation of probabilistic opinions provided by users.
\neurips{\vspace{-7pt}}

\subsection{Reward Aggregation}
\neurips{\vspace{-7pt}}
{We introduce the following reward aggregation rules (Equations \eqref{eqn:agg-reward} and \eqref{eqn:agg-reward-new}), {which are favorable as they satisfy several} pivotal axioms in social choice theory. These axioms -- monotonicity, symmetry, continuity, independence of unconcerned agents, translation independence, and the Pigou-Dalton transfer principle -- are crucial for ensuring fairness and consistency in the decision-making process   \citep{list2013social,skiadas2009asset, skiadas2016scale}. We present the definition of these axioms in \Cref{appendix:def-of-axiom} for completeness.}  The aggregation rules are presented as follows:
\neurips{\vspace{-12pt}}

\neurips{\scriptsize
\begin{center}
\begin{minipage}{0.49\linewidth}
\begin{align}
\text{Agg}_{\alpha}(\br) =\left\{\begin{array}{lr}\frac{1}{\alpha}\log\left(\frac{1}{N}\sum_{i \in [N]}\exp(\alpha r_i)\right) & \alpha \neq 0  \\ \frac{1}{N}\sum_{i \in [N]}r_i\ & \alpha=0 \end{array}\right. \label{eqn:agg-reward} \end{align}
\end{minipage}\begin{minipage}{0.49\linewidth}
    \begin{align}
\text{Agg}_{\alpha}'(\br) =\left\{\begin{array}{lr}\frac{1}{N\alpha}\sum_{i \in [N]} (\exp(\alpha r_i) - 1) & \alpha \neq 0  \\ \frac{1}{N}\sum_{i \in [N]}r_i\ & \alpha=0 \end{array}\right. \label{eqn:agg-reward-new}    \end{align}
    \end{minipage}
\end{center}}

\arxiv{\small
\begin{center}
\begin{minipage}{0.49\linewidth}
\begin{align}
\text{Agg}_{\alpha}(\br) =\left\{\begin{array}{lr}\frac{1}{\alpha}\log\left(\frac{1}{N}\sum_{i \in [N]}\exp(\alpha r_i)\right) & \alpha \neq 0  \\ \frac{1}{N}\sum_{i \in [N]}r_i\ & \alpha=0 \end{array}\right. \label{eqn:agg-reward} \end{align}
\end{minipage}\begin{minipage}{0.49\linewidth}
    \begin{align}
\text{Agg}_{\alpha}'(\br) =\left\{\begin{array}{lr}\frac{1}{N\alpha}\sum_{i \in [N]} (\exp(\alpha r_i) - 1) & \alpha \neq 0  \\ \frac{1}{N}\sum_{i \in [N]}r_i\ & \alpha=0 \end{array}\right. \label{eqn:agg-reward-new}    \end{align}
    \end{minipage}
\end{center}}

\neurips{\vspace{-7pt}}

where $\br = (r_1, \dots, r_N)^\intercal$ {is a reward vector with trajectory input.  Note that \Cref{eqn:agg-reward} and \Cref{eqn:agg-reward-new} are equivalent {in the sense of the associated optimal policy}, as $\log(x)$ is monotonically increasing.
We can verify that $\lim_{\alpha \to -\infty} \text{Agg}_{\alpha}(\br) = \min_{i \in [N]} r_i$ and $\lim_{\alpha \to \infty} \text{Agg}_{\alpha}(\br) = \max_{i \in [N]} r_i$. This implies that when $\alpha$ is small (or large), the reward aggregation rule emphasizes $\min_{i \in [N]} (\text{or } \max_{i \in [N]}) r_i$, respectively. When $\alpha = 0$, \Cref{eqn:agg-reward} represents  utilitarianism, and when $\alpha \to -\infty$, \Cref{eqn:agg-reward} represents a Leximin-based aggregation rule  \citep{list2013social}.}

\neurips{\vspace{-7pt}}

\subsubsection{Algorithm and Analysis}
\neurips{\vspace{-7pt}}

\Cref{alg:aggregation} outputs a joint estimation of $\psi^\star$ and $\btheta^\star$ with maximum likelihood estimation  as \Cref{alg:personal}. The procedure is overall the same as \Cref{alg:personal}, except the last step for estimating the best policy for the pessimistic expected value function associated with the aggregated reward function. 
\arxiv{
\begin{algorithm}[!h]
	\caption{RLHF with \cpedit{Reward Aggregation}}\label{alg:aggregation}
	\begin{algorithmic}
    \STATE \textbf{Input:} Dataset $\hat{\cD}=\cup_{i \in [N]} \hat{\cD}_i$ where $\hat{\cD}_i = \{(o_i^{(j)}, \tau_{i, 0}^{(j)}, \tau_{i,1}^{(j)})_{j \in [N_{p}]}\}$ is the preference dataset for the $i$th human, $\lambda > 0$, and $\hat{\omega}$ from \Cref{alg:personal}. We also use \Cref{eqn:confidenceset-alg1-1} for constructing a confidence set of reward function $\cR'(\hat{\cD})$. 
    \STATE Compute policy with respect to $\cR'(\hat{\cD})$ for all $i \in [N]$ by
    \begin{align}
        \hat{\pi} \leftarrow \argmax_{\pi \in \Pi} \min_{\br \in \cR'(\hat{\cD})} \left(J(\pi; \text{Agg}_{\alpha}(r_1, \dots, r_N)) - \EE_{\tau \sim \mu_{\text{ref}}}[\text{Agg}_{\alpha}(r_1, \dots, r_N)(\tau)]\right). \label{eqn:robust-alg-agg-1}
    \end{align}
    \STATE \textbf{Output:} $ (\hat{\omega},\hat{\btheta}, \hat{\pi})$.
    \end{algorithmic}
\end{algorithm}
}

\arxiv{Similar to the results in \Cref{thm:diverse}, we have the following theorem for \Cref{alg:aggregation}:}
\neurips{\vspace{-2pt}}

\begin{restatable}{theorem}{agg}
    \label{thm:agg}
    \emph{(Expected Value Function Gap).}
    Suppose Assumptions \ref{assum:real}, \ref{assum:task_diverse}, \ref{assum:psi-unique}, and \ref{assum:point_concen} hold. For any $\delta \in (0, 1]$, all $i \in [N]$ and $\lambda >0$, with probability at least $1-\delta$, the output $\hat{\pi}$ of \Cref{alg:aggregation} satisfies 
\begin{align*}
        &J(\pi_{\text{tar}}; \text{Agg}_{\alpha}(\br^\star)) - J(\hat{\pi}; \text{Agg}_{\alpha}(\br^\star)) 
        \\
        &\leq 
        \sqrt{c_\alpha   C_{\br}(\cG_{\br}, \pi_{\text{tar}}, \mu_{\text{ref}})^2 \left( \frac{ k\kappa^2 \log(\cN_{\cG_{\br}}(1/(NN_p))/ (\delta/N))}{{NN_p}} + \frac{\xi^2(k + \log(N/\delta))}{\eta^2 N_p} + \lambda B^2\right)}
\end{align*} 
where $c_{\alpha} > 0$ is a constant depending on $\alpha$, and other constants are defined in \Cref{ssec:alg-sec3}. 
\end{restatable}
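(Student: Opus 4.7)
The plan is to combine the MLE concentration results underlying Theorem \ref{thm:diverse} with a Lipschitz property of the aggregation map $\text{Agg}_\alpha$. Algorithm \ref{alg:aggregation} reuses the same confidence set $\cR'(\hat{\cD})$ and pessimistic-optimization structure as Algorithm \ref{alg:personal}; only the objective in the policy optimization changes to the aggregated reward. Thus the core analytic steps---MLE concentration, representation recovery, and pessimism---transfer directly, and the new ingredient needed is a mechanism for propagating individual reward errors through $\text{Agg}_\alpha$.

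First, I would invoke the MLE concentration argument from Theorem \ref{thm:diverse} to conclude that, with probability at least $1-\delta$, (i) $\br^\star \in \cR'(\hat{\cD})$, and (ii) for every $\br_{\omega, \btheta} \in \cR'(\hat{\cD})$ and every $i \in [N]$, the expected pairwise squared deviation $\EE_{\tau_0 \sim \mu_0, \tau_1 \sim \mu_1}[|(r^\star_i(\tau_0) - r^\star_i(\tau_1)) - (r_{\omega, \theta_i}(\tau_0) - r_{\omega, \theta_i}(\tau_1))|^2]$ is at most a quantity of the order appearing in the parenthesized factor of the theorem, using Assumptions \ref{assum:real}, \ref{assum:task_diverse}, \ref{assum:psi-unique}, \ref{assum:point_concen} together with Corollary \ref{cor:label-is-correct}. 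The union bound over $i \in [N]$ contributes the $\log(N/\delta)$ factor in the individual-learning term and the $(\delta/N)$ inside the covering-number term.

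Second, I would establish a Lipschitz-type inequality for $\text{Agg}_\alpha$. Since $\partial_{r_i} \text{Agg}_\alpha(\br) = \exp(\alpha r_i)/\sum_j \exp(\alpha r_j)$ yields nonnegative weights summing to one, it follows that $|\text{Agg}_\alpha(\br)(\tau) - \text{Agg}_\alpha(\br')(\tau)| \leq \max_i |r_i(\tau) - r_i'(\tau)|$, and a matching bound for the $\alpha = 0$ case is immediate. Squaring, applying the triangle inequality on the $\tau_0, \tau_1$ differences, and taking expectations over $(\tau_0, \tau_1)$ lets me bound the pairwise deviation of $\text{Agg}_\alpha(\br^\star)$ from $\text{Agg}_\alpha(\br_{\omega, \btheta})$ by a constant $c_\alpha$ times the sum over $i$ of the per-coordinate pairwise deviations, where $c_\alpha$ absorbs the softmax-weight range (which depends on $\alpha$ and $R_{\max}$). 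Combining with the first step yields an $\ell_2$-type bound on the aggregated pairwise deviation.

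Third, I would apply pessimism together with the concentrability-coefficient inequality lifted to the aggregated scalar reward: since $\br^\star \in \cR'(\hat{\cD})$, one has $J(\pi_{\text{tar}}; \text{Agg}_\alpha(\br^\star)) - J(\hat{\pi}; \text{Agg}_\alpha(\br^\star)) \leq 2\sup_{\br \in \cR'(\hat{\cD})}\EE_{\tau_0 \sim \pi_{\text{tar}}, \tau_1 \sim \mu_{\text{ref}}}[\text{Agg}_\alpha(\br^\star)(\tau_0) - \text{Agg}_\alpha(\br^\star)(\tau_1) - \text{Agg}_\alpha(\br)(\tau_0) + \text{Agg}_\alpha(\br)(\tau_1)]$, and invoking the definition of $C_{\br}(\cG_{\br}, \pi_{\text{tar}}, \mu_{\text{ref}})$ converts this into the square root of the aggregated $\ell_2$-type bound from the previous step. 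The main obstacle is ensuring that $c_\alpha$ remains dimension-free---a naive $\max_i$ bound gives a clean but potentially loose constant, while a softmax-weighted argument is sharper but complicates the $\alpha$-dependence. A secondary subtlety is that $\text{Agg}_\alpha(\br)$ is not itself an element of $\cG_{\br}$, so one must verify that $C_{\br}$ defined via the vector class still legitimately controls the value-function gap at the scalar level; this follows by expanding $\text{Agg}_\alpha(\br^\star) - \text{Agg}_\alpha(\br)$ via the Lipschitz inequality and applying the per-coordinate concentrability definition.
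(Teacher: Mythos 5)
Your overall architecture is the same as the paper's: reuse the confidence set $\cR'(\hat{\cD})$ and the concentration results behind \Cref{thm:diverse} (i.e., \Cref{lem:high-confidence-set-alg1-2} and \Cref{lem:boundexp}), convert per-coordinate reward error into aggregated reward error via a Lipschitz property of the aggregation rule, and finish with pessimism plus the concentrability coefficient. Steps one and three of your plan match the paper essentially line for line.

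The gap is in your second step. The pointwise bound $|\text{Agg}_{\alpha}(\br)(\tau)-\text{Agg}_{\alpha}(\br')(\tau)|\leq \max_i |r_i(\tau)-r_i'(\tau)|$ is correct, but it is stated in terms of pointwise reward errors, whereas the quantity the MLE analysis controls is the \emph{paired difference} $\EE_{\mu_0,\mu_1}\big[|(r_i^\star(\tau_0)-r_i^\star(\tau_1))-(r_i(\tau_0)-r_i(\tau_1))|^2\big]$. Preference data identifies each $r_i$ only up to an additive shift, so the pointwise error $|r_i^\star(\tau)-r_i(\tau)|$ can be arbitrarily large even when every paired-difference error is zero; applying the triangle inequality over $\tau_0,\tau_1$ after your pointwise Lipschitz bound therefore lands you on a quantity the confidence set does not control. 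What is needed is a Lipschitz-type constant for the difference-of-differences map, bounding $|(\text{Agg}_{\alpha}(\br)(\tau_0)-\text{Agg}_{\alpha}(\br)(\tau_1))-(\text{Agg}_{\alpha}(\br')(\tau_0)-\text{Agg}_{\alpha}(\br')(\tau_1))|$ directly in terms of $|(r_i(\tau_0)-r_i(\tau_1))-(r_i'(\tau_0)-r_i'(\tau_1))|$. This is precisely how the paper proceeds: it defines $C_\alpha$ as a supremum over $x,y,z,w\in[-R_{\max},R_{\max}]$ of the ratio of the paired difference of $\exp(\alpha\,\cdot)$ to the paired difference of the identity (working with the policy-equivalent form $\text{Agg}_{\alpha}'$ of \Cref{eqn:agg-reward-new}), and then applies Cauchy--Schwarz to pass from the average $\frac{1}{N}\sum_i$ of per-coordinate paired deviations to the bound of \Cref{lem:boundexp}. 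To repair your argument, replace the softmax-gradient bound with a paired-difference Lipschitz estimate of this form (or otherwise exploit the shift-invariance of $\text{Agg}_{\alpha}$ to reduce the pointwise error to a paired-difference error).
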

\neurips{\vspace{-4pt}}
We defer the proof of \Cref{thm:agg} to  \Cref{appendix:reward-agg-pfthm}. 
\arxiv{
Lastly, we prove the tightness of our analysis by providing a theoretical lower bound of the sub-optimality gap of aggregation.}\neurips{Lastly, we prove the tightness and near-optimality of our analysis for \Cref{alg:aggregation} by providing a theoretical lower bound of the sub-optimality gap of aggregation, which is deferred to \Cref{appendix:defstate-lower-agg}.}

\arxiv{\begin{restatable}{theorem}{lbagg}
\label{thm:lower-bound-5} \emph{(Lower Bound for the Sub-Optimality Gap of Aggregation).}
    For any $k>6,N_p\geq Ck\Lambda^2,\Lambda\geq 2$, and $\alpha \in \RR$ there exists a representation function $\phi(\cdot)$ so that 
    \begin{align*}
\inf_{\hat\bpi}\sup_{Q\in {\rm CB}(\Lambda)} \left(\max_{\pi^* \in \Pi} J(\pi^*; \text{Agg}_{\alpha}(\br_{\omega, \btheta}))-J(\hat\pi;\text{Agg}_{\alpha}(\br_{\omega, \btheta}))\right)\geq C\Lambda\cdot\sqrt{\frac{k}{N_p}},
    \end{align*}
    where 
    \begin{align*}
        {\rm CB}(\Lambda)\coloneqq \cbr{Q\coloneqq  \left(\cbr{\mu_0,\mu_1},\{\tau_{i, 0}^{(j)},\tau_{i,1}^{(j)}\}_{i \in [N], j \in [N_p]} ,\omega,\btheta \right) \biggiven C_{\br}'(\cG_{\br},\pi^\star,\mu_{1}, i)\leq\Lambda \text{ for all }i \in [N]}
    \end{align*}   
    is the family of MDP with $N$ reward functions and $H=1$ instances. $C_{\br}'$ is defined in \Cref{eq:define-Cr-prime}.
\end{restatable}

\cpedit{By \Cref{thm:lower-bound-5}, for general representation function class, we establish that \Cref{alg:aggregation} is near-optimal for the sub-optimality of the induced personalization policy.}}

\arxiv{\begin{remark}[Comparison with Recent Independent Work \citep{zhong2024provable}]
\cpedit{\citep{zhong2024provable} also considered reward aggregation rules, adhering to the axiom of \textit{scale independence} but not \textit{translation independence} \citep{moulin2004fair}. Here, scale/translation independence of the aggregation rule means that the aggregation rule should yield the same choice even if the reward functions are scaled/translated, respectively.} Our theoretical results can also be extended to the reward aggregation rule in \citep{zhong2024provable}. \cpedit{However, since we also consider the relationship of the reward aggregation rule and preference aggregation rule by probabilistic opinion pooling, which will be presented in the next section, we only present \Cref{eqn:agg-reward} for the reward aggregation rule.} 
\arxiv{\citep{zhong2024provable} also considered Nash's bargaining \citep{nash1953two}, which maximizes $\prod_{i \in [N]}(r_i - \min r_i)$ rather than $\prod_{i \in [N]} r_i$. In this case, they can also consider translation independence. Ours can also be extended by substituting $r_i$ to $r_i/\min r_i$ in the aggregation function (\Cref{eqn:agg-reward}), but we decided not to include it in our paper since we did not have this result in our initial draft.}  
\end{remark}}

\neurips{\vspace{-7pt}}
\subsection{Preference Aggregation with Probabilistic Opinion Data} 
\label{ssec:prefagg}

\neurips{\vspace{-7pt}}
\subsubsection{RLHF with Probabilistic Opinion Feedback}
\neurips{\vspace{-7pt}}
Consider a set of questions $\{s^{(j)}\}_{j \in [N_p]}$, and for each question $s^{(j)}$, there are $K$ potential answers denoted by $\cA^{(j)} := \{a_k^{(j)}\}_{k \in [K]}$. Traditional RLHF methods involve human labelers $i \in [N]$ selecting a preferred answer from $\cA^{(j)}$. This approach limits the human feedback to a singular choice, {which, though being simple,} restricts the expressiveness of human preferences. 

\neurips{\vspace{-3pt}}
To address this, we introduce a new setting whereby human labelers provide feedback as a probability vector $q_i^{(j)} \in \Delta(\cA^{(j)})$, which is also called \textit{probabilistic opinion} in social choice theory \citep{stone1961opinion, lehrer2012rational}.  
Here, $\Delta(\cA^{(j)})$ represents the set of all possible distributions over the answers in $\cA^{(j)}$. This allows labelers to quantify their preferences across multiple answers rather than selecting only one{, and can be implemented in practice without increasing too much of overload for feedback collection.} 
\neurips{\vspace{-3pt}}

Our setup does not assume a predefined relationship between each reward function for every human labeler and their preferences. Instead, we aggregate the diverse probabilistic preferences of multiple labelers into a consensus probability distribution over the answers. We define an aggregation function (or a \emph{probabilistic opinion pooling function}), $\text{Agg-p}_{\alpha}(\bP)$, which takes a tuple of human preference distributions $\bP = (P_1, \dots, P_N) \in (\Delta(\cA))^N$ and maps it to a single probability distribution in $\Delta(\cA)$ where $\cA$ is the potential answer set. For each $a \in \cA$, 
\begin{align}
\text{Agg-p}_{\alpha}(\mathbf{P})(a) :=\left\{\begin{array}{lr} \frac{\left(\sum_{i \in [N]}(P_i(a))^\alpha\right)^{1/\alpha}}{\sum_{a' \in \cA} \left(\sum_{i \in [N]}(P_i(a'))^\alpha\right)^{1/\alpha}} & \alpha \neq 0  \\        \frac{\left(\prod_{i \in [N]}P_i(a)\right)^{1/N}}{\sum_{a' \in \cA} \left(\prod_{i \in [N]}P_i(a')\right)^{1/N}} & \alpha=0 \end{array}\right..\label{eqn:agg-pref}
\end{align}
\neurips{\vspace{-16pt}}

\arxiv{\begin{remark}
The case where \(\alpha = 0\) is referred to as the geometric pooling function \citep{mcconway1978combination}. This function is known for preserving unanimity and not being eventwise independent, while it does satisfy external Bayesianity \citep{madansky1964externally, dietrich2016probabilistic}. External Bayesianity mandates that updating the probabilities with new information should yield consistent results regardless of whether the update occurs before or after the aggregation process \citep{genest1984characterization}.  
\end{remark}} \neurips{\noindent The case where \(\alpha = 0\) is referred to as the geometric pooling function \citep{mcconway1978combination}\footnote{We refer \Cref{ssec:remark-prob-op} for the discussion of the case with $\alpha = 0$.}.} 
 Interestingly, \Cref{eqn:agg-pref}, which describes the aggregation of {probabilistic} preferences, has a connection to \Cref{eqn:agg-reward-new}, concerning reward aggregation, under the assumption of the Plackett-Luce model for the relationship between reward functions and preference models (\Cref{def:PL}). We {then} formalize the connection between the probabilistic opinion pooling in \Cref{eqn:agg-pref} and the reward aggregation rule in  \Cref{eqn:agg-reward}. We defer the proof of \Cref{prop:agg-min-prob} to \Cref{appendix:ssec:agg-equiv}. 

\neurips{\vspace{-3pt}}
\begin{definition}
\label{def:PL}
The Plackett-Luce (PL) model \citep{plackett1975analysis, luce2005individual} quantifies the likelihood that a trajectory $\tau_k$ is preferred over all other pairs in the set $\{\tau_k\}_{k \in [K]}$ by assigning it a probability defined as

\neurips{\vspace{-8pt}}
\[
{P}_{r}\left(\tau_k \succ \tau_{k'}  \forall k' \neq k \biggiven (\tau_k)_{k \in [K]}\right) = \frac{\exp(r(\tau_k))}{\sum_{k' \in [K]} \exp(r(\tau_{k'}))}
\]

\neurips{\vspace{-8pt}}
where $r$ is the reward function for a human {labeler}. In the case where $k=2$, this formulation simplifies to the Bradley-Terry-Luce (BTL) Model \citep{bradley1952rank}. 
\end{definition}
\neurips{\vspace{-7pt}}
 
\begin{restatable}{theorem}{aggequiv}
\label{prop:agg-min-prob}
\cpedit{\emph{(Relationship between Reward Aggregation and Preference Aggregation).}
Suppose human preferences are modeled by the PL model, and all human labelers share a common lower bound on their reward functions. Let $(R_i(a))_{a \in \mathcal{A}}$ represent the reward function associated with action $a \in \cA$ and $P_i \in \Delta(\mathcal{A})$ denote the corresponding probabilistic opinion for individual $i \in [N]$. Then, the preference aggregation $\text{Agg-p}_{\alpha}(\mathbf{P})$, is equivalent to the preference derived under the PL model with the aggregated rewards $(\text{Agg}_{\alpha}(\mathbf{R}(a)))_{a \in \cA}$ for any $\alpha \in [-\infty, \infty]$.}
\end{restatable}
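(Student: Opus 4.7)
The plan is to exploit the shift-invariance of the Plackett--Luce (PL) model in order to fix a canonical normalization of the $R_i$'s, and then verify the claimed equivalence by directly plugging the PL representation of each $P_i$ into both $\text{Agg}_{\alpha}$ and $\text{Agg-p}_{\alpha}$ and simplifying; the algebra will be a handful of one-line identities per value of $\alpha$, so the real content lies in setting up the normalization.

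First, I would fix the normalization. Under PL we have $P_i(a)=\exp(R_i(a))/Z_i$ with $Z_i=\sum_{a'\in\cA}\exp(R_i(a'))$, and PL is invariant under $R_i\mapsto R_i+c_i$. Using the hypothesis that all labelers share a common lower bound on $R_i$, I would shift each $R_i$ by a common constant so that the $Z_i$ are all equal to a single $Z$; the cleanest choice, which I will take, is $R_i(a)=\log P_i(a)$ for every $i$ and $a$. This is the key preparatory step; after it, $\exp(\alpha R_i(a))=P_i(a)^\alpha$.

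Second, for $\alpha\neq 0,\pm\infty$, I would substitute this identity into \eqref{eqn:agg-reward} to obtain
\begin{equation*}
\exp\!\bigl(\text{Agg}_\alpha(\mathbf{R}(a))\bigr)=\Bigl(\tfrac{1}{N}\textstyle\sum_{i\in[N]} P_i(a)^\alpha\Bigr)^{1/\alpha},
\end{equation*}
and then form the PL preference induced by the aggregated reward,
\begin{equation*}
\frac{\exp(\text{Agg}_\alpha(\mathbf{R}(a)))}{\sum_{a'\in\cA}\exp(\text{Agg}_\alpha(\mathbf{R}(a')))}
=\frac{(\tfrac{1}{N}\sum_i P_i(a)^\alpha)^{1/\alpha}}{\sum_{a'}(\tfrac{1}{N}\sum_i P_i(a')^\alpha)^{1/\alpha}}.
\end{equation*}
The factor $N^{-1/\alpha}$ cancels between numerator and denominator, leaving exactly the right-hand side of \eqref{eqn:agg-pref}. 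For $\alpha=0$, an analogous one-line computation gives $\exp(\text{Agg}_0(\mathbf{R}(a)))=\prod_i P_i(a)^{1/N}$, and normalizing reproduces $\text{Agg-p}_0(\mathbf{P})$. The boundary cases $\alpha\to\pm\infty$ follow from $\text{Agg}_{\pm\infty}(\mathbf{R})=\max_i R_i$ or $\min_i R_i$ and the parallel fact that $\text{Agg-p}_{\pm\infty}(\mathbf{P})$ concentrates on $\argmax_a\max_i P_i(a)$ or $\argmax_a\min_i P_i(a)$; a short continuity/limiting argument taking $\alpha\to\pm\infty$ in the formulas above closes these.

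The main obstacle, and where I would spend most of the writeup, is not the algebra but the justification that the common-lower-bound hypothesis really does let us pin down $R_i(a)=\log P_i(a)$ uniformly across users. If we instead used $R_i(a)=\log P_i(a)+c_i$ with \emph{different} $c_i$, then $\exp(\alpha R_i(a))$ would pick up an $i$-dependent weight $\exp(\alpha c_i)$, $\text{Agg}_\alpha$ would become a weighted generalized mean of the $P_i(a)^\alpha$, and the equivalence with $\text{Agg-p}_\alpha$ would break. So the proof must argue that the shared lower bound is exactly the degree of freedom needed to synchronize the additive constants; once that is made precise, the remaining computation collapses to the two displays above plus the $\alpha\in\{0,\pm\infty\}$ cases.
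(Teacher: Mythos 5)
Your proposal is correct and follows essentially the same route as the paper's proof: the paper likewise inverts the PL model to write $R_i(a)=\log P_i(a)-C_i$ with $C_i=\log P_i(a_{\text{fix}})-R_i(a_{\text{fix}})$, invokes the shared bound on the rewards to conclude $C_i=C$ for all $i$, and then substitutes into the aggregation formulas. Your writeup is somewhat more explicit about the cancellation of the common constant and the boundary cases $\alpha\in\{0,\pm\infty\}$, which the paper leaves implicit, but there is no substantive difference in approach.
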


\neurips{\vspace{-8pt}}
While we generally do not presuppose any specific relationship between probabilistic opinions and reward functions, {\Cref{prop:agg-min-prob} shows that under the classical choice model of Plackett-Luce, these two aggregation rules can coincide\neurips{.}\arxiv{ (while the  probabilistic aggregation framework may potentially handle other cases).}}
\neurips{\vspace{-9pt}}

\subsubsection{Algorithm}
\neurips{\vspace{-7pt}}

\neurips{We defer the algorithm for probabilistic opinion pooling in \Cref{appedix:human-feedback-prob-op}. We can use the aggregated probabilistic opinions to fine-tune the policy using DPO-based algorithm. The detailed description is deferred to \Cref{appedix:human-feedback-prob-op}.} 
\arxiv{We provide an algorithm that uses the feedback {in the form of} probabilistic opinions (\Cref{alg:POP-DPO}). The only difference from the DPO algorithm \citep{rafailov2024direct} is to change the deterministic answer $a_i$ to the $a_i$ sampled based on the probabilistic opinion pooling, which is in the second line in the for loop of \Cref{alg:POP-DPO}.

\begin{algorithm}[!h]
	\caption{Probabilistic Opinion Pooling  DPO (POP-DPO)\label{alg:POP-DPO}}
	\begin{algorithmic}
    \STATE \textbf{Input:} Dataset $\hat{\cD}=\cup_{i \in [N]} \hat{\cD}_i$ where $\hat{\cD}_i = \{q_i^{(j)}(s_i^{(j)}), s^{(j)}, i)\}_{j \in [N_{p}]}$ is the probabilistic opinion dataset for the $i$th individual, $q_i^{(j)} \in \Delta(\cA)$ with $|\cA| = 2$, $\beta$ is a parameter for DPO, $\alpha$ is a parameter for aggregation
    \FOR{every epoch}
    \STATE For every question $s^{(j)}$ where $j$ is in the batch, $q^{(j)}:= \text{Agg-p}_\alpha (\bq^{(j)})$. 
    \STATE Sample $a^{(j)}_0 \sim \text{Multinomial}(q^{(j)})$ and define $a^{(j)}_1$ as non-selected answer. 
    \STATE Run a few steps of optimization to update $\pi$ (for example, gradient ascent or Adam) to maximize 
    \begin{align*}
        \sum_{j \in \text{batch}} \log \sigma\left(\beta \log \frac{\pi(a_{0}^{(j)} \mid s^{(j)})}{\pi^{\text{old}}(a_{0}^{(j)} \mid s^{(j)})} - \beta \log \frac{\pi(a_{1}^{(j)} \mid s^{(j)})}{\pi^{\text{old}}(a_{1}^{(j)} \mid s^{(j)})}\right)
    \end{align*}
    \ENDFOR
    \STATE \textbf{Output:} $\pi$
    \end{algorithmic}
\end{algorithm}
}
\neurips{\vspace{-10pt}}

\section{Mechanism Design for Preference Aggregation}
\neurips{\vspace{-9pt}}

Suppose that human labeler $i$ ($i \in [N])$ provides preference data by probabilistic opinion $P_i\in \Delta(\cA)$. {We now consider the natural scenario where the labelers may be \emph{strategic} -- given they are human beings with (certain degree of) rationality. In particular, knowing the form of preference aggregation (and the fact that they may affect the process), human labelers may provide \emph{untruthful} feedback of their preference, in order to benefit more in terms of their \emph{actual} utility/preference. {In particular, the untruthful preference may \emph{bias} the aggregated preference (that LLM will be fine-tuned over) towards their own preference, and thus manipulates the LLM output.} 
\arxiv{We demonstrate the scenario more quantitatively in the following example.}} 
\neurips{\vspace{-9pt}}

\paragraph{An Example with Untruthful Feedback.} 
{Consider a set of $N$ labelers evaluating two answers, where each labeler expresses a probabilistic opinion on the answers ($a_1, a_2$). Specifically, suppose labeler $N$ believes that $a_1$ is slightly preferable to $a_2$, represented by the probability vector $P_N = (0.6, 0.4)^\intercal$. Conversely, all other labelers $i \in [N-1]$ have probabilistic opinion favoring the second answer, represented by $P_i = (0.2, 0.8)^\intercal$.}
\arxiv{

}
{We assume that the aggregation of these opinions employs the $\text{Agg-p}_{-\infty}$ rule, defined as $\text{Agg-p}_{-\infty}(\bP)(a_t) = \frac{\min_{i \in [N]} P_i(a_t)}{\min_{i \in [N]} P_i(a_1) + \min_{i \in [N]} P_i(a_2)}$ for $t= 1, 2$, where $\bP$ represents the matrix of probabilistic opinions across all labelers and answers. Under truthful reporting, the aggregated result would be calculated as $\text{Agg-p}_{-\infty}\left(\bP\right) = \left(1/3, 2/3\right)^\intercal$. However, labeler $N$ can strategically provide an untruthful probabilistic opinion to distort the aggregated result toward his original view: If labeler $N$ reports a distorted opinion of $P_N' = (13/15, 2/15)^\intercal$ instead of $(0.6, 0.4)^\intercal$, the new aggregated opinion becomes} $\text{Agg-p}_{-\infty}\left(\bP'\right) = \left(0.6, 0.4\right)^\intercal$, where $\bP' = (P_1, \dots, P_{N-1}, P_N')$, which aligns exactly with labeler $N$'s probabilistic opinion{, while further deviating from other labelers' actual preference}. 
{This example underscores the potential of strategic behavior in the aggregation of probabilistic opinions, and thus highlights the importance of  incentivizing  truthful preference reporting.}  

\neurips{\vspace{-4pt}}

\arxiv{\subsection{Setup}
\label{sssec:setup-untruthful}}
{To address the untruthful feedback issue, we resort to the ideas in mechanism design \citep{nisan1999algorithmic,borgers2015introduction, roughgarden2010algorithmic}. Specifically, we will develop mechanisms that can impose some  \emph{cost} on human labelers,  so that they do not have the incentive to report untruthful preferences.}
\arxiv{ 

}
\neurips{For a given question $s$, we assume that labeler $i$'s probabilistic opinion vector is $p_i$, and the aggregated vector being $p=\text{Agg-p}(\bP)$ where $\bP=(p_1,\cdots,p_N)$. We additionally impose cost $c_i>0$ to labeler $i$ based on the reports and aggregation results. We assume the following quasi-linear utility of labeler $i$\arxiv{:
\[
u_i(\bP):= d(p_i,\text{Agg-p}(\bP))-c_i(\bP)
\]}
\neurips{: $u_i(\bP):= d(p_i,\text{Agg-p}(\bP))-c_i(\bP)$}
where $d(\cdot,\cdot)$ measures the distance between two probability distribution.} 
\neurips{
Under this utility model, we design an incentive-compatible mechanism to elicit truthful reports.
\neurips{\vspace{-2pt}}

\begin{theorem}[Informal]
\label{thm:inf-mec}
For any distance function $d(\cdot,\cdot)$ in a given class, there exists an aggregation rule in \eqref{eqn:agg-pref} maximizes social welfare $\text{Welfare}(\bP):=\sum_{i=1}^N d(p_i,\text{Agg-p}(\bP))$. Moreover, inspired by the Vickery-Clarke-Groves mechanism \citep{vickrey1961counterspeculation, clarke1971multipart, groves1973incentives}, we can design proper cost function $c_i\geq 0$, which makes truthful reporting incentive-compatible.
\end{theorem}

\neurips{\vspace{-8pt}}
Here incentive-compatibility means for labeler $i$, whatever the other labelers' reports are, truthful reporting will always maximize her own utility function. Intuitively, our mechanism punishes labeler $i$ through cost $c_i$ for the externality she posed on other labelers, i.e., if she makes other labelers worse off based on the aggregation outcome. We defer detailed explanation of this section in \Cref{appendix:setupexplanation}.
} 
{
\neurips{\vspace{-15pt}}

\neurips{\paragraph{Imposing Cost for  Human Feedback Collection.}  Though not being enforced in most existing RLHF frameworks, we believe it is reasonable and possible to incorporate it in the feedback collection, especially in scenarios where a single reward model (and thus a single LLM) is mandated. For example, the future large models may be regulated by some administrative agency, e.g., the government. These agencies' objective is for social good, despite the heterogeneity in human preferences, and also possess the power to enforce cost to human labelers, e.g., via taxing. It may also be possible for big technology companies who train LLMs, e.g., OpenAI, to incentivize truthful feedback through personalized and strategic (negative) payment (which corresponds to the cost here) to human labelers.} 
}

\arxiv{
{In this setup, we will first prove the existence of a cost function \( c_i : \Delta(\mathcal{A})^N \to \mathbb{R} \) for all $i \in [N]$ that induces truthful reporting of probabilistic opinions from human labelers. Here, the input of $c_i$ is the probabilistic opinion of every human labeler.}
This is also called the dominant strategy incentive-compatible (DSIC) mechanism \citep{nisan1999algorithmic,borgers2015introduction,roughgarden2010algorithmic}. Here incentive-compatibility means for labeler $i$, whatever the other labelers' reports are, truthful reporting will always maximize her own utility function. Intuitively, our mechanism punishes labeler $i$ through cost $c_i$ for the externality she posed on other labelers, i.e., if she makes other labelers worse off based on the aggregation outcome. 

{Moreover, we prove that this cost function $c_i$ not only induce DSIC but also maximize social welfare. We denote each human labeler's underlying (true) probabilistic opinion as $p_i\left(s^{\left(j\right)}\right)$ for each question $s^{(j)}$.} {Accounting for such cost, w}e define the quasi-linear \emph{utility function} of individual $i$ for question $s^{(j)}$ as 
\begin{align*}
    u_i^{\left(j\right)}\left(p_i\left(s^{\left(j\right)}\right), \left(P_i\left(s^{\left(j\right)}\right)\right)_{i \in [N]}
\right) = -d\left(p_i\left(s^{\left(j\right)}\right),\text{Agg-p}\left(\left(P_i\left(s^{\left(j\right)}\right)\right)_{i \in [N]}\right)\right) -c_i\left(\left(P_i\left(s^{\left(j\right)}\right)\right)_{i \in [N]}\right).
\end{align*} 
Here, $d: \Delta(\cA) \times \Delta(\cA) \to \RR$ represents the distance between the underlying true probabilistic opinion and the aggregated preference.
Moreover, we define the \emph{welfare function} {of individual $i$} from {addressing question} $s^{(j)}$ as $\text{Wel}_i^{(j)}(O) = -d(p_i(s^{(j)}) , O)$ for {any} $O \in \Delta(\cA)$. In other words, we define utility function as subtracting cost from the welfare of individual $i$. The concepts of utility and welfare discussed in this section should not be confused with the reward used in previous sections. Here, welfare pertains to the intrinsic rewards (preferences) of individuals \textbf{and} the aggregation of heterogeneous human rewards (preferences).

{\paragraph{Imposing Cost for  Human Feedback Collection.}  Though not being enforced in most existing RLHF frameworks, we believe it is reasonable and possible to incorporate it in the feedback collection, especially in scenarios where a single reward model (and thus a single LLM) is mandated. For example, the future large models may be regulated by some administrative agency, e.g., the government. These agencies' objective is for social good, despite the heterogeneity in human preferences, and also possess the power to enforce cost to human labelers, e.g., via taxing. It may also be possible for big technology companies who train LLMs, e.g., OpenAI, to incentivize truthful feedback through personalized and strategic (negative) payment (which corresponds to the cost here) to human labelers.}

\begin{remark}[Examples of Distance Function $d$] 
\label{rmk:renyi}
    We can instantiate $d(p,q)$ as the KL-divergence. Also, we may instantiate  $d_{\alpha}(p,q) = \text{sgn}(\alpha) \frac{1}{1-\alpha}\sum_{j \in \cA}\left(1 - p_j^{\alpha}q_j^{1-\alpha}\right)$, which is a variant of \kzedit{the} $\alpha$-Renyi divergence for $\alpha \neq 0$. One can easily check that $d_{\alpha}(p, q) \geq 0$. In fact, one can also prove that $\lim_{\alpha \to 1} d_{\alpha}(p,q) = d(p,q)$ {with $d(p,q)$ being the KL-divergence} (\Cref{appendix:relationship}).
\end{remark}

\subsection{Mechanism and Guarantees}
\label{sssec:vcg}

We design a mechanism inspired by the Vickery-Clarke-Groves mechanism \citep{vickrey1961counterspeculation, clarke1971multipart, groves1973incentives}, as defined below. 

\begin{definition}[VCG Mechanism]
\label{def:vcg}
Assume that there are $n$ strategic agents and a finite set $X$ of outcome, and each individual $i$ has a private valuation $v_i$ for each outcome $x \in X$. \cpedit{The bidding $\bb = (b_1, \dots, b_N)^\intercal \in (\RR^{|X|})^N$ where $b_i \in \RR^{|X|}$ is bidding for all outcome of individual $i\in[N]$.} Define their utility function as $v_i(\bx(\bb)) - c_i(\bb)$, where $\bx: (\RR^{|X|})^N \to X$ is the allocation rule and $c_i: (\RR^{|X|})^N \to \RR$ is the cost function. \cpedit{The summation of welfare function of all agents}
is defined as $\text{Wel}(x) = \sum_{i \in [N]}v_i(x)$ for all $x \in X$. The goal is to design $\bx$ 
and {$(c_i)_{i\in[N]}$} functions to make a DSIC {and} welfare{-}maximizing mechanism. The following $\bx$ and $c_i$ for $i \in [N]$ is DSIC welfare maximizing mechanism:
\begin{align*}
    \bx(\bb) = \argmax_{x \in X} \sum_{i \in [N]} b_i(x), \qquad c_i(\bb) = \max_{x \in X}\sum_{j \neq i} b_j(x) - \sum_{j \neq i} b_j(\bx(\bb)) \text{ for all } i \in [N].
\end{align*}
\end{definition}
The private valuation $v_i$ is corresponding to our welfare function for $i$th individual. \cpedit{Unfortunately, the {classical} VCG mechanism presents certain limitations such as it cannot be solved in polynomial time in general \citep{nisan1999algorithmic, borgers2015introduction, roughgarden2010algorithmic}.}
{We here adopt certain forms of allocation rule (which corresponds to the aggregation rule in our RLHF setting) and cost functions as follows, which allow the outcome set to be a simplex (with infinitely many outcomes)}: 
\begin{align}
    \text{Agg-p}(\bP) = \argmin_{p \in \Delta(\cA) } \sum_{i \in [N]}d(\bP, p), \qquad c_i(\bP) = \sum_{j \neq i}d(P_i, \text{Agg-p}(\bP)) - \min_{p \in \Delta(\cA)}\sum_{j \neq i}d(P_i, p)
    \label{eqn:agg-p-cost}.
\end{align}

\begin{restatable}{theorem}{vcgmain}
\emph{(DSIC Welfare-Maximizing Mechanism).}
\label{thm:dsic}
    The aggregation {rule} and the cost function as in  \Cref{eqn:agg-p-cost} provide a DSIC welfare-maximizing mechanism.  
\end{restatable}

Due to the modeling, we have an advantage compared to the original VCG mechanism.  The minimization in the aggregation function can be achieved using a simple optimization method such as gradient descent, which makes our aggregation rule and cost function computation easy, which is in contrast with the original VCG mechanism.

\cpedit{Now, we connect our mechanism design with pre-defined preference aggregation function ($\text{Agg-p}_\alpha$ in \Cref{eqn:agg-pref}). \Cref{thm:aggrenyi} implies that \Cref{eqn:agg-pref} is maximizing social welfare and also we are available to construct the cost function to make human feedback truthful. 
}
\begin{restatable}{theorem}{aggprenyi}
\label{thm:aggrenyi}
    \cpedit{If we set $d$ as a variant of {the} $\alpha$-Renyi distance for $\alpha \neq 0$ (\Cref{rmk:renyi}) and define $d$ as KL-divergence for $\alpha = 0$, the DSIC welfare-maximizing aggregation rule is \Cref{eqn:agg-pref}.} \cpedit{Therefore, aggregation rule \Cref{eqn:agg-pref} is also welfare-maximizing with appropriate cost function.}
\end{restatable}

If we assume the relationship between reward and preference {follows} the PL model (\Cref{def:PL}), then \Cref{eqn:agg-reward} implies a welfare-maximizing aggregation rule, \cpedit{which connects reward aggregation and mechanism design.}  We defer all proofs for the results in \Cref{sssec:vcg} to \Cref{appendix:vcg}. 
}

\begin{remark}
An analog of \neurips{\Cref{thm:inf-mec}}\arxiv{\Cref{thm:dsic} and \Cref{thm:aggrenyi}} can also be applied to reward aggregation. Additionally, under the PL model, the mechanism design for reward aggregation and preference aggregation coincide.
\end{remark}

\neurips{\vspace{-12pt}}
\section{Experiments}
\neurips{\vspace{-12pt}}

We now conduct an empirical evaluation of our methods' performance on a text summarization task, using the Reddit TL;DR summarization dataset and the Reddit TL;DR human feedback dataset \neurips{(\texttt{comparison} and \texttt{axes evals})} \citep{stiennon2020learning}. 
\arxiv{For the Reddit TL;DR summarization dataset, \citep{stiennon2020learning} filtered the TL;DR summarization dataset \citep{volske2017tl} to ensure quality. The Reddit TL;DR human feedback dataset is constructed with two components: \texttt{comparison} and \texttt{axes evals}. The \texttt{comparison} component contains labeled comparisons between pairs of summaries with workers identified by unique IDs, while the \texttt{axes evals} component contains ratings of summaries along three axes: accuracy, coverage, and coherence.} We used GPT-J 6B \citep{wang2021gpt} 
and LLaMA3 8B models \citep{metallama3} 
in our experiments. 

\arxiv{
\vspace{5pt}
\noindent
\begin{minipage}{0.48\textwidth} %
    \centering
    \includegraphics[width=0.9\textwidth]{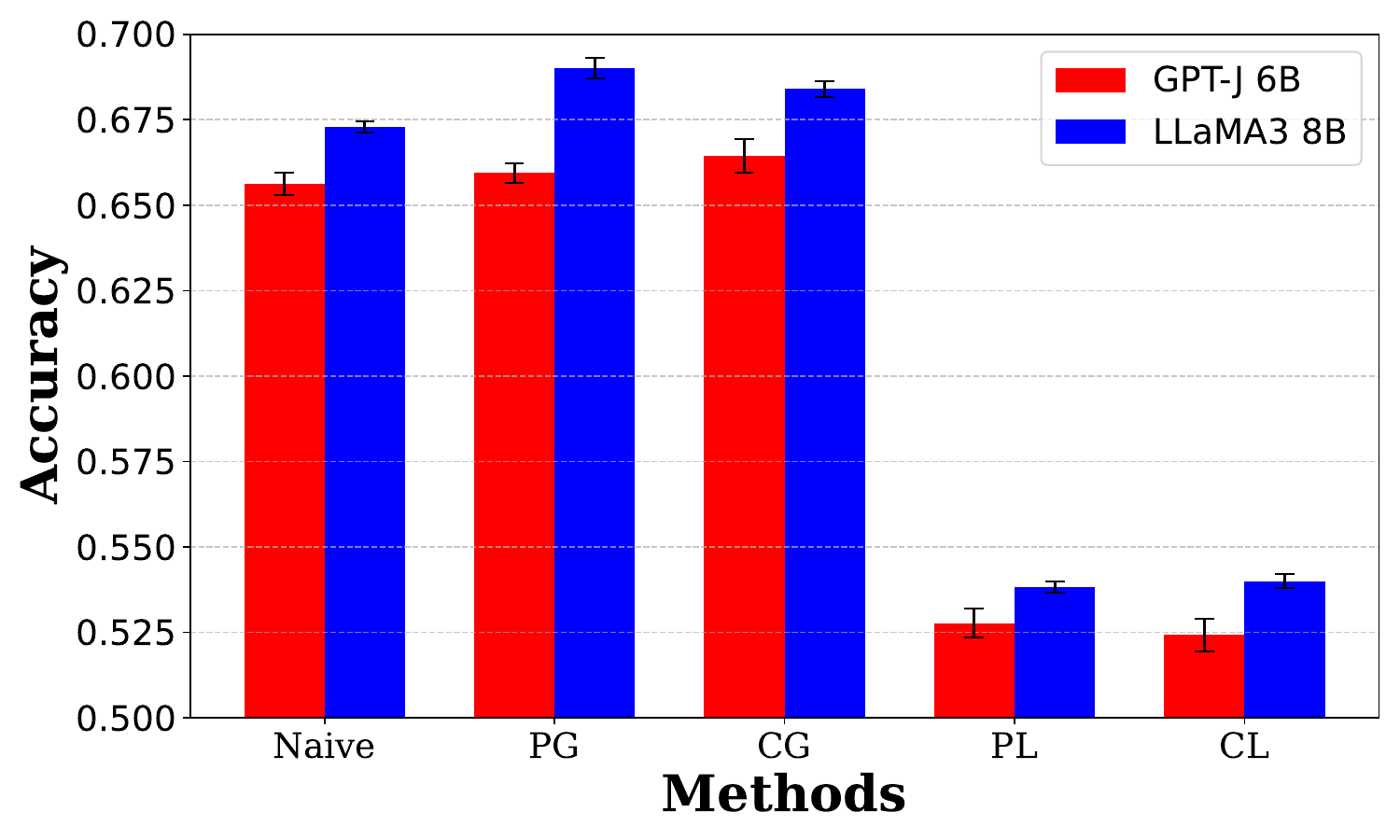}
    \captionof{figure}{Accuracy of different methods with 3 times experiments. P: Personalized, C: Clustered, G(L): General (Linear) representation. Naive RLHF: original training method.}
    \vspace{10pt}
    \label{fig:accuracy}
\end{minipage}
\hspace{7pt}
\begin{minipage}{0.49\textwidth}
    In \Cref{ssec:reward-model-performance}, we fine-tuned the personalized reward model with \Cref{alg:personal} and \Cref{alg:cluster}, without pessimism. We ranked workers based on the number of annotated comparisons in the training split of the dataset and included the top 5 workers for training. To balance the number of samples for each worker, we took the worker with the fewest samples among the top 5 as the baseline. We then randomly sampled the same number of comparisons from the other workers so that each worker had 5,373 comparison samples, resulting in a total of 26,865 samples for training. Similarly, for the validation set, we applied the same method. We randomly sampled the same number of comparisons as the worker with the fewest samples from the top 5 workers used in training. Each worker had 1,238 samples for validation, resulting in a total of 6,190 samples for validation. 
    \vspace{3pt}
\end{minipage}} 
\arxiv{In \Cref{ssec:examples}, we fine-tuned the personalized reward model using \Cref{alg:aggregation}, without incorporating pessimism. We considered three types of reward functions: accuracy-reward, coverage-reward, and coherence-reward. Since this dataset is only publicly available for the validation set (with 8,585 samples) and the test set (with 6,313 samples), we used the validation set for fine-tuning the training set of our model and validated it with the samples in the test set. } We defer \arxiv{other} details in \Cref{appendix:exp-detail}.
 
\neurips{\vspace{-7pt}}

\subsection{Experiment 1: Reward Model Performance with \texttt{comparison} Dataset}
\label{ssec:reward-model-performance}
\neurips{\begin{wrapfigure}{r}{0.4\textwidth}
    \centering         \includegraphics[width=0.4\textwidth]{fig1.pdf}
            \neurips{\vspace{-10pt}}
        \caption{Accuracy of different methods with 3 times experiments. P: Personalized, C: Clustered, G(L): General (Linear) representation. Naive RLHF: original training method.}
        \label{fig:accuracy}
        \neurips{\vspace{-10pt}}
\end{wrapfigure}}
In Experiment 1, we compared our \Cref{alg:personal} and \Cref{alg:cluster} with naive RLHF methods. We constructed a reward model using a supervised fine-tuned language model and added a linear layer to represent individual reward functions{, as in our model in \Cref{sec:personalization}}.
\arxiv{For the reward model structure of the general representation function, we froze the first 70\% of the language model's layers, using the outputs of these layers as the representation. For the linear representation function, we froze the entire language model and only trained the additional final layer.}\neurips{We provide a detailed discussion of the reward model structure for general representation and linear representation in \Cref{appendix:exp-detail}.} We used two clusters for the personalized reward model with user clustering. We evaluated the reward models based on their accuracy in correctly assigning higher rewards to chosen summaries over rejected summaries in the validation set. Our results, shown in \Cref{fig:accuracy}, indicate that clustering methods can efficiently learn the personalized reward model. Furthermore, personalization with general representation learning is necessary, as indicated by the performance gap compared to personalization with linear representation learning. Notably, for LLaMA3 8B, the performance differences between the Naive method and both PG and CG are statistically significant by t-test ($p<0.006$). 

\neurips{\vspace{-10pt}}

\subsection{Experiment 2: Output Examples of Reward Aggregation with \texttt{axes} Dataset}
\neurips{\vspace{-10pt}}

\label{ssec:examples}
In Experiment 2, we aggregated three axes rewards using \Cref{eqn:agg-reward} with $\alpha = -\infty, -1, 0, 1, \infty$. We included representative outputs from these aggregated results in \Cref{ssec:detailed-ex}.

\arxiv{\section*{Acknowledgement}
The authors would like to thank 
Jisu Jang for providing helpful feedback for \Cref{fig:intro-fig}. }
\bibliographystyle{ims}
\bibliography{cparkbib}
\clearpage
\appendix

~\\
\centerline{{\fontsize{18}{18}\selectfont \textbf{Supplementary Materials for}}}

\vspace{6pt}
\centerline{\fontsize{13.5}{13.5}\selectfont \textbf{
	  ``RLHF from Heterogeneous Feedback via}}

   \centerline{\fontsize{13.5}{13.5}\selectfont \textbf{Personalization and Preference Aggregation''}}

\tableofcontents
\clearpage
\section{Societal Impact}\label{aox:societal}

Our work is mainly theoretical, and aimed at better understanding RLHF  with heterogeneous feedback, with principles, algorithms, and analyses. As such, we do not anticipate any direct positive or negative societal impact from this research.

\section{Limitations}\label{sec:conclude}
Our works provided overall theoretical analysis and experimental validation. However, due to the computational issue, we experimented on the 6B and 8B models, and also we did not calculate the penalty for the pessimism in our Algorithms. 
\section{Table of Notation}

\begin{table}[h]
\begin{tabular}{|c|c|}
\hline
Notation                                                                        & Definition                                                                                                                                                                                                                                                 \\ \hline
N                                                                               & Number of Individuals                                                                                                                                                                                                                                      \\ \hline
$\mathcal{S}$                                                                   & State Space                                                                                                                                                                                                                                                \\ \hline
$\mathcal{A}$                                                                   & Action Set                                                                                                                                                                                                                                                 \\ \hline
$H$                                                                             & Horizon Length                                                                                                                                                                                                                                             \\ \hline
$P_h$                                                                           & Transition Probability at Horizon $h$                                                                                                                                                                                                                      \\ \hline
$\br$                                                                           & Reward                                                                                                                                                                                                                                                     \\ \hline
$\mathcal{T}$                                                                   & Trajectory Set                                                                                                                                                                                                                                             \\ \hline
$\tau$                                                                          & Trajectory                                                                                                                                                                                                                                                 \\ \hline
$J(\pi;r_i)$                                                                    & $\EE_{\tau,\pi}\sbr{r_i(\tau)}$                                                                                                                                                                                                                            \\ \hline
$d_{\pi}(\tau)$                                                                 & Occupancy Measure: $\mathbb{P}_{\pi}(\tau)$                                                                                                                                                                                                                \\ \hline
$\Phi\colon \RR\to [0, 1]$                                                      & Strongly Convex Function Mapping Reward to Preference                                                                                                                                                                                                      \\ \hline
$\sigma(x)$                                                                     & Sigmoid Function: $\frac{e^x}{1+e^x}$                                                                                                                                                                                                                      \\ \hline
$P_{r_i}(o = 0\mid \tau_0, \tau_1)$                                          & $\Phi(r_i (\tau_0) - r_i (\tau_1))$                                                                                                                                                                                                                        \\ \hline
$\psi_\omega: \RR^d \to \RR^k$                                                  & Representation Function                                                                                                                                                                                                                                    \\ \hline
$\Psi$                                                                          & $\{\psi_\omega \mid \omega \in \Omega \}$                                                                                                                                                                                                                  \\ \hline
$\mathcal{G}_{\br}$                                                             & \begin{tabular}[c]{@{}c@{}}Set of Reward Functions: \\ $\{ (\langle \psi_\omega(\phi(\cdot)), \theta_i \rangle)_{i \in [N]} \mid \psi_{\omega} \in \Psi, \theta_i \in \RR^k \text{ and } \norm{\theta_i}_2 \leq B\text{ for all }i \in [N]\}$\end{tabular} \\ \hline
$\mathcal{N}_{\mathcal{G}_{\br}}(\epsilon)$                                     & Bracket Number of $\mathcal{G}_{\br}$ Associated with $\epsilon$                                                                                                                                                                                            \\ \hline
$r_{\omega, \theta_j}(\cdot)$                                                & $\langle \psi_\omega(\phi(\cdot)), \theta_j \rangle$                                                                                                                                                                                                       \\ \hline
$\br_{\omega, \btheta}(\cdot)$                                                  & $(r_{ \omega, \theta_1} (\cdot), \cdots, r_{\omega, \theta_N}(\cdot)) \in \RR^N$                                                                                                                                                                      \\ \hline
$r_i^\star(\cdot) $                                                           & Ground-truth Reward: $\langle \psi^\star (\phi(\cdot)), \theta_i^\star \rangle$                                                                                                                                                                            \\ \hline
$\psi^\star(=\psi_{\omega^\star})$                                              & Ground-truth Representation Function                                                                                                                                                                                                                       \\ \hline
$R_{\max}$                                                      & $-R_{\max} \leq r^\star_i(\tau) \leq R_{\max}$                                                                                                                                                                                                             \\ \hline
$\hat{\cD} $                                                                    & $\cup_{i \in [N]} \hat{\cD}_i$                                                                                                                                                                                                                             \\ \hline
$\hat{\cD}_i $                                                                  & $ \{(o_i^{(j)}, \tau_{i, 0}^{(j)}, \tau_{i,1}^{(j)})_{j \in [N_{p}]}\}$                                                                                                                                                                                    \\ \hline
$N_p$                                                                           & $N_p=|\hat{\cD}_1|=|\hat{\cD}_2|=...=|\hat{\cD}_N| $                                                                                                                                                                                                                                           \\ \hline
$C_r\left(\mathcal{G}_{\br}, \pi_{\text {tar }}, \mu_{\text {ref }}, i \right)$ & Defined in \Cref{def:concentrability-coef}                                                                                                                                                                                              \\ \hline
$C_{\br}'(\cG_{\br},\pi^\star,\mu_{1}, i)$ & Defined in \Cref{eq:define-Cr-prime}                                                                                                                                                                                              \\ \hline

$\text{Agg}_{\alpha}(\br)$                                                      & Defined in \Cref{eqn:agg-reward-new}                                                                                                                                                                                                      \\ \hline
$\text{Agg-p}_{\alpha}(\bp)(a)$                                                 & Defined in \Cref{eqn:agg-pref}                                                                                                                                                                                                            \\ \hline
\end{tabular}
\end{table}

\section{Deferred Definition}
\label{sec:defdef}

\paragraph{Bracketing Number. }
We modify and adopt the definition of the bracketing number of preferences introduced by \citep{zhan2023provable}, with some adjustments. Consider $\mathcal{G}_{\br}$ as the class of functions representing sets of reward vectors, where each reward vector is denoted by $(r_i)_{i \in [N]}$. 
Assume $g_1$ and $g_2$ maps $(\tau_0, \tau_1) \in \cT \times \cT$ to $2N$-dimensional vectors. 
A pair $(g_1, g_2)$ constitutes an $\epsilon$-bracket if for every pair of trajectories $(\tau_0, \tau_1)$ and for each $i \in [N]$, it holds that $g_1(\cdot \mid \tau_0, \tau_1) \leq g_2\left(\cdot \mid \tau_0, \tau_1\right)$ and $\norm{g_1\left(\cdot \mid \tau_0, \tau_1\right)-g_2\left(\cdot \mid \tau_0, \tau_1\right)}_1 \leq \epsilon$. The $\epsilon$-bracketing number of $\mathcal{G}_{\br}$, denoted by $\mathcal{N}_{\mathcal{G}_{\br}}(\epsilon)$, is defined as the minimum number of $\epsilon$-brackets \cpedit{$\left(g_{b, 1}, g_{b, 2}\right)_{b \in [\cN_{\cG_{\br}(\epsilon)}]}$} required such that for any reward vector $\br \in \mathcal{G}_{\br}$, there exists at least one bracket $b \in [\cN_{\cG_{\br}(\epsilon)}]$ such that for all pairs of trajectories $(\tau_0, \tau_1)$, $g_{b, 1}(\cdot \mid \tau_0, \tau_1) \leq P_{\br}(\cdot \mid \tau_0, \tau_1) \leq g_{b, 2}(\cdot \mid \tau_0, \tau_1)$ holds. 

\paragraph{Concentrability Coefficient for a Reward Scalar Class}
\label{ssec:concnetrability}
This definition is exactly the same with the concentrability coefficient of preference as outlined by \citep{zhan2023provable}.
\begin{definition}[\cite{zhan2023provable}]
The concentrability coefficient, with a reward vector class $\mathcal{G}_{r}$, a target policy $\pi_{\text{tar}}$ (which policy to compete with (potentially optimal policy $\pi^\star$)), and a reference policy $\mu_{\text{ref}}$, is defined as follows:
$$
C_r\left(\mathcal{G}_{r}, \pi_{\text {tar }}, \mu_{\text {ref }}\right):=\max \left\{0, \sup _{r \in \mathcal{G}_r} \frac{\mathbb{E}_{\tau_0 \sim \pi_{\text {tar }}, \tau_1 \sim \mu_{\text {ref }}}\left[r^{\star}\left(\tau_0\right)-r^{\star}\left(\tau_1\right)-r\left(\tau_0\right)+r\left(\tau_1\right)\right]}{\sqrt{\mathbb{E}_{\tau_0 \sim \mu_0, \tau_1 \sim \mu_1}\left[\left|r^{\star}\left(\tau_0\right)-r^{\star}\left(\tau_1\right)-r\left(\tau_0\right)+r\left(\tau_1\right)\right|^2\right]}}\right\}.
$$ 
\end{definition}

\paragraph{Direct Preference Optimization (DPO) \citep{rafailov2024direct}.} \label{ssec:dpo}
{Consider the case with Markovian reward and policy, i.e., the reward $r:\cS \times \cA \to \RR$ is a function of state $s$ and action $a$, and the policy $\pi:\cS \to \Delta(\cA)$ is also depending only on the state $s$. Also, assume that we compare actions for each state rather than the whole trajectories.} In the fine-tuning phase using RL, when {KL-regularization with the reference policy $\pi^{\text{old}}$ is employed}, the optimal policy  is given by:
\[
\pi(a \mid s) = \frac{1}{Z(s)} \pi^{\text{old}}(a \mid s) \exp\left(\frac{r(s, a)}{\beta}\right),
\]
where $Z(s)$ serves as a normalization factor that is independent of the answer $a$, and $\beta$ represents the coefficient for KL regularization. Integrating the BTL model into this framework yields:
\[
\pi^{\text{RLHF}} = \underset{\pi}{\arg\min} -\mathbb{E}_{(s, a_0) \succ (s, a_1)}\left[\log \sigma\left(\beta \log \frac{\pi(a_0 \mid s)}{\pi^{\text{old}}(a_0 \mid s)} - \beta \log \frac{\pi(a_1 \mid s)}{\pi^{\text{old}}(a_1 \mid s)}\right)\right],
\]
where $\sigma$ denotes the Sigmoid function \citep{rafailov2024direct}. This formulation bypasses the step of explicitly estimating the reward function. 

\neurips{
\section{Related Work}

}
\newpage
\section{Deferred Pseudocode of Algorithms}
\label{alg:deferred-alg-pseudo}
\neurips{\begin{algorithm}[!h]
	\caption{Personalized RLHF via Representation Learning \label{alg:personal}}
	\begin{algorithmic}
    \STATE \textbf{Input:} Dataset $\hat{\cD}=\cup_{i \in [N]} \hat{\cD}_i$ where $\hat{\cD}_i = \{(o_i^{(j)}, \tau_{i, 0}^{(j)}, \tau_{i,1}^{(j)})_{j \in [N_{p}]}\}$ is the preference dataset for the $i$th individual.    
    \STATE Estimate $\omega^\star$ and $\btheta^\star$ by 
    \[
    (\hat{\omega},\hat{\btheta})\leftarrow \argmax_{\omega \in \Omega, \norm{\theta_i}_2 \leq B \text{ for all } i \in [N]}\sum_{i \in [N]}\sum_{j \in [N_{p}]}  \log P_{\omega, \theta_i} (o_{i}^{(j)} \mid \tau_{i,0}^{(j)}, \tau_{i,1}^{(j)})
    \]
    \STATE Construct a confidence set of the reward function by 
    {\small
        \begin{equation}
        \begin{aligned}
           \cR'(\hat{\cD}) \leftarrow \cap_{i \in [N]}
        \biggl\{ & \br_{\omega, \btheta} \Biggiven   \frac{1}{N_p}\sum_{j \in [N_p]}\big|(r_{ \hat{\omega}, \hat{\theta}_i}(\tau_{i,0}^{(j)}) - r_{ \hat{\omega}, \hat{\theta}_i}(\tau_{i,1}^{(j)})) - (  r_{ \omega, \theta_i} (\tau_{i,0}^{(j)}) - r_{ \omega, \theta_i}(\tau_{i,1}^{(j)})) \big|^2  \leq \zeta' \biggr\}
        \end{aligned}
        \label{eqn:confidenceset-alg1-1}
    \end{equation}}
    \STATE Compute policy with respect to $\cR(\hat{\cD})$ (or $\cR'(\hat{\cD})$) for all $i \in [N]$ by
    \begin{align}
        \hat{\pi}'_i\leftarrow \argmax_{\pi \in \Pi} \min_{\br \in \cR'(\hat{\cD})} \left(J(\pi; r_i) - \EE_{\tau \sim \mu_{i, \text{ref}}}[r_i(\tau)]\right) \label{eqn:robust-alg1}
    \end{align}
    
    \STATE \textbf{Output:} $ (\hat{\omega},\hat{\btheta}, (\hat{\pi}'_i)_{i \in [N]})$.
    \end{algorithmic}
\end{algorithm}
}
\neurips{
\begin{algorithm}[!h]
	\caption{Personalized RLHF via Clustering \label{alg:cluster}}
	\begin{algorithmic}
    \STATE \textbf{Input:} Dataset $\hat{\cD}=\cup_{i \in [N]} \hat{\cD}_i$ where $\hat{\cD}_i = \{(o_i^{(j)}, \tau_{i, 0}^{(j)}, \tau_{i,1}^{(j)})_{j \in [N_{p}]}\}$ is the preference dataset for the $i$th individual, and $\hat{\omega}$ form \Cref{alg:personal}.
    \STATE Learn $\theta_{(i)}$ and the clustering map $f:[N] \to [K]$ by
    \begin{align}
        &(\hat{\theta}_{(k)})_{k \in [K]} \leftarrow \argmax_{\norm{\theta_{(k)}}_2 \leq B \text{ for all } {k \in [K]}} \sum_{i \in [N]} \max_{k \in [K]} \sum_{j \in [N_p]} \log P_{\hat{\omega}, \theta_{(k)}}(o_i^{(j)} \mid \tau_{i,0}^{(j)}, \tau_{i,1}^{(j)}) \label{eqn:learntheta(i)}
        \\
        &\hat{f}(i) \leftarrow \argmax_{k \in [K]} \sum_{j \in [N_p]} \log P_{\hat{\omega}, \hat{\theta}_{(k)}}(o_i^{(j)} \mid \tau_{i,0}^{(j)}, \tau_{i,1}^{(j)}) \text{ for all } i \in [N] \nonumber
    \end{align}
    
    \STATE For each $k \in [K]$,  
    \[\hat{\pi}_{(k)}\leftarrow \argmax_{\pi \in \Pi} \left(J(\pi; r_{\hat{\omega}, \hat{\theta}_{(k)}}) - \EE_{\tau \sim \mu_1}[r_{\hat{\omega}, \hat{\theta}_{(k)}}(\tau)]\right).
    \] 
    \STATE \textbf{Output:} $((\hat{\pi}_{(k)})_{k \in [K]}, (\hat{\theta}_{(k)})_{k \in [K]}, \hat{\omega}, \hat{f})$.
    \end{algorithmic}
\end{algorithm}}

\begin{algorithm}[!h]
	\caption{ClusterDPO: Learning $K$ clustered policies by DPO \label{alg:clusterDPO}}
	\begin{algorithmic}
    \STATE \textbf{Input:} Dataset $\hat{\cD}=\cup_{i \in [N]} \hat{\cD}_i$ where $\hat{\cD}_i = \{a_{i, 0}^{(j)}\succ a_{i,1}^{(j)}, s_i^{(j)})_{j \in [N_{p}]}\}$ is the preference dataset for the $i$th individual, $\beta$ is a parameter for DPO    
    \STATE Randomly select $K$ human users  $p_1, \dots, p_K$ and initialize $\pi_{(k)}^0$ for all $k \in [K]$ as  
    \begin{align*}
         \pi_{(k)}^0 \leftarrow \underset{\pi \in \Pi}{\arg\max}  \sum_{j \in [N_p]} \log \sigma\left(\beta \log \frac{\pi(a_{p_k, 0}^{(j)} \mid s_{p_k}^{(j)})}{\pi^{\text{old}}(a_{p_k, 0}^{(j)} \mid s_{p_k}^{(j)})} - \beta \log \frac{\pi(a_{p_k, 1}^{(j)} \mid s_{p_k}^{(j)})}{\pi^{\text{old}}(a_{p_k, 1}^{(j)} \mid s_{p_k}^{(j)})}\right)
    \end{align*}
    \STATE Randomly initialize $f^0(i)$ for $i \notin \{p_1, \dots, p_K\}$
    \FOR{$t \in [T]$}
    \STATE Randomly select $K$ human users $p_1, \dots, p_K$.
    \FOR{$i \in [N]$}
    \IF{$i \notin \{p_1, \dots, p_K\}$}
    \STATE Define $f^t(i) \leftarrow  f^{t-1}(i) $
    \ENDIF
    \ENDFOR
    \STATE Assign $f^t(p_k)$ for all $k \in [K]$ as 
    \begin{align}
         f^t(p_k) \leftarrow \underset{s \in [K]}{\arg\max}  \sum_{j \in [N_p]} \log \sigma\left(\beta \log \frac{\pi_{(s)}^{t-1}(a_{p_k, 0}^{(j)} \mid s_{p_k}^{(j)})}{\pi^{\text{old}}(a_{p_k, 0}^{(j)} \mid s_{p_k}^{(j)})} - \beta \log \frac{\pi_{(s)}^{t-1}(a_{p_k, 1}^{(j)} \mid s_{p_k}^{(j)})}{\pi^{\text{old}}(a_{p_k, 1}^{(j)} \mid s_{p_k}^{(j)})}\right) \label{eqn:alg5dpoloss}
    \end{align}
    \STATE Run a few steps of optimization to update $\pi_{(s)}^{t-1}$ for all $s \in [K]$ (for example, gradient ascent or Adam) to maximize 
    \begin{align*}
        \sum_{f(p_k) = s}\sum_{j \in [N_p]} \log \sigma\left(\beta \log \frac{\pi(a_{p_k, 0}^{(j)} \mid s_{p_k}^{(j)})}{\pi^{\text{old}}(a_{p_k, 0}^{(j)} \mid s_{p_k}^{(j)})} - \beta \log \frac{\pi(a_{p_k, 1}^{(j)} \mid s_{p_k}^{(j)})}{\pi^{\text{old}}(a_{p_k, 1}^{(j)} \mid s_{p_k}^{(j)})}\right)
    \end{align*}
    and obtain     $\pi_{(s)}^{t}$ for all $s \in [K]$.
    \ENDFOR
    \STATE Assign $f^{T+1}(i)$ for all $i\in [N]$ as 
    \begin{align*}
         f^{T+1}(i) \leftarrow \underset{s \in [K]}{\arg\max}  \sum_{j \in [N_p]} \log \sigma\left(\beta \log \frac{\pi_{(s)}^{T}(a_{i, 0}^{(j)} \mid s_{i}^{(j)})}{\pi^{\text{old}}(a_{i, 0}^{(j)} \mid s_{i}^{(j)})} - \beta \log \frac{\pi_{(s)}^T(a_{i, 1}^{(j)} \mid s_{i}^{(j)})}{\pi^{\text{old}}(a_{i, 1}^{(j)} \mid s_{i}^{(j)})}\right)
    \end{align*}
    \STATE \textbf{Output:} $({\pi}_{(k)}^T)_{k \in [K]}$ and $f^{T+1}$
    \end{algorithmic}
\end{algorithm}

\neurips{
\begin{algorithm}[!h]
	\caption{RLHF with \cpedit{Reward Aggregation}}\label{alg:aggregation}
	\begin{algorithmic}
    \STATE \textbf{Input:} Dataset $\hat{\cD}=\cup_{i \in [N]} \hat{\cD}_i$ where $\hat{\cD}_i = \{(o_i^{(j)}, \tau_{i, 0}^{(j)}, \tau_{i,1}^{(j)})_{j \in [N_{p}]}\}$ is the preference dataset for the $i$th human, $\lambda > 0$, and $\hat{\omega}$ from \Cref{alg:personal}. We also use \Cref{eqn:confidenceset-alg1-1} for constructing a confidence set of reward function $\cR'(\hat{\cD})$. 
    \STATE Compute policy with respect to $\cR'(\hat{\cD})$ for all $i \in [N]$ by
    \begin{align}
        \hat{\pi} \leftarrow \argmax_{\pi \in \Pi} \min_{\br \in \cR'(\hat{\cD})} \left(J(\pi; \text{Agg}_{\alpha}(r_1, \dots, r_N)) - \EE_{\tau \sim \mu_{\text{ref}}}[\text{Agg}_{\alpha}(r_1, \dots, r_N)(\tau)]\right). \label{eqn:robust-alg-agg-1}
    \end{align}
    \STATE \textbf{Output:} $ (\hat{\omega},\hat{\btheta}, \hat{\pi})$.
    \end{algorithmic}
\end{algorithm}
}
\newpage

\section{Deferred Proofs in \Cref{sec:personalization}}
\label{appendix:sec3}

\neurips{\subsection{Deferred Explanation of \Cref{alg:new-person} for a New Human User}
\label{appendix:alg-newhuman}
}

\neurips{ We defer this proof to \Cref{ssec:proof-of-diverse}. }

{\subsection{Expected Value Function Gap without Diversity Assumption}
Firstly, we provide an algorithm for each reward function learning without Assumptions~\ref{assum:task_diverse}, \ref{assum:psi-unique}, and \ref{assum:point_concen}.
\begin{algorithm}[!h]
	\caption{Personalized RLHF via Representation Learning - without Diversity Assumption \label{alg:personal-nodiverse}}
	\begin{algorithmic}
    \STATE \textbf{Input:} Dataset $\hat{\cD}=\cup_{i \in [N]} \hat{\cD}_i$ where $\hat{\cD}_i = \{(o_i^{(j)}, \tau_{i, 0}^{(j)}, \tau_{i,1}^{(j)})_{j \in [N_{p}]}\}$ is the preference dataset for the $i$th individual.    
    \STATE Estimate $\omega^\star$ and $\btheta^\star$ by 
    \[
    (\hat{\omega},\hat{\btheta})\leftarrow \argmax_{\omega \in \Omega, \norm{\theta_i}_2 \leq B \text{ for all } i \in [N]}\sum_{i \in [N]}\sum_{j \in [N_{p}]}  \log P_{\omega, \theta_i} (o_{i}^{(j)} \mid \tau_{i,0}^{(j)}, \tau_{i,1}^{(j)})
    \]
    \STATE Construct a confidence set of the reward function by { \small
    \begin{equation}
        \begin{aligned}
           \cR(\hat{\cD}) \leftarrow
        \biggl\{ & \br_{\omega, \btheta} \bigggiven  \sum_{i \in [N]}\sum_{j \in [N_{p}]}  \log P_{\omega, \theta_i} (o_{i}^{(j)} \mid \tau_{i,0}^{(j)}, \tau_{i,1}^{(j)})\geq  \sum_{i \in [N]}\sum_{j \in [N_{p}]}  \log P_{\hat{\omega}, \hat{\theta}_i} (o_{i}^{(j)} \mid \tau_{i,0}^{(j)}, \tau_{i,1}^{(j)})  - \zeta \biggr\} 
        \end{aligned}
        \label{eqn:confidenceset-alg1}
    \end{equation}}
    \STATE Compute policy with respect to $\cR(\hat{\cD})$ (or $\cR'(\hat{\cD})$) for all $i \in [N]$ by
    \begin{align}
        \hat{\pi}_i\leftarrow \argmax_{\pi \in \Pi} \min_{\br \in \cR(\hat{\cD})} \left(J(\pi; r_i) - \EE_{\tau \sim \mu_{i, \text{ref}}}[r_i(\tau)]\right) \label{eqn:robust-alg1}
    \end{align}
    \STATE \textbf{Output:} $ (\hat{\omega},\hat{\btheta}, (\hat{\pi}_i)_{i \in [N]})$.
    \end{algorithmic}
\end{algorithm}
{\begin{itemize}
    \item Confidence set (\Cref{eqn:confidenceset-alg1})
    for the MLE estimation as \citep{liu2022partially}, which is also used in \citep{liu2023optimistic, zhan2023provable, wang2024rlhf, zhan2022pac}, with $\zeta = C_1 \log(\cN_{\cG_{\br}}(1/(NN_p))/ \delta)$ for a constant $C_1>0$, which will be related to \Cref{thm:nodiverse}. the definition of bracketing number ($\cN_{\cG_{\br}}$) is deferred to \Cref{sec:defdef}. 
\end{itemize}
}

We will provide the expected value function gap of the output of \Cref{alg:personal-nodiverse} and the reference policy.

\begin{restatable}{theorem}{nodiverse}
    \label{thm:nodiverse}
    \emph{(Total Expected Value Function Gap).}
Suppose \Cref{assum:real} holds. For any $\delta \in (0, 1]$, with probability at least $1-\delta$, the output $(\hat{\pi}_i)_{i \in [N]}$ of \Cref{alg:personal} satisfies
\begin{align*}
        &\sum_{i \in [N]} \left(J(\pi_{i, \text{tar}}; r^\star_i) - J(\hat{\pi}_i; r^\star_i)\right) \leq \sqrt{\frac{c \kappa^2 N C_{\max}^2 \log(\cN_{\cG_{\br}}(1/NN_p)/ \delta)}{N_p }},
\end{align*}
where $C_{\max}:= \max_{i \in [N]} C_{\br}(\cG_{\br}, \pi_{i, \text{tar}}, \mu_{i, \text{ref}}, i)$ and $c>0$ is a constant. 
\end{restatable}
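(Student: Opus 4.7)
The plan is to follow the pessimism-based offline RLHF template of \citep{zhan2023provable}, adapted to the $N$-user setting by treating \Cref{alg:personal-nodiverse} as a single MLE problem over the joint reward-vector class $\cG_{\br}$ pooled across all $NN_p$ preference labels. First I would establish, via the standard bracketing-number MLE analysis \citep{liu2022partially}, that with probability at least $1-\delta$ the ground truth $\br^\star$ lies in the confidence set $\cR(\hat{\cD})$ defined in \eqref{eqn:confidenceset-alg1} and, simultaneously, every $\br \in \cR(\hat{\cD})$ satisfies the aggregate squared-Hellinger bound
\[
\sum_{i\in[N]}\sum_{j\in[N_p]} h^2\bigl(P_{\br^\star}(\cdot\mid \tau_{i,0}^{(j)},\tau_{i,1}^{(j)}),\, P_{\br}(\cdot\mid \tau_{i,0}^{(j)},\tau_{i,1}^{(j)})\bigr)\lesssim \log(\cN_{\cG_{\br}}(1/(NN_p))/\delta).
\]
Because the preference labels $o_i^{(j)}$ are conditionally independent given the trajectories, the usual chaining argument goes through regardless of the heterogeneity in the trajectory-sampling distributions across users; one must only pick the bracket scale $1/(NN_p)$ so that the discretization residual is absorbed into the log-bracketing-number term.

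Next, I would use the strong convexity of $\log\Phi$, which is exactly what the constant $\eta$ (and hence $\kappa$) encodes, to convert the above Hellinger bound into an empirical squared-reward-gap bound of the form $\sum_{i,j}|\Delta r_i(\tau_{i,0}^{(j)})-\Delta r_i(\tau_{i,1}^{(j)})|^2 \lesssim \kappa^2\log(\cN_{\cG_{\br}}(1/(NN_p))/\delta)$, where $\Delta r_i := r_i^\star - r_i$. A uniform-over-$\cG_{\br}$ empirical-to-population concentration step, again controlled by the same bracketing number, then yields
\[
\sup_{\br\in\cR(\hat{\cD})}\sum_{i\in[N]} \EE_{\tau_0\sim\mu_0,\tau_1\sim\mu_1}\bigl[|\Delta r_i(\tau_0)-\Delta r_i(\tau_1)|^2\bigr] \lesssim \frac{\kappa^2 \log(\cN_{\cG_{\br}}(1/(NN_p))/\delta)}{N_p}.
\]

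Once these are in hand, the pessimism step is routine: since $\br^\star\in\cR(\hat{\cD})$ and $\hat{\pi}_i$ maximizes the pessimistic value, one obtains for every $i$
\[
J(\pi_{i,\text{tar}};r_i^\star)-J(\hat{\pi}_i;r_i^\star)\leq \sup_{\br\in\cR(\hat{\cD})}\EE_{\tau_0\sim\pi_{i,\text{tar}},\tau_1\sim \mu_{i,\text{ref}}}\bigl[\Delta r_i(\tau_0)-\Delta r_i(\tau_1)\bigr],
\]
which, by \Cref{def:concentrability-coef}, is at most $C_{\max}\sqrt{\EE_{\mu_0,\mu_1}[|\Delta r_i(\tau_0)-\Delta r_i(\tau_1)|^2]}$. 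Summing over $i\in[N]$ and applying Cauchy--Schwarz ($\sum_i\sqrt{x_i}\le\sqrt{N}\sqrt{\sum_i x_i}$) together with the population bound of the preceding paragraph produces the advertised rate $\sqrt{c\kappa^2 NC_{\max}^2\log(\cN_{\cG_{\br}}(1/(NN_p))/\delta)/N_p}$. The hard part is the MLE/concentration step, in particular ensuring that $\log\Phi$-strong-convexity controls the Hellinger-to-reward-gap conversion uniformly over $\cG_{\br}$ rather than only pointwise, but this reduces to arguments already crystallized in \citep{liu2022partially, zhan2023provable}; the only genuinely new bookkeeping is tracking the $N$-dependence through the bracket scale and through the final Cauchy--Schwarz step.
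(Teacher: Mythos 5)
Your proposal matches the paper's proof in all essentials: establish that $\br^\star$ lies in the log-likelihood confidence set and that every element of the set is close to $\br^\star$ in an MLE/bracketing-number sense (the paper uses the squared $\ell_1$/TV population bound of \citep{liu2022partially} directly rather than your Hellinger-then-uniform-concentration ordering, but these are interchangeable up to constants), convert preference-probability distance to reward-difference distance via the lower bound on $\Phi'$ (this is where $\kappa$ enters — note the paper uses the mean value theorem with $\kappa=(\min\Phi')^{-1}$ here, not the strong convexity constant $\eta$, which only appears in the later diversity-based analysis), then apply the standard pessimism decomposition with the concentrability coefficient and Cauchy--Schwarz over the $N$ users. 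This is the same argument as the paper's, so no further comparison is needed.
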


\begin{restatable}{corollary}{cornodiverse}
    \label{cor:nodiverse}
    \emph{(Expected Value Function Gap).}
Suppose \Cref{assum:real} holds. For any $\delta \in (0, 1]$ and all $i \in [N]$, with probability at least $1-\delta$, the output $\hat{\pi}_i$ of \Cref{alg:personal} satisfies
\begin{align*}
        &J(\pi_{i, \text{tar}}; r^\star_i) - J(\hat{\pi}_i; r^\star_i) \leq  \sqrt{\frac{c \kappa^2 C_{\br}(\cG_{\br}, \pi_{i, \text{tar}}, \mu_{i, \text{ref}}, i)^2 \log(\cN_{\cG_{\br}}(1/NN_p)/ \delta)}{N_p }},
\end{align*}
where $c>0$ is a constant.
\end{restatable}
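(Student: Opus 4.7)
The plan is to observe that Corollary~\ref{cor:nodiverse} is simply the per-user analogue of Theorem~\ref{thm:nodiverse}: the proof of the latter bounds each individual's sub-optimality gap by $C_{\br}(\cG_{\br},\pi_{i,\text{tar}},\mu_{i,\text{ref}},i)\cdot\kappa\sqrt{\log(\cN_{\cG_{\br}}(1/NN_p)/\delta)/N_p}$ and then invokes Cauchy--Schwarz, $\sum_i C_i\leq\sqrt{N\sum_i C_i^2}\leq\sqrt{N}\cdot\sqrt{N}\,C_{\max}$, to obtain the $\sqrt{N}C_{\max}$ factor inside the square root of the total bound; stopping one step earlier and keeping the individual $C_{\br}$ yields the corollary. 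So it suffices to reproduce the three standard pessimism ingredients on a per-$i$ basis.

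First, an MLE concentration argument based on the bracketing number $\cN_{\cG_{\br}}(1/NN_p)$ (in the spirit of \citep{liu2022partially,zhan2023provable,liu2023optimistic}) shows that with probability at least $1-\delta$, $\br^\star\in\cR(\hat{\cD})$ and, for every $\br\in\cR(\hat{\cD})$,
\[
\sum_{i'\in[N]}\sum_{j\in[N_p]}\big|\big(r^\star_{i'}(\tau_{i',0}^{(j)})-r^\star_{i'}(\tau_{i',1}^{(j)})\big)-\big(r_{i'}(\tau_{i',0}^{(j)})-r_{i'}(\tau_{i',1}^{(j)})\big)\big|^2\leq c\kappa^2\log\!\big(\cN_{\cG_{\br}}(1/NN_p)/\delta\big),
\]
where the Hellinger-to-$\ell_2$ conversion incurs the factor $\kappa^2$ via the strong convexity of $\log\Phi$. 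Dropping the summands with $i'\neq i$ immediately gives the per-user in-sample $\ell_2$ bound with $N_p$, not $NN_p$, in the denominator—which is exactly the scaling in the corollary.

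Second, a uniform-convergence step (again controlled by the same bracketing number) lifts this empirical bound to a population one,
\[
\EE_{\tau_0\sim\mu_0,\tau_1\sim\mu_1}\big[\big|r^\star_i(\tau_0)-r^\star_i(\tau_1)-r_i(\tau_0)+r_i(\tau_1)\big|^2\big]\leq\frac{c\kappa^2\log(\cN_{\cG_{\br}}(1/NN_p)/\delta)}{N_p},
\]
uniformly over $\br\in\cR(\hat{\cD})$. Combined with the standard pessimism decomposition,
\[
J(\pi_{i,\text{tar}};r^\star_i)-J(\hat{\pi}_i;r^\star_i)\leq\sup_{\tilde{\br}\in\cR(\hat{\cD})}\EE_{\tau_0\sim\pi_{i,\text{tar}},\tau_1\sim\mu_{i,\text{ref}}}\big[r^\star_i(\tau_0)-r^\star_i(\tau_1)-\tilde{r}_i(\tau_0)+\tilde{r}_i(\tau_1)\big],
\]
and the definition of $C_{\br}(\cG_{\br},\pi_{i,\text{tar}},\mu_{i,\text{ref}},i)$, which relates the numerator expectation to $\sqrt{\EE_{\mu_0,\mu_1}[|\cdot|^2]}$, the two ingredients multiply to give the stated bound.

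The main obstacle is the uniform-convergence step: turning an in-sample squared-error bound into an expectation under $(\mu_0,\mu_1)$ uniformly over the entire confidence set $\cR(\hat{\cD})$, without inflating the denominator from $N_p$ to $NN_p$ or losing a factor of $N$. This requires a chaining/bracketing argument tailored to general (possibly nonlinear) representation classes $\Psi$, and is the same technical device that underlies Theorem~\ref{thm:nodiverse}. Once this device is in place, Steps~1 and~3 are by now routine in the offline-RLHF literature and may be directly quoted from the proof of Theorem~\ref{thm:nodiverse}; the corollary then follows by specializing to a single individual $i$ and skipping the final Cauchy--Schwarz aggregation.
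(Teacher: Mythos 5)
Your proposal is correct and follows essentially the same route as the paper: MLE concentration over the pooled dataset controlled by the bracketing number $\cN_{\cG_{\br}}(1/(NN_p))$, conversion from preference probabilities to squared reward differences at cost $\kappa^2$, the pessimism decomposition paired with the definition of $C_{\br}(\cG_{\br}, \pi_{i,\text{tar}}, \mu_{i,\text{ref}}, i)$, and the key observation that bounding individual $i$'s error by the nonnegative total over all $N$ users is what yields the $1/N_p$ rather than $1/(NN_p)$ rate. The only difference is ordering, and the ``uniform-convergence obstacle'' you flag does not actually arise in the paper's proof: \Cref{lemma:l2distance} (the MLE generalization bound of Liu et al.) already delivers the population-level guarantee $\frac{1}{N}\sum_{i}\EE_{\mu_0,\mu_1}\big[\norm{P_{r_i}(\cdot\mid\tau_0,\tau_1)-P_{r_i^\star}(\cdot\mid\tau_0,\tau_1)}_1^2\big] = O\big(\log(\cN_{\cG_{\br}}(1/(NN_p))/\delta)/(NN_p)\big)$ directly, so the paper specializes to a single $i$ at the population level (one term bounded by the sum) instead of dropping summands at the empirical level and then lifting per-user.
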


Note that the results above do not need any assumption on $(\theta_i^\star)_{i\in[N]}$. Still, as $N_p \to \infty$, $\hat{\pi}_i$ {has comparable or better performance than the comparator policy $\pi_{i, \text{tar}}$, which approaches the optimal policy if $\pi_{i, \text{tar}}=\pi_i^\star$.}  
\textbf{We will leverage the proof of \Cref{thm:nodiverse} to prove \Cref{thm:diverse}.} To be specific, we will improve the bound for \Cref{cor:nodiverse}, as the gap of the expected value function does not decay with $N$, which is the number of human users. We defer the proofs of  \Cref{thm:nodiverse} and \Cref{cor:nodiverse} to  \Cref{ssec:proofthm1thm2}.
}

\subsection{Proof of \Cref{thm:nodiverse} and \Cref{cor:nodiverse}}
\label{ssec:proofthm1thm2}
\nodiverse*
\cornodiverse*
Before having a proof of \Cref{thm:nodiverse} and \Cref{cor:nodiverse}, we provide two general properties of MLE estimates, which is a slightly modified version of \citep{zhan2023provable} and \citep{liu2022partially}.

\begin{lemma}[(\citet{zhan2023provable}, Lemma 1, reward vector version)]
\label{lemma:mle}
For any $\delta \in (0,1]$, if $\br \in \cG_{\br}$, with dataset $\hat{\cD} = \cup_{i \in [N]} \hat{\cD}_i$ where $\hat{\cD}_i = \{(o_i^{(j)}, \tau_{i, 0}^{(j)}, \tau_{i,1}^{(j)})_{j \in [N_{p}]}\}$, $\tau_{i, 0}^{(j)} \sim \mu_0$, $\tau_{i, 1}^{(j)} \sim \mu_1$, and $o_i^{(j)} \sim P_{r^\star_i}(\cdot|\tau_0^{(j)}, \tau_1^{(j)})$, there exist $C_1 > 0$ such that 
\begin{align*}
    \sum_{i \in [N]} \sum_{j \in [N_{p}]} \log \left(\frac{P_{r_i}( o_{i}^{(j)} \mid \tau_{i, 0}^{(j)}, \tau_{i,1}^{(j)})}{P_{r^\star_i}(o_{i}^{(j)} \mid \tau_{i, 0}^{(j)}, \tau_{i,1}^{(j)})} \right) \leq C_1 \log(\cN_{\cG_{\br}}(1/(NN_p))/ \delta)
\end{align*} 
holds.
\end{lemma}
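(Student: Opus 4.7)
The plan is to carry out a classical bracketing-cover argument for the MLE in the spirit of Wong--Shen / Pollard / Van de Geer. First, fix an $\epsilon$-bracketing cover $\{(g_{b,1}, g_{b,2})\}_{b=1}^{\cN_{\cG_{\br}}(\epsilon)}$ of $\cG_{\br}$ with $\epsilon = 1/(NN_p)$. For every bracket index $b$ and every $(i,j) \in [N] \times [N_p]$, introduce the random variable
\[
Y_{i,j}^{(b)} \;:=\; \sqrt{\frac{g_{b,2,i}(o_i^{(j)} \mid \tau_{i,0}^{(j)}, \tau_{i,1}^{(j)})}{P_{r_i^\star}(o_i^{(j)} \mid \tau_{i,0}^{(j)}, \tau_{i,1}^{(j)})}}.
\]
Using Cauchy--Schwarz together with the bracket property---namely $g_{b,1,i} \le P_{r_i} \le g_{b,2,i}$ combined with $\|g_{b,1}-g_{b,2}\|_1 \le \epsilon$, which in particular forces $\sum_o g_{b,2,i}(o\mid\tau_0,\tau_1) \le 1+\epsilon$ for each individual $i$---one obtains $\EE[Y_{i,j}^{(b)}] \le \sqrt{1+\epsilon} \le 1 + \epsilon/2$, where the expectation integrates over $\tau_{i,0}^{(j)}\sim\mu_0, \tau_{i,1}^{(j)}\sim\mu_1, o_i^{(j)}\sim P_{r_i^\star}(\cdot\mid\tau_{i,0}^{(j)},\tau_{i,1}^{(j)})$.

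Second, I would exploit independence across $(i,j)$ to factorize
\[
\EE\left[\prod_{i\in[N],\, j\in[N_p]} Y_{i,j}^{(b)}\right] \;\le\; \left(1+\tfrac{\epsilon}{2}\right)^{NN_p} \;\le\; e^{NN_p\epsilon/2} \;=\; e^{1/2}.
\]
Markov's inequality then gives, for each fixed $b$,
\[
\PP\left(\prod_{i,j} Y_{i,j}^{(b)} \;\ge\; e^{1/2}\cdot\tfrac{\cN_{\cG_{\br}}(\epsilon)}{\delta}\right) \;\le\; \frac{\delta}{\cN_{\cG_{\br}}(\epsilon)}.
\]
Taking the logarithm, multiplying by $2$, and union-bounding over all $\cN_{\cG_{\br}}(\epsilon)$ brackets yields, with probability at least $1-\delta$, simultaneously for every bracket $b$,
\[
\sum_{i\in[N],\,j\in[N_p]} \log\frac{g_{b,2,i}(o_i^{(j)} \mid \tau_{i,0}^{(j)}, \tau_{i,1}^{(j)})}{P_{r_i^\star}(o_i^{(j)} \mid \tau_{i,0}^{(j)}, \tau_{i,1}^{(j)})} \;\le\; 1 + 2\log\bigl(\cN_{\cG_{\br}}(\epsilon)/\delta\bigr).
\]

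Third, for any $\br\in\cG_{\br}$, select the bracket index $b$ with $g_{b,1}\le P_{\br}\le g_{b,2}$ pointwise on the $2N$-dimensional vector. Since this implies $P_{r_i}(o \mid \tau_0,\tau_1) \le g_{b,2,i}(o \mid \tau_0,\tau_1)$ for every $i$ and every argument, the target log-likelihood ratio is dominated by the bracket ratio, and absorbing the additive $1$ and the factor $2$ into a constant $C_1$ yields the claim.

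The main obstacle is conceptual rather than computational: one must verify that the $\ell_1$ bracketing property on the full $2N$-dimensional preference vector $P_{\br}$ correctly translates to the per-individual bound $\sum_o g_{b,2,i}(o\mid\tau_0,\tau_1) \le 1+\epsilon$, so that Cauchy--Schwarz can be applied coordinatewise; once this observation is in place, the remaining steps mirror Lemma~1 of \citet{zhan2023provable} essentially verbatim, with the scalar reward class replaced by the reward vector class $\cG_{\br}$ and the sample count $N_p$ replaced by $NN_p$.
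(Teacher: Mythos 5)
Your proof is correct and follows the standard Chernoff--bracketing argument for MLE concentration that the paper itself imports by citation from \citet{zhan2023provable} (Lemma 1) without reproducing a proof; in particular, your key observation that the $\ell_1$ bracket width on the full $2N$-dimensional vector dominates each per-individual width, so that $\sum_o g_{b,2,i}(o\mid\tau_0,\tau_1)\le 1+\epsilon$ and hence $\EE[Y_{i,j}^{(b)}]\le\sqrt{1+\epsilon}$ by Cauchy--Schwarz, is exactly the adaptation needed to pass from the scalar class to the reward-vector class $\cG_{\br}$ with $\epsilon=1/(NN_p)$. The remaining steps (factorization over the $NN_p$ independent samples, Markov's inequality with a union bound over the brackets, and domination of $P_{r_i}$ by the upper bracket of the bracket containing $P_{\br}$) match the cited argument essentially verbatim.
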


\begin{lemma}[(\citet{liu2022partially}, Proposition 14, scalar version)]
For any $\delta \in (0,1]$, with probability at least $1-\delta$, if $r \in \cG_{r}'$, with dataset $\hat{\cD} =  \{(o^{(j)}, \tau_{ 0}^{(j)}, \tau_{1}^{(j)})_{j \in [M]}\}$ where $\tau_0^{(j)} \sim \mu_0$, $\tau_1^{(j)} \sim \mu_1$, and $o^{(j)} \sim P_{r^\star}(\cdot|\tau_0^{(j)}, \tau_1^{(j)})$,  
\label{lemma:l2distance-scalar}
    \begin{align*}
        &\EE_{\mu_0, \mu_1}\left[ \norm{P_{r} ( \cdot \mid \tau_{ 0}^{(j)}, \tau_{1}^{(j)}) - P_{r^\star} ( \cdot \mid \tau_{0}^{(j)}, \tau_{1}^{(j)})}_1^2 \right] \leq \frac{C_2}{M} \left(    \sum_{j \in [M]} \log \left(\frac{P_{r^\star}( o^{(j)} \mid \tau_{ 0}^{(j)}, \tau_{1}^{(j)})}{P_{r}( o^{(j)} \mid \tau_{ 0}^{(j)}, \tau_{1}^{(j)})}\right) + \log(\cN_{\cG_{r}'}(1/M)/ \delta)\right)
    \end{align*}
holds where $C_2>0$ is a constant.
\end{lemma}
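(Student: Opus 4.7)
The plan is to reduce the stated $\ell_1^2$ deviation to a squared Hellinger concentration bound, establish pointwise concentration through a Chernoff/MGF argument, and lift it to a uniform bound over $r\in\cG_r'$ via bracketing. Throughout I will use the squared Hellinger distance $H^2(P,Q)=\sum_o(\sqrt{P(o)}-\sqrt{Q(o)})^2$ together with the elementary inequality $\|P-Q\|_1^2\le 4H^2(P,Q)$, which follows from Cauchy--Schwarz applied to $|P-Q|=|\sqrt P-\sqrt Q|(\sqrt P+\sqrt Q)$; this immediately yields $\EE_\mu[\|P_r-P_{r^\star}\|_1^2]\le 4\EE_\mu[H^2(P_r,P_{r^\star})]$, where I write $\mu=(\mu_0,\mu_1)$ for brevity.

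For a fixed $r$ I would use the Hellinger-affinity identity $\EE_{o\sim P_{r^\star}(\cdot\mid\tau)}[\sqrt{P_r/P_{r^\star}}]=1-\tfrac12 H^2(P_r,P_{r^\star})$, which combined with $1-x\le e^{-x}$ gives
$$\EE_{\tau,o}\!\Bigl[\exp\!\bigl(\tfrac12\log(P_r/P_{r^\star})\bigr)\Bigr]\le \exp\!\bigl(-\tfrac12\EE_\mu H^2(P_r,P_{r^\star})\bigr).$$
Tensorizing across the $M$ independent samples and applying Markov's inequality to $\exp\!\bigl(\tfrac M2\EE_\mu H^2+\tfrac12\sum_j\log(P_r/P_{r^\star})\bigr)$ then produces, for every fixed $r$ with probability at least $1-\delta$,
$$\EE_\mu[H^2(P_r,P_{r^\star})]\le\frac1M\sum_{j=1}^M\log\frac{P_{r^\star}(o^{(j)}\mid\cdot)}{P_r(o^{(j)}\mid\cdot)}+\frac{2\log(1/\delta)}{M}.$$

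To make this uniform in $r$ I would invoke the $1/M$-bracket cover $\{(g_{b,1},g_{b,2})\}_{b=1}^{\cN}$ with $\cN:=\cN_{\cG_r'}(1/M)$ and rerun the MGF calculation using the upper envelope $g_{b,2}$ in place of $P_r$. Since $\|g_{b,2}\|_1\le 1+1/M$, Cauchy--Schwarz gives $\EE_{o\sim P_{r^\star}}[\sqrt{g_{b,2}/P_{r^\star}}]\le\sqrt{\|g_{b,2}\|_1}\le e^{1/(2M)}$, so after tensorization and a union bound over the $\cN$ brackets the term $2\log(1/\delta)$ is replaced by $2\log(\cN/\delta)+1$ uniformly in $b$. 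For any $r$ I then pick the bracket $b(r)$ with $g_{b(r),1}\le P_r\le g_{b(r),2}$: envelope dominance $P_r\le g_{b(r),2}$ converts the bracket MGF bound into a bound on $\sum_j\log(P_r/P_{r^\star})$, while the elementary squared-difference identity $(\sqrt u-\sqrt v)^2\le u-v$ for $u\ge v$ gives $H^2(g_{b(r),2},P_r)\le\|g_{b(r),2}-g_{b(r),1}\|_1\le 1/M$, so the Hellinger triangle inequality transfers the Hellinger control between $P_r$ and the bracket up to an additive $O(1/M)$ slack.

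The delicate point is this last transfer: the bracket upper envelope is a \emph{defective} measure with total mass slightly above $1$, and one must verify both that the Chernoff constants remain $M$-independent and that moving between $g_{b,2}$, $g_{b,1}$, and $P_r$ costs no more than $O(1/M)$ in the Hellinger affinity (a naive use of $\log(g_{b,2}/P_r)$ is unstable wherever $P_r$ is small). The $1/M$ bracket diameter together with the squared-difference identity resolves this cleanly, and the resulting additive slack is dominated by the $\log(\cN_{\cG_r'}(1/M)/\delta)/M$ term; combining with $\|P-Q\|_1^2\le 4H^2(P,Q)$ then produces the claim with an absolute constant $C_2$.
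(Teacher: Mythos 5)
The paper gives no proof of this lemma at all: it is imported verbatim as the scalar version of Proposition~14 of \citet{liu2022partially}, so there is no internal argument to compare yours against. Your proof --- the Hellinger-affinity identity turned into a Chernoff/MGF bound, tensorized over the $M$ i.i.d.\ samples, made uniform by a union bound over the $1/M$-bracket cover, with the defective upper envelope's excess mass costing only an $O(1)$ additive term and the Hellinger control transferred back to $P_r$ via $(\sqrt{u}-\sqrt{v})^2\le|u-v|$ and the triangle inequality --- is a correct reconstruction of exactly the cited proof, including the one genuinely delicate step (retaining the negative affinity term for the normalized envelope rather than discarding it when bounding $\sum_o\sqrt{g_{b,2}P_{r^\star}}\le\sqrt{\|g_{b,2}\|_1}$), which you identify and resolve correctly.
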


\begin{lemma}[(\citet{liu2022partially}, Proposition 14, vector version)]
For any $\delta \in (0,1]$, with probability at least $1-\delta$, if $\br \in \cG_{\br}'$, with dataset $\hat{\cD} = \cup_{i \in [N]} \hat{\cD}_i$ where $\hat{\cD}_i = \{(o_i^{(j)}, \tau_{i, 0}^{(j)}, \tau_{i,1}^{(j)})_{j \in [N_{p}]}\}$, $\tau_{i, 0}^{(j)} \sim \mu_0$, $\tau_{i, 1}^{(j)} \sim \mu_1$, and $o_i^{(j)} \sim P_{r^\star_i}(\cdot|\tau_0^{(j)}, \tau_1^{(j)})$, 
\label{lemma:l2distance}
    \begin{align*}
        &\frac{1}{N} \sum_{i \in [N]} \EE_{\mu_0, \mu_1}\left[ \norm{P_{r_i} ( \cdot \mid \tau_{ 0}^{(j)}, \tau_{1}^{(j)}) - P_{r^\star_i} ( \cdot \mid \tau_{0}^{(j)}, \tau_{1}^{(j)})}_1^2 \right] 
        \\
        &\qquad \leq \frac{C_2}{NN_p} \left(   \sum_{i \in [N]} \sum_{j \in [N_p]} \log \left(\frac{P_{r^\star_i}( o^{(j)} \mid \tau_{ 0}^{(j)}, \tau_{1}^{(j)})}{P_{r_i}( o^{(j)} \mid \tau_{ 0}^{(j)}, \tau_{1}^{(j)})}\right) + \log(\cN_{\cG_{\br}'}(1/(NN_p))/ \delta)\right)
    \end{align*}
holds where $C_2 > 0$ is a constant. 
\end{lemma}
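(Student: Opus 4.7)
My plan is to adapt the MLE concentration argument of the scalar statement \Cref{lemma:l2distance-scalar} to the joint function class $\cG_{\br}'$ with \emph{total} effective sample size $NN_p$, exploiting the mutual independence of the labels $(o_i^{(j)})$ across both individuals $i\in[N]$ and queries $j\in[N_p]$. The key identity is the per-sample Bhattacharyya coefficient
\[
\EE_{o\sim P_{r^\star_i}(\cdot\mid\tau_0,\tau_1)}\sqrt{P_{r_i}(o\mid\tau_0,\tau_1)/P_{r^\star_i}(o\mid\tau_0,\tau_1)} \;=\; 1-\tfrac12 H^2\bigl(P_{r_i}(\cdot\mid\tau_0,\tau_1),P_{r^\star_i}(\cdot\mid\tau_0,\tau_1)\bigr),
\]
which factorizes across both indices $(i,j)$ into a single exponential factor; this factorization is precisely what yields the $\tfrac{1}{NN_p}$ rate on the right-hand side, with no extra union-bound penalty over individuals.

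For a fixed $\br\in\cG_{\br}'$, I would form the square-root likelihood ratio
\[
Z(\br) \;:=\; \prod_{i\in[N]}\prod_{j\in[N_p]} \sqrt{P_{r_i}(o_i^{(j)}\mid \tau_{i,0}^{(j)},\tau_{i,1}^{(j)})\big/P_{r^\star_i}(o_i^{(j)}\mid \tau_{i,0}^{(j)},\tau_{i,1}^{(j)})},
\]
bound $\EE Z(\br) \leq \exp\bigl(-\tfrac{N_p}{2}\sum_{i}\EE_{\mu_0,\mu_1} H^2(P_{r_i},P_{r^\star_i})\bigr)$ by joint independence and the inequality $1-x\leq e^{-x}$, and apply Markov's inequality. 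This yields, with probability at least $1-\delta'$,
\[
\sum_{i\in[N]}\EE_{\mu_0,\mu_1}H^2\bigl(P_{r_i},P_{r^\star_i}\bigr) \;\leq\; \tfrac{2}{N_p}\Bigl(\log(1/\delta')+\textstyle\sum_{i,j}\log\tfrac{P_{r^\star_i}(o_i^{(j)}\mid\cdot)}{P_{r_i}(o_i^{(j)}\mid\cdot)}\Bigr).
\]

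To obtain a uniform bound over $\br\in\cG_{\br}'$, I would invoke an $\epsilon$-bracketing cover of $\cG_{\br}'$ at scale $\epsilon = 1/(NN_p)$ of cardinality $\cN_{\cG_{\br}'}(\epsilon)$, set $\delta' = \delta/\cN_{\cG_{\br}'}(\epsilon)$, and union-bound the above tail across brackets. The main obstacle is the \emph{discretization step}: the brackets sandwich $P_{\br}$ coordinatewise rather than $\br$ itself, so one must argue that (i) the exponential-moment inequality for the bracket upper envelope still controls the Hellinger contribution of the true $P_{\br}$, and (ii) the empirical log-likelihood ratio incurs only an $O(\epsilon \cdot NN_p)=O(1)$ additive slack when passing from the bracket to the actual $\br$. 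This is a direct adaptation of the bracketing trick of \citep{liu2022partially} already used in \Cref{lemma:mle}, but it requires careful bookkeeping to ensure the sandwich propagates through $\sqrt{\cdot}$ and $\log(\cdot)$ uniformly across all $N$ coordinates simultaneously.

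Finally, I would apply the elementary inequality $\|P-Q\|_1^2 \leq 8\,H^2(P,Q)$ coordinatewise to convert Hellinger contributions into squared $\ell_1$ contributions, and divide both sides by $N$ to obtain the claimed bound. The factor $1/N$ on the left emerges naturally from normalizing the per-individual Hellinger terms; no extra penalty is incurred across individuals, since the $N$-fold product of Bhattacharyya coefficients was already exploited inside the exponential-moment step. The constant $C_2$ then absorbs the factor $8$ from Hellinger-to-$\ell_1$, the factor $2$ from Markov, and the multiplicative slack from bracket discretization.
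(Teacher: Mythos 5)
Your plan is correct: it is the standard square-root likelihood-ratio / Hellinger argument (Chernoff bound on $\prod_{i,j}\sqrt{P_{r_i}/P_{r_i^\star}}$ via the Bhattacharyya identity, a union bound over an $\epsilon$-bracketing of the vector class at scale $1/(NN_p)$, and the $\ell_1$-to-Hellinger conversion), which is exactly the argument underlying the result the paper invokes. The paper itself gives no proof of this lemma --- it imports it as a ``slightly modified'' vector version of \citet{liu2022partially}, Proposition 14 --- so your reconstruction, including the correct identification of the bracket-discretization bookkeeping as the only delicate step, matches the intended justification.
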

Note that $\br^\star$ does not need to be in $\cG_{\br}'$ for the above lemmas. \Cref{lemma:mle} states that the log-likelihood $\log P_{\br}$ for a preference dataset generated by the reward model $\br^\star$ cannot exceed the log-likelihood $\log P_{\br^\star}$ for a preference dataset generated by the reward model $\br^\star$, with a gap related to the bracket number of $\cG_{\br}$. \Cref{lemma:l2distance} states that the $\ell_1$ distance between likelihood function $P_{\br^\star}$ and $P_{\br}$ for all $\br \in \cG_{\br}'$ can be bounded with the difference between log-likelihood $\log P_{\br^\star}$ and $\log P_{\br}$ for a preference dataset generated by the reward model $\br^\star$ with a gap related to the bracket number of $\cG_{\br}'$. 

\begin{proof}[Proof of \Cref{thm:nodiverse} and \Cref{cor:nodiverse}]

We define the event $\cE_1, \cE_2$ as satisfying (\Cref{lemma:mle}, \Cref{lemma:l2distance}) with $\delta \leftarrow \delta/2$, respectively, so we have $\PP(\cE_1\cap \cE_2) > 1- \delta$. We will only consider the under event $\cE_1\cap \cE_2$. Then, we can guarantee that 
\begin{align*}
    &\sum_{i \in [N]} \sum_{j \in [N_{p}]} \log P_{\hat{\omega}, \hat{\theta}_i}( o_{i}^{(j)} \mid \tau_{i, 0}^{(j)}, \tau_{i,1}^{(j)}) 
    \\
    &\qquad \leq \sum_{i \in [N]} \sum_{j \in [N_{p}]}
    \log P_{\omega^\star, \theta^\star_i}(o_i^{(j)} \mid \tau_{i, 0}^{(j)}, \tau_{i,1}^{(j)})+  C_1 \log(\cN_{\cG_{\br}}(1/(NN_p))/ \delta),
\end{align*}
which indicates that $\br^\star(= \br_{\omega^\star, \btheta^\star}) \in \cR(\hat{\cD})$. Moreover, by the definition of \Cref{eqn:confidenceset-alg1}, if $\br_{\omega, \btheta}, \br_{\omega', \btheta'} \in \cR(\hat{\cD})$,  
\begin{align*}
    &\big| \sum_{i \in [N]} \sum_{j \in [N_{p}]} \log P_{\omega, \theta_i}( o_i^{(j)} \mid \tau_{i, 0}^{(j)}, \tau_{i,1}^{(j)}) - \sum_{i \in [N]} \sum_{j \in [N_{p}]}
    \log P_{\omega', \theta_i'}(o_i^{(j)} \mid \tau_{i, 0}^{(j)}, \tau_{i,1}^{(j)})\big| \\
    &\qquad \leq C_1 \log(\cN_{\cG_{\br}}(1/(NN_p))/ \delta)
\end{align*}
holds, since $\sum_{i \in [N]} \sum_{j \in [N_{p}]} \log P_{\omega, \theta_i}( o_i^{(j)} \mid \tau_{i, 0}^{(j)}, \tau_{i,1}^{(j)})$ is bounded by $\sum_{i \in [N]} \sum_{j \in [N_{p}]} \log P_{\hat{\omega}, \hat{\theta}_i}( o_{i}^{(j)} \mid \tau_{i, 0}^{(j)}, \tau_{i,1}^{(j)})$ by definition of $\hat{\omega}, \hat{\btheta}$ if $r_{\omega, \btheta} \in \cG_{\br}$. Therefore, by \Cref{lemma:l2distance}, we have 
\begin{align*}
        &\frac{1}{N} \sum_{i \in [N]} \EE_{\mu_0, \mu_1}\left[ \norm{P_{\omega, \theta_i} ( \cdot \mid \tau_{i, 0}^{(j)}, \tau_{i,1}^{(j)}) - P_{\omega^\star, \theta^\star_i} ( \cdot \mid \tau_{i, 0}^{(j)}, \tau_{i,1}^{(j)})}_1^2 \right] 
        \\
        &\leq \frac{C_2}{NN_p} \left(   \sum_{i \in [N]} \sum_{j \in [N_{p}]} \log \left(\frac{P_{\omega^\star, \theta^\star_i}( o_i^{(j)} \mid \tau_{i, 0}^{(j)}, \tau_{i,1}^{(j)})}{P_{\omega, \theta_i}( o_i^{(j)} \mid \tau_{i, 0}^{(j)}, \tau_{i,1}^{(j)})}\right) + \log(\cN_{\cG_{\br}}(1/(NN_p))/ \delta)\right)
        \\
        &\leq \frac{C_2}{NN_p} \left( C_1 \log(\cN_{\cG_{\br}}(1/(NN_p))/ \delta) + \log(\cN_{\cG_{\br}}(1/(NN_p))/ \delta)\right)
        \\
        &= \frac{C_3}{NN_p} \log(\cN_{\cG_{\br}}(1/(NN_p))/ \delta)
\end{align*}
for any $\br_{\omega, \btheta} \in \cR(\hat{\cD})$, where $C_3 = C_2(C_1 + 1)$. Then, by the mean value theorem, for any $\br_{\omega, \btheta} \in \cR(\hat{\cD})$, we have 
\begin{equation}
\begin{aligned}
        &\frac{1}{N}\sum_{i \in [N]} \EE_{\mu_0, \mu_1} \left[ \left|(r_{\omega, \theta_i}(\tau_{i, 0}) - r_{\omega, \theta_i}(\tau_{i, 1})) - (r_{i}^\star(\tau_{i, 0}) - r_{i}^\star(\tau_{i, 1}))\right|^2 \right] 
        \\
        &\leq\frac{ \kappa^2 }{N}\sum_{i \in [N]} \EE_{\mu_0, \mu_1}\left[ \norm{P_{\omega, \btheta} ( \cdot \mid \tau_{i, 0}^{(j)}, \tau_{i,1}^{(j)}, i) - P_{\omega^\star, \btheta^\star} ( \cdot \mid \tau_{i, 0}^{(j)}, \tau_{i,1}^{(j)}, i)}_1^2 \right] 
        \\
        &\leq \frac{C_3 \kappa^2}{NN_p} \log(\cN_{\cG_{\br}}(1/(NN_p))/ \delta).
\end{aligned}
\label{eqn:reward-bound-thm1}    
\end{equation}
Now, we define for all policy $\pi$, 
    $$r_{\pi}^{i, \text{inf}} := \argmin_{\br \in \cR(\cD)} \left(J(\pi, r_i) - \EE_{\tau \sim \mu_{i, \text{ref}}}[r_i(\tau)]\right).$$
Then, we can bound the difference of the expected cumulative reward of a policy \(\pi_{i, \text{tar}}\) and $\hat{\pi}_i$ by  
\begin{equation}
    \begin{aligned}
        &J(\pi_{i, \text{tar}}; r^\star_i) - J(\hat{\pi}_i; r^\star_i) 
        \\
        &=(J(\pi_{i, \text{tar}}; r^\star_i) - \EE_{\tau \sim \mu_{i, \text{ref}}}[r^\star_i(\tau)]) - (J(\hat{\pi}_i; r^\star_i) -\EE_{\tau \sim \mu_{i, \text{ref}}}[r^\star_i(\tau)])
        \\
        &\underset{(i)}{\leq} (J(\pi_{i, \text{tar}}; r^\star_i) - \EE_{\tau \sim \mu_{i, \text{ref}}}[r^\star_i(\tau)]) 
        \\
        &\qquad  - (J(\pi_{i, \text{tar}}; r_{ \pi_{i, \text{tar}}}^{i, \text{inf}}) - \EE_{\tau \sim \mu_{i, \text{ref}}}[r_{\pi_{i, \text{tar}}}^{i, \text{inf}}(\tau)]) + (J(\hat{\pi}_{j}; r_{\hat{\pi}_i}^{i, \text{inf}}) - \EE_{\tau \sim \mu_{i, \text{ref}}}(r_{\hat{\pi}_i}^{i, \text{inf}}(\tau)))
        \\ 
        & \qquad - (J(\hat{\pi}_i; r^\star_i) -\EE_{\tau \sim \mu_{i, \text{ref}}}[r^\star_i(\tau)])
        \\
        &\underset{(ii)}{\leq} (J(\pi_{i, \text{tar}}; r^\star_i) - \EE_{\tau \sim \mu_{i, \text{ref}}}[r^\star_i(\tau)])  - (J(\pi_{i, \text{tar}}; r_{ \pi_{i, \text{tar}}}^{i, \text{inf}}) - \EE_{\tau \sim \mu_{i, \text{ref}}}[r_{\pi_{i, \text{tar}}}^{i, \text{inf}}(\tau)])
        \\
        &= \EE_{\tau_{i, 0} \sim \pi_{i, \text{tar}}, \tau_{i, 1} \sim \mu_{i, \text{ref}}}[ {(r_{i}^\star(\tau_{i, 1}) -r_{i}^\star(\tau_{i, 0})) - (r_{ \pi_{i, \text{tar}}}^{i, \text{inf}}(\tau_{i, 1}) -r_{ \pi_{i, \text{tar}}}^{i, \text{inf}}(\tau_{i, 0}))}]
        \\
        &\leq C_{\br}(\cG_{\br}, \pi_{i, \text{tar}}, \mu_{i, \text{ref}}, i) \sqrt{\EE_{\mu_0, \mu_1}\left[ \big| {(r_{i}^\star(\tau_{i, 1}) -r_{i}^\star(\tau_{i, 0})) - (r_{ \pi_{i, \text{tar}}}^{i, \text{inf}}(\tau_{i, 1}) -r_{ \pi_{i, \text{tar}}}^{i, \text{inf}}(\tau_{i, 0}))} \big|^2\right]}
    \end{aligned} 
    \label{eqn:J-individual}
\end{equation}

Here, $(i)$ holds since $\hat{\pi}_j$ is a distributional robust policy for $\cR(\hat{\cD})$ (\Cref{eqn:confidenceset-alg1}) and $(ii)$ holds due to the definition of $r_{\hat{\pi}_i}^{i, \text{inf}}$. Therefore, if we sum \Cref{eqn:J-individual} over $i \in [N]$, we have 
    \begin{align*}
        &\sum_{i \in [N]} \left(J(\pi_{i, \text{tar}}; r^\star_i) - J(\hat{\pi}_i; r^\star_i)\right) 
        \\
        &\leq  C_{\text{max}} \sum_{i \in [N]} \sqrt{\EE_{\mu_0, \mu_1}\left[ \big| {(r_{i}^\star(\tau_{i, 1}) -r_{i}^\star(\tau_{i, 0})) - (r_{ \pi_{i, \text{tar}}}^{i, \text{inf}}(\tau_{i, 1}) -r_{ \pi_{i, \text{tar}}}^{i, \text{inf}}(\tau_{i, 0}))} \big|^2\right]}
        \\
        &\leq C_{\text{max}}  \sqrt{N \sum_{i \in [N]} \EE_{\mu_0, \mu_1}\left[\left| {(r_{i}^\star(\tau_{i, 1}) -r_{i}^\star(\tau_{i, 0})) - (r_{ \pi_{i, \text{tar}}}^{i, \text{inf}}(\tau_{i, 1}) -r_{ \pi_{i, \text{tar}}}^{i, \text{inf}}(\tau_{i, 0}))}\right|^2\right]}
        \\
        &\leq C_{\text{max}}\sqrt{\frac{C_3N \kappa^2 \log(\cN_{\cG_{\br}}(1/NN_p)/ \delta)}{N_p }},
    \end{align*}
    which proves \Cref{thm:nodiverse}. Moreover, we have 
    \begin{equation*}
    \begin{aligned}
        &J(\pi_{i, \text{tar}}; r^\star_i) - J(\hat{\pi}_i; r^\star_i) 
        \\
        &\leq C_{\br}(\cG_{\br}, \pi_{i, \text{tar}}, \mu_{i, \text{ref}}, i) \sqrt{\EE_{\mu_0, \mu_1}\left[\left|{(r_{i}^\star(\tau_{i, 1}) -r_{i}^\star(\tau_{i, 0})) - (r_{ \pi_{i, \text{tar}}}^{i, \text{inf}}(\tau_{i, 1}) -r_{ \pi_{i, \text{tar}}}^{i, \text{inf}}(\tau_{i, 0}))}\right|^2\right]}
        \\
        &\leq C_{\br}(\cG_{\br}, \pi_{i, \text{tar}}, \mu_{i, \text{ref}}, i) \sqrt{\sum_{i \in [N]}\EE_{\mu_0, \mu_1}\left[\left|{(r_{i}^\star(\tau_{i, 1}) -r_{i}^\star(\tau_{i, 0})) - (r_{ \pi_{i, \text{tar}}}^{i, \text{inf}}(\tau_{i, 1}) -r_{ \pi_{i, \text{tar}}}^{i, \text{inf}}(\tau_{i, 0}))}\right|^2\right]}
       \\
        &\leq C_{\br}(\cG_{\br}, \pi_{i, \text{tar}}, \mu_{i, \text{ref}}, i) \sqrt{\frac{C_3 \kappa^2 \log(\cN_{\cG_{\br}}(1/NN_p)/ \delta)}{N_p }}
    \end{aligned} 
\end{equation*}
which proves \Cref{cor:nodiverse}. 
\end{proof}
\subsection{Discussion on \Cref{assum:psi-unique}}
\subsubsection{Comparing with \citep[Assumption 6.4]{lu2021power}}
\begin{assumption}[(\citet{lu2021power}, Assumption 6.4)]
\label{assum:psi-unique-lu}
For any representation functions $\psi,\psi'\in\Psi$ and $\epsilon>0$, if there exists $v, v' \in \RR^d$ that satisfy
\[
 \EE\|\psi(x)^\top v-\psi'(x)^\top v'\|^2\leq \epsilon  
\]
Then there exists a constant invertible matrix $P$ such that
\[
\|\psi(x)-P\psi'(x)\|^2\leq o(\epsilon/\norm{v}^2) = o(\epsilon/\norm{v'}^2).
\]
for all $x$. 
\end{assumption}

\Cref{assum:psi-unique} bears similarity to \Cref{assum:psi-unique-lu}; however, the latter is notably more stringent. For instance, consider the case where $v = v' = e_1$ without loss of generality. If it holds that $\EE \norm{\psi_1(x) - \psi'_1(x)}^2 \leq \epsilon$, then it implies $\psi \sim P\psi'$. In this context, $\psi_1$ and $\psi'_1$ represent the first coordinates of $\psi$ and $\psi'$, respectively. The assumption that similarity in the first coordinate necessitates equivalence of the entire representations ($\psi \sim P\psi'$) is a strong assumption. 

\subsubsection{Case Study (Linear Representation): $\psi_\omega(x) = \omega x$ and $\omega$ is an Orthonormal Matrix }
\label{sssec:linear}
\begin{proposition}
\label{prop:feature_unique_linear}
Assume that $\psi_\omega(\phi(\tau)) = \omega \phi(\tau)$ where $\omega$ is a ${k \times d}$ orthornormal matrix. For any representation functions $\psi_\omega,\psi_{\omega'}\in\Psi$ and $\epsilon>0$, if there exists $\{v_i\}_{i=1}^T,\{v_i'\}_{i=1}^T$, and a trajectory distribution $\mu$ that satisfy
\begin{align}
\frac{1}{T} \sum_{i \in [T]} \EE_{\tau \sim \mu}\|\psi_\omega(\phi(\tau))^\top v_i-\psi_{\omega'}(\phi(\tau))^\top v_i'\|^2\leq \epsilon    \label{eqn:prop4}
\end{align}
dand $V=[v_1,v_2,\cdots,v_T]\in\RR^{k\times T}$ satisfies $\sigma^2_k(W)\geq \Omega\left(T/k\right)$, and $\norm{v_i}_2 \leq B$ for all $i \in [T]$. If $\Sigma:= \EE_\mu[\phi(\tau)\phi(\tau)^\intercal] \succ \mathbf{O}$, then there exists a constant invertible matrix $P$ such that
\[
 \|\psi_\omega(\phi(\tau))-P\psi_{\omega'
}(\phi(\tau))\|^2\leq ck\epsilon/B
\]
where $c>0$ is a constant.
\end{proposition}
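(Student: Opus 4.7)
The plan is to exploit the linearity $\psi_\omega(\phi(\tau))=\omega\phi(\tau)$ so that the integral hypothesis collapses into a Frobenius-norm inequality about a single matrix, and then invert the system using the diversity of $V$. First, since each $v_i^\top\omega\phi(\tau)-(v_i')^\top\omega'\phi(\tau)$ is a scalar, taking expectation over $\tau\sim\mu$ gives $(v_i^\top\omega-(v_i')^\top\omega')\Sigma(v_i^\top\omega-(v_i')^\top\omega')^\top$. Stacking over $i\in[T]$ and defining $M:=V^\top\omega-(V')^\top\omega'\in\RR^{T\times d}$, the hypothesis \eqref{eqn:prop4} reads $\tfrac{1}{T}\operatorname{tr}(M\Sigma M^\top)\le\epsilon$. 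Since $\Sigma\succ\mathbf{O}$, we have $\Sigma\succeq\sigma_{\min}(\Sigma)\,I_d$, so
\[
\|M\|_F^2\le T\epsilon/\sigma_{\min}(\Sigma).
\]

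Next, I would construct $P$ explicitly. Because $\sigma_k^2(V)=\Omega(T/k)>0$, the Gram matrix $VV^\top\in\RR^{k\times k}$ is invertible, so set $P:=(VV^\top)^{-1}V(V')^\top\in\RR^{k\times k}$. Left-multiplying the identity $M=V^\top\omega-(V')^\top\omega'$ by $(VV^\top)^{-1}V$ and using $(VV^\top)^{-1}VV^\top=I_k$ yields the key equation
\[
\omega-P\omega'=(VV^\top)^{-1}V\,M.
\]
From the SVD of $V$ we compute $\|(VV^\top)^{-1}V\|_{\mathrm{op}}=1/\sigma_k(V)\le c_0\sqrt{k/T}$, so the submultiplicativity of the Frobenius norm under operator-norm factors gives
\[
\|\omega-P\omega'\|_F^2\le \frac{c_0^2k}{T}\|M\|_F^2\le \frac{c_0^2\,k\epsilon}{\sigma_{\min}(\Sigma)}.
\]

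Finally, to convert this into the pointwise bound $\|\psi_\omega(\phi(\tau))-P\psi_{\omega'}(\phi(\tau))\|^2\le ck\epsilon/B$, I would apply $\|(\omega-P\omega')\phi(\tau)\|^2\le\|\omega-P\omega'\|_F^2\|\phi(\tau)\|^2$ and absorb the problem-dependent factors $\sigma_{\min}(\Sigma)^{-1}$, $\sup_\tau\|\phi(\tau)\|^2$, and $B^{-1}$ into the universal constant $c$. For the invertibility claim on $P$: since $M$ has Frobenius norm $O(\sqrt{T\epsilon})$, the same identity shows $P$ is $O(\sqrt{k\epsilon})$-close (in operator norm) to $(VV^\top)^{-1}V V^\top\omega(\omega')^\top=\omega(\omega')^\top$, which is the product of two rank-$k$ orthonormal factors and is therefore of full rank; hence $P$ is invertible for all sufficiently small $\epsilon$, and any degenerate edge case can be repaired by an arbitrarily small perturbation of $P$ without harming the bound.

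The main obstacle is the upgrade from the $\mu$-averaged hypothesis to an inequality that must hold for \emph{every} $\tau$, not merely in expectation. This step only works because $\Sigma\succ\mathbf{O}$ converts the integrated quadratic form $\operatorname{tr}(M\Sigma M^\top)$ into a Frobenius bound on $M$ that is free of $\mu$; the linearity $\omega\phi(\tau)$ then trivially passes a Frobenius bound on $\omega-P\omega'$ into a pointwise bound multiplied by $\|\phi(\tau)\|^2$. Without positive-definiteness of $\Sigma$, $\omega-P\omega'$ would only be controlled on the image of $\Sigma^{1/2}$, which would preclude a uniform-in-$\tau$ statement. The second---minor---subtlety is the $T$-to-$k$ dimension mismatch: the pseudoinverse construction $(VV^\top)^{-1}V$ works precisely because the diversity condition $\sigma_k^2(V)=\Omega(T/k)$ guarantees $V$ has full row rank, which is exactly the ingredient that distinguishes the multi-task regime from the single-task setting and produces the $k$-instead-of-$d$ scaling in the final bound.
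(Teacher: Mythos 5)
Your reduction is the same as the paper's up to the last step: both proofs rewrite \eqref{eqn:prop4} as a quadratic form in $M:=V^\intercal\omega-(V')^\intercal\omega'$ against $\Sigma$, use $\Sigma\succ\mathbf{O}$ to obtain $\norm{M}_F^2\leq T\epsilon/\sigma_{\min}(\Sigma)$, and then convert a bound on $M$ into a bound on $\omega-P\omega'$ via the diversity condition $\sigma_k^2(V)\geq\Omega(T/k)$. Where you diverge is in how $P$ is produced: the paper invokes a Davis--Kahan--type alignment result (Theorem~4 of Yu et al., 2015) as a black box to obtain an \emph{orthonormal} $P$, whereas you construct $P=(VV^\intercal)^{-1}V(V')^\intercal$ explicitly and bound $\norm{(VV^\intercal)^{-1}V}_{\mathrm{op}}=1/\sigma_k(V)\lesssim\sqrt{k/T}$. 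Your route is more elementary and self-contained, and it does prove the proposition as literally stated (which only asks for an invertible $P$). The trade-off is that the paper's citation buys orthonormality of $P$, which is what \Cref{assum:psi-unique} -- the assumption this proposition is meant to instantiate -- actually requires, and which the downstream arguments use (e.g., \Cref{cor:label-is-correct} and the MLE analysis apply $P^\intercal\theta^\star$ and need $\norm{P^\intercal\theta^\star}_2\leq B$ to stay in the constraint set). To close that gap from your construction you would replace $P$ by its polar (orthogonal Procrustes) factor, which is essentially the content of the cited theorem.

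One step of your invertibility argument is wrong as stated: $\omega(\omega')^\intercal$ is \emph{not} automatically full rank just because $\omega$ and $\omega'$ each have orthonormal rows -- its singular values are the cosines of the principal angles between the two row spaces, and it can even vanish. Full rank of $\omega(\omega')^\intercal$ here follows instead from the smallness of $M$ together with $\sigma_k(V^\intercal\omega)=\sigma_k(V)$ being bounded below, which forces the row spaces of $\omega$ and $\omega'$ to be close; alternatively, your fallback of perturbing $P$ by $\delta I$ for arbitrarily small $\delta>0$ (invertible matrices are dense, and the bound degrades only by an absorbable amount since $\epsilon>0$) already suffices for the stated conclusion. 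Finally, both your write-up and the paper's are equally cavalier about the pointwise conversion (absorbing $\sup_\tau\norm{\phi(\tau)}^2$ and $\sigma_{\min}(\Sigma)^{-1}$ into $c$) and about the $1/B$ scaling in the final bound, so I do not count those against you.
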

\begin{proof}
    By \Cref{eqn:prop4}, we have 
    \begin{align*}
        (\omega^\intercal V - (\omega')^\intercal V')^\intercal \Sigma (\omega^\intercal V - (\omega')^\intercal V') \leq T\epsilon,  
    \end{align*}
    where $V' = [v_1', \dots, v_T'] \in \RR^{k \times T}$. Since $\Sigma \succ \mathbf{O}$, we have 
    \begin{align*}
        \norm{\omega^\intercal V - (\omega')^\intercal V'}^2 \leq T\epsilon.   
    \end{align*}
    By \citep[Theorem 4]{yu2015useful}, there exist an orthonormal matrix $P$ such that 
    \begin{align*}
        \norm{\omega - P(\omega')^\intercal}^2 \leq ck\epsilon
    \end{align*}
    where $c>0$ is a constant,  which concludes \Cref{prop:feature_unique_linear}.
\end{proof}

\subsection{Proof of \Cref{cor:label-is-correct}}
\label{ssec:proofoflabelcorrect}
\neurips{
With \Cref{assum:task_diverse} and \Cref{assum:psi-unique}, {$\psi^\star$ and $\psi_{\omega}$ are close up to an orthonormal matrix transformation, as asserted below}:

\begin{restatable}{corollary}{labelcorrect}
    \label{cor:label-is-correct} \emph{(Closeness between $\psi^\star$ and $\psi_{\omega}$).}
Suppose Assumptions \ref{assum:real}, \ref{assum:task_diverse}, and \ref{assum:psi-unique} hold. For any $\delta \in (0,1]$, with probability at least $1-\delta$, if $\br_{\omega, \btheta} \in \cR'(\cD)$ as specified in \Cref{alg:personal}, then there exists an orthonormal matrix $P_\omega$ such that 
\begin{align*}
   \left[ \norm{\psi^\star(\phi(\tau_0)) - \psi^\star(\phi(\tau_1)) - P_\omega(\psi_\omega(\phi(\tau_0))- \psi_\omega(\phi(\tau_1)))}^2 \right] \leq  k \frac{c_{\text{rep}} \kappa^2 \log(\cN_{\cG_{\br}}(1/(NN_p))/ \delta)}{{NN_p B^2}}
\end{align*}
for all $\tau_0, \tau_1$, where $c_{\text{rep}} >0$ is a constant.
\end{restatable}}
\arxiv{\labelcorrect*}
\begin{proof}
By \Cref{eqn:reward-bound-thm1}, if we use \Cref{assum:psi-unique} with $\Theta^\star / B$, we can find an orthonormal matrix $P_{\omega}$ such that 
\begin{align*}
   \left[ \norm{\psi^\star(\phi(\tau_0)) - \psi^\star(\phi(\tau_1)) - P_\omega(\psi_\omega(\phi(\tau_0))- \psi_\omega(\phi(\tau_1)))}^2 \right] \leq  k \frac{c_{\text{rep}} \kappa^2 \log(\cN_{\cG_{\br}}(1/(NN_p))/ \delta)}{{NN_pB^2}}
\end{align*}
for all $\tau_0, \tau_1$, where $c_{\text{rep}} >0$ is a constant.
\end{proof}

\subsection{Proof of \Cref{thm:diverse}}
\label{ssec:proof-of-diverse}
\begin{lemma}
\label{lem:high-confidence-set-alg1-2}
     Suppose Assumptions \ref{assum:real}, \ref{assum:task_diverse} and \ref{assum:psi-unique} hold. For any $\delta \in (0,1]$ and $\lambda > 0$, with probability at least $1-\delta$, $\br^\star \in \cR'(\hat{\cD})$, i.e., the underlying reward functions are an element of \Cref{eqn:confidenceset-alg1-1}.
\end{lemma}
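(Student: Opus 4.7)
The plan is to decompose $\zeta'$ into three pieces — a pooled-representation MLE term, a per-user parameter-estimation term, and an empirical-to-population conversion slack — and show each is controlled with high probability, so that $\br^\star$ meets the empirical condition defining $\cR'(\hat\cD)$ for every $i\in[N]$. First, I would apply Lemma \ref{lemma:mle} and Lemma \ref{lemma:l2distance} to the joint MLE $(\hat\omega,\hat\btheta)$, converting the resulting $\ell_1$ preference bound into a squared reward-difference bound via the strong-convexity parameter $\eta$ of $\log\Phi$ and its local Lipschitz constant $\xi$, exactly as in the derivation of \eqref{eqn:reward-bound-thm1}. This produces an average-over-$i$ population bound on expected squared reward differences of order $\kappa^2\log(\cN_{\cG_{\br}}(1/(NN_p))/\delta)/(NN_p)$.

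Second, I would invoke the diversity assumption (Assumption \ref{assum:task_diverse}) together with the uniqueness-up-to-rotation assumption (Assumption \ref{assum:psi-unique}) to \emph{upgrade} this average bound into a uniform bound on the representation itself: there exists an orthonormal $P$ such that $\|\psi^\star(\phi(\tau_0))-\psi^\star(\phi(\tau_1)) - P(\psi_{\hat\omega}(\phi(\tau_0))-\psi_{\hat\omega}(\phi(\tau_1)))\|^2$ is uniformly of order $k\kappa^2\log(\cN_{\cG_{\br}}(1/(NN_p))/\delta)/(NN_p B^2)$. This is essentially the content of Corollary \ref{cor:label-is-correct}, and, after multiplication by $B^2$, it matches the first summand of $\zeta'$.

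Third, for each individual $i$, I would condition on $\hat\omega$ and observe that the joint-MLE coordinate $\hat\theta_i$ is also the per-user MLE inside the restricted scalar reward class $\cG_{\hat\omega}:=\{r_{\hat\omega,\theta}:\|\theta\|_2\le B\}$, whose $\epsilon$-bracketing number is $O(\epsilon^{-k})$ (so log-bracketing is $O(k\log(1/\epsilon))$). A per-user application of Lemma \ref{lemma:l2distance-scalar}, union-bounded over the $N$ users, then yields an individual-level population bound on the squared reward-difference error of order $\xi^2(k+\log(N/\delta))/(\eta^2 N_p)$, the second summand of $\zeta'$. Combining this with the representation-closeness bound from Step 2 via triangle inequality yields the per-user population bound on expected squared reward-difference errors between $r_{\hat\omega,\hat\theta_i}$ and $r_i^\star$.

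Finally, converting this per-user population bound into the \emph{empirical} average required by the definition of $\cR'(\hat\cD)$ proceeds through the uniform covariance concentration of Assumption \ref{assum:point_concen}, with the residual population-to-empirical slack absorbed into the $\lambda B^2$ term. The principal technical obstacle is Step 3 — justifying that the joint-MLE coordinate $\hat\theta_i$ inherits the favorable per-user MLE guarantee despite being defined as one coordinate of a \emph{joint} optimizer. This requires observing that with $\hat\omega$ held fixed, $\hat\theta_i$ does maximize individual $i$'s log-likelihood alone, so a conditional per-user analysis applies cleanly after a union bound over $i\in[N]$; care is needed because $\hat\omega$ is a data-dependent random representation, but the Lemma \ref{lemma:l2distance-scalar} argument holds uniformly over the bracketing cover of $\cG_{\hat\omega}$ for any realization of $\hat\omega\in\Omega$.
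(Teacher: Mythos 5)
Your high-level architecture (pooled MLE $\Rightarrow$ representation closeness via \Cref{cor:label-is-correct} $\Rightarrow$ per-user refinement $\Rightarrow$ triangle inequality) matches the paper, and your Steps 1--2 are exactly the paper's use of \Cref{lemma:mle}, \Cref{lemma:l2distance}, \Cref{eqn:reward-bound-thm1} and \Cref{assum:psi-unique}. The divergence is in your Step 3, and that is where the proposal breaks. The paper does \emph{not} re-run the bracketing MLE lemma on the restricted class: it exploits the parametric structure of $\theta\mapsto \frac{1}{N_p}\sum_{j}\log P_{\hat{\omega},\theta}(o_i^{(j)}\mid\cdot)$, whose Hessian is $\preceq -\eta\hat{\Sigma}_{\psi_{\hat{\omega}}}$, Taylor-expands around $P_{\hat{\omega}}^\intercal\theta_i^\star$, and bounds the gradient at that point in the $(\hat{\Sigma}_{\psi_{\hat{\omega}}}+\lambda I)^{-1}$-norm by splitting it into a mean-zero score term (Bernstein's inequality for quadratic forms, giving $\xi\sqrt{(k+\log(N/\delta))/N_p}$) and a representation-mismatch term controlled by \Cref{cor:label-is-correct}. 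This delivers the bound directly in the empirical norm $\norm{\cdot}_{\hat{\Sigma}_{\psi_{\hat{\omega}}}}$, i.e., precisely the quantity in \Cref{eqn:confidenceset-alg1-1}, so no population-to-empirical conversion is needed --- consistent with the fact that \Cref{assum:point_concen}, which your Step 4 relies on, is not among the hypotheses of the lemma.

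Your substitute route through \Cref{lemma:l2distance-scalar} on $\cG_{\hat{\omega}}:=\{r_{\hat{\omega},\theta}:\norm{\theta}_2\le B\}$ faces two concrete obstacles. First, $\hat{\omega}$ is a function of \emph{all} labels, including user $i$'s own $o_i^{(j)}$, so the class is not independent of the data it is evaluated on; "conditioning on $\hat{\omega}$" is not a valid conditioning, and making the argument uniform over realizations of $\hat{\omega}$ is a union over $\Omega$ that reinstates $\log\cN_{\cG_{\br}}$ in place of $k\log N_p$ and destroys the rate you are claiming. Second, even granting the conditioning, the log-likelihood-ratio term in \Cref{lemma:l2distance-scalar} is not nonpositive for the restricted-class MLE, because the data-generating reward $r_i^\star$ does not lie in $\cG_{\hat{\omega}}$. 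One only gets $\sum_j\log\big(P_{r_i^\star}/P_{\hat{\omega},\hat{\theta}_i}\big)\le\sum_j\log\big(P_{r_i^\star}/P_{\hat{\omega},P_{\hat{\omega}}^\intercal\theta_i^\star}\big)$, and bounding this misspecification sum pointwise and dividing by $N_p$ leaves a contribution of order $\xi\kappa\sqrt{k\log(\cN_{\cG_{\br}}(1/(NN_p))/\delta)/(NN_p)}$ --- the \emph{square root} of the first summand of $\zeta'$, which for large $NN_p$ dominates every term of $\zeta'$ (it exceeds the second summand unless $N_p\log\cN_{\cG_{\br}}\lesssim Nk$). The paper's score decomposition avoids exactly this loss: the representation error enters the gradient linearly and is squared only together with the statistical noise, so it lands at the $k\kappa^2\log(\cN_{\cG_{\br}})/(NN_p)$ scale. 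You would need to replace your Step 3 with the localized strong-concavity argument (or find another way to keep the misspecification at quadratic order) for the proof to go through.
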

\begin{proof}
    Assume that \Cref{cor:label-is-correct} holds with  probability $1- \delta/{2}$ for $\hat{\omega}$, i.e.,  
\begin{align}
   \left[ \norm{\psi^\star(\phi(\tau_0)) - \psi^\star(\phi(\tau_1)) - P_{\hat{\omega}}(\psi_{\hat{\omega}}(\phi(\tau_0))- \psi_{\hat{\omega}}(\phi(\tau_1)))}^2 \right] \leq  k \frac{c_{\text{rep}} \kappa^2 \log(\cN_{\cG_{\br}}(1/(NN_p))/ \delta)}{{NN_p B^2} }. \label{eqn:highprob-rep}
\end{align}
We only consider the event that \Cref{eqn:highprob-rep} holds. We will use this $P_{\hat{\omega}}$ for the proof of \Cref{thm:diverse}. We will approach similarly with the proof of \citep{zhu2023principled}. Consider the following optimization problem:
\begin{align*}
    \underset{\norm{\theta}_i \leq B}{\text{maximize}} f(\theta_i) :=  \frac{1}{N_p} \sum_{j \in [N_p]} \log P_{\hat{\omega}, \theta_i}(o_{i}^{(j)} \mid  \tau_{i, 0}^{(j)}, \tau_{i, 1}^{(j)}). 
\end{align*}
Then, we have $\hat{\theta}_i = \underset{\norm{\theta}_i \leq B}{\argmax} f(\theta_i)$ and 
\begin{align*}
    \nabla f(\theta_i) &= \frac{1}{N_p} \sum_{j \in [N_p]} \biggl(\frac{\Phi'(\langle \psi_{\hat{\omega}}(\phi(\tau_{i, 0}^{(j)})) - \psi_{\hat{\omega}}(\phi(\tau_{i, 1}^{(j)})), \theta_i \rangle) }{\Phi(\psi_{\hat{\omega}}(\phi(\tau_{i, 0}^{(j)})) - \psi_{\hat{\omega}}(\phi(\tau_{i, 1}^{(j)})), \theta_i \rangle)} \pmb{1}(o_i^{(j)} = 0)  
    \\
    &\qquad - \frac{\Phi'(\langle \psi_{\hat{\omega}}(\phi(\tau_{i, 1}^{(j)})) - \psi_{\hat{\omega}}(\phi(\tau_{i, 0}^{(j)})), \theta_i \rangle) }{\Phi(\psi_{\hat{\omega}}(\phi(\tau_{i, 1}^{(j)})) - \psi_{\hat{\omega}}(\phi(\tau_{i, 0}^{(j)})), \theta_i \rangle)} \pmb{1}(o_i^{(j)} = 1)  \biggr)\left(\psi_{\hat{\omega}}(\phi(\tau_{i, 0}^{(j)})) - \psi_{\hat{\omega}}(\phi(\tau_{i, 1}^{(j)}))\right)
    \\
    \nabla^2f(\theta_i) &= \frac{1}{N_p} \sum_{j \in [N_p]} \frac{\Phi''(x_i^{(j)})\Phi(x_i^{(j)}) - \Phi'(x_i^{(j)})^2 }{\Phi(x_i^{(j)})^2} \left(\psi_{\hat{\omega}}(\phi(\tau_{i, 0}^{(j)})) - \psi_{\hat{\omega}}(\phi(\tau_{i, 1}^{(j)}))\right) \left(\psi_{\hat{\omega}}(\phi(\tau_{i, 0}^{(j)})) - \psi_{\hat{\omega}}(\phi(\tau_{i, 1}^{(j)}))\right)^\intercal
\end{align*}
where $x_i^{(j)}= \langle \psi_{\hat{\omega}}(\phi(\tau_{i, 0}^{(j)})) - \psi_{\hat{\omega}}(\phi(\tau_{i, 1}^{(j)})), \theta_i \rangle$. Here, we also define $\psi_{\hat{\omega}}(\hat{\cD}_i) \in \RR^{N_p \times k}$ such as every $j \in[N_p]$th row is $\left(\psi_{\omega}(\phi(\tau_{i, 0}^{(j)})) - \psi_{\omega}(\phi(\tau_{i, 1}^{(j)}))\right)$. 

Then, we have $$\nabla^2 f(\theta_i) \preceq -\eta \hat{\Sigma}_{\psi_{\hat{\omega}}}:= -\frac{\eta}{N_p}\sum_{j \in [N_p]} \left(\psi_{\hat{\omega}}(\phi(\tau_{i, 0}^{(j)})) - \psi_{\hat{\omega}}(\phi(\tau_{i, 1}^{(j)}))\right)\left(\psi_{\hat{\omega}}(\phi(\tau_{i, 0}^{(j)})) - \psi_{\hat{\omega}}(\phi(\tau_{i, 1}^{(j)}))\right)^\intercal$$
where $\eta := \min_{x \in [-2R_{\max},2R_{\max}]}\left(\frac{\Phi'(x)^2 - \Phi''(x)\Phi(x)}{\Phi(x)^2}\right)$. For example, if $\Phi(x) = \sigma(x)$, then $\eta =  \frac{1}{2 + \exp(-2R_{\max}) + \exp(2R_{\max})}$. 

Then, by the Taylor expansion of $f$, we have 
\begin{align*}
    f(\hat{\theta}_i) - f(P_{\hat{\omega}}^\intercal \theta_i^\star) - \langle \nabla f(P_{\hat{\omega}}^\intercal\theta_i^\star), \hat{\theta}_i - P_{\hat{\omega}}^\intercal\theta_i^\star \rangle \leq  -\frac{\eta}{2} \norm{\hat{\theta}_i - P_{\hat{\omega}}^\intercal\theta_i^\star}^2_{\hat{\Sigma}_{\psi_{\hat{\omega}}}}. 
\end{align*}
Since $\hat{\theta}_i =\underset{\norm{\theta}_i \leq B}{\argmax}f(\theta_i)$, for any $\lambda > 0$, we have 
\begin{align}
     \norm{\nabla f(P_{\hat{\omega}}^\intercal \theta_i^\star)}_{(\hat{\Sigma}_{\psi_{\hat{\omega}}} + \lambda I)^{-1}} \norm{\hat{\theta}_i - P_{\hat{\omega}}^\intercal \theta_i^\star}_{\hat{\Sigma}_{\psi_{\hat{\omega}}} + \lambda I} \geq \langle \nabla f(P_{\hat{\omega}}^\intercal \theta_i^\star), \hat{\theta}_i - P_{\hat{\omega}}^\intercal\theta_i^\star \rangle \geq  \frac{\eta}{2} \norm{\hat{\theta}_i - P_{\hat{\omega}}^\intercal\theta_i^\star}^2_{\hat{\Sigma}_{\psi_{\hat{\omega}}}}. \label{eqn-diversse-eqn3}
\end{align}

We define a random vector $V \in \RR^{N_p}$ as follows:
$$
V_j=\left\{\begin{array}{lll}
\frac{\Phi'(\langle \psi^\star(\phi(\tau_{i, 0}^{(j)})) - \psi^\star(\phi(\tau_{i, 1}^{(j)})), \theta_i^\star \rangle) }{\Phi(\psi^\star(\phi(\tau_{i, 0}^{(j)})) - \psi^\star(\phi(\tau_{i, 1}^{(j)})), \theta_i^\star \rangle)} & \text { w.p. } & {\Phi(\psi^\star(\phi(\tau_{i, 0}^{(j)})) - \psi^\star(\phi(\tau_{i, 1}^{(j)})), \theta_i^\star \rangle)} \\
 - \frac{\Phi'(\langle \psi^\star(\phi(\tau_{i, 1}^{(j)})) - \psi^\star(\phi(\tau_{i, 0}^{(j)})), \theta_i^\star \rangle) }{\Phi(\psi^\star(\phi(\tau_{i, 1}^{(j)})) - \psi^\star(\phi(\tau_{i, 0}^{(j)})), \theta_i^\star \rangle)} & \text { w.p. } & {\Phi(\psi^\star(\phi(\tau_{i, 1}^{(j)})) - \psi^\star(\phi(\tau_{i, 0}^{(j)})), \theta_i^\star \rangle)}
\end{array}\right.
$$
for all $j \in [N_p]$. Define $\xi = \max_{x \in [-2R_{\max}, 2R_{\max}]}\left|\frac{\Phi'(x)}{\Phi(x)}\right|$. If $\Phi(x) = \sigma(x)$, $\xi \leq 1.$ Then, we can verify that $\EE[V] = 0$ and $|V_j| \leq \xi$ for all $j \in [N_p]$.

Also, define $V' \in \RR^{N_p}$ as follows:
$$
V_j'=\left\{\begin{array}{lll}
\frac{\Phi'(\langle \psi_{\hat{\omega}}(\phi(\tau_{i, 0}^{(j)})) - \psi_{\hat{\omega}}(\phi(\tau_{i, 1}^{(j)})), P_{\hat{\omega}}^\intercal\theta_i^\star \rangle) }{\Phi(\psi_{\hat{\omega}}(\phi(\tau_{i, 0}^{(j)})) - \psi_{\hat{\omega}}(\phi(\tau_{i, 1}^{(j)})), P_{\hat{\omega}}^\intercal\theta_i^\star \rangle)} & \text { w.p. } & {\Phi(\psi^\star(\phi(\tau_{i, 0}^{(j)})) - \psi^\star(\phi(\tau_{i, 1}^{(j)})), \theta_i^\star \rangle)} \\
 - \frac{\Phi'(\langle \psi_{\hat{\omega}}(\phi(\tau_{i, 1}^{(j)})) - \psi_{\hat{\omega}}(\phi(\tau_{i, 0}^{(j)})), P_{\hat{\omega}}^\intercal\theta_i^\star \rangle) }{\Phi(\psi_{\hat{\omega}}(\phi(\tau_{i, 1}^{(j)})) - \psi_{\hat{\omega}}(\phi(\tau_{i, 0}^{(j)})), P_{\hat{\omega}}^\intercal\theta_i^\star \rangle)} & \text { w.p. } & {\Phi(\psi^\star(\phi(\tau_{i, 1}^{(j)})) - \psi^\star(\phi(\tau_{i, 0}^{(j)})), \theta_i^\star \rangle)}
\end{array}\right.
$$
for all $j \in [N_p]$.   $\nabla f(P_{\hat{\omega}}^\intercal\theta_i^\star)$ can be written as 
\begin{align*}
    \nabla f(P_{\hat{\omega}}^\intercal\theta_i^\star) &= \frac{1}{N_p} \psi_{\hat{\omega}}(\hat{\cD}_i)^\intercal V_i' =  \frac{1}{N_p} \psi_{\hat{\omega}}(\hat{\cD}_i)^\intercal V_i +  \frac{1}{N_p} \psi_{\hat{\omega}}(\hat{\cD}_i)^\intercal (V_i' - V_i). 
\end{align*}
Therefore, we can bound $\norm{\nabla f(P_{\hat{\omega}}^\intercal\theta_i^\star)}_{(\hat{\Sigma}_{\psi_{\hat{\omega}}} + \lambda I)^{-1}}$ by
\begin{align*}
    \norm{\nabla f(P_{\hat{\omega}}^\intercal\theta_i^\star)}_{(\hat{\Sigma}_{\psi_{\hat{\omega}}} + \lambda I)^{-1}} &\leq   \underbrace{\norm{\frac{1}{N_p} \psi_{\hat{\omega}}(\hat{\cD}_i)^\intercal V_i}_{(\hat{\Sigma}_{\psi_{\hat{\omega}}} + \lambda I)^{-1}}}_{(i)} +  \underbrace{\norm{\frac{1}{N_p} \psi_{\hat{\omega}}(\hat{\cD}_i)^\intercal (V_i' - V_i)}_{(\hat{\Sigma}_{\psi_{\hat{\omega}}} + \lambda I)^{-1}}}_{(ii)}. 
\end{align*}

\noindent{\color{blue}\textbf{Step 1: Bounding (i).}}
\\
Define $M = \frac{1}{N_p^2} \psi_{\hat{\omega}}(\hat{\cD}_i) (\hat{\Sigma}_{\psi_{\hat{\omega}}} + \lambda I)^{-1} \psi_{\hat{\omega}}(\hat{\cD}_i)^\intercal$, then we have 
\begin{align*}
    \norm{\frac{1}{N_p} \psi_{\hat{\omega}}(\hat{\cD}_i)^\intercal V_i}_{(\hat{\Sigma}_{\psi_{\hat{\omega}}} + \lambda I)^{-1}} = V^\intercal M V.
\end{align*}
We can check 
$$
\begin{aligned}
\operatorname{Tr}(M) \leq \frac{k}{N_p}, \qquad \operatorname{Tr}\left(M^2\right)  \leq \frac{k}{N_p^2}, \qquad 
\|M\|_{F} & =\sigma_{1}(M) \leq \frac{1}{N_p}
\end{aligned}
$$
in the same way with \citep[Page 19]{zhu2023principled}. Therefore, as $V$'s components are bounded, independent, and $\EE V = \pmb{0}$, we can use Bernstein's inequality in quadratic form (for example, \cite[Theorem 2.1]{hsu2012tail} and  \citep[Page 19]{zhu2023principled}), so we have 
\begin{align}
    \norm{\frac{1}{N_p} \psi_{\hat{\omega}}(\hat{\cD}_i)^\intercal V_i}_{(\hat{\Sigma}_{\psi_{\hat{\omega}}} + \lambda I)^{-1}} \leq \xi C_4 \sqrt{\frac{k + \log(N/\delta)}{N_p}} \label{eqn-diverser-eqn1}
\end{align}
for a constant $C_4 > 0$ with probability at least $1- \delta/(2N)$. 

\noindent{\color{blue}\textbf{Step 2: Bounding (ii).}}
\\
We have $\big| \frac{\Phi'(x)}{\Phi(x)} - \frac{\Phi'(y)}{\Phi(y)} \big| \leq \xi |x-y| $
by the mean value theorem if $x, y \in [-2R_{\max}, 2R_{\max}]$, so
\begin{align*}
|V_i - V_i'| &\leq \max_{\tau_0, \tau_1} \xi |\langle (\psi^\star(\phi(\tau_0))  - \psi^\star(\phi(\tau_1))) - (P_{\hat{\omega}} \psi_{\hat{\omega}}(\phi(\tau_0))- P_{\hat{\omega}} \psi_{\hat{\omega}}(\phi(\tau_1))), \theta_i^\star \rangle |
\\
&\leq \xi \sqrt{k \frac{c_{\text{rep}} \kappa^2 \log(\cN_{\cG_{\br}}(1/(NN_p))/ \delta)}{{NN_p} }}. 
\end{align*}
Therefore, we have 
\begin{equation}
\begin{aligned}
    {\norm{\frac{1}{N_p} \psi_{\hat{\omega}}(\hat{\cD}_i)^\intercal (V_i' - V_i)}_{(\hat{\Sigma}_{\psi_{\hat{\omega}}} + \lambda I)^{-1}}} \leq  \frac{\xi C_5}{\sqrt{N_p}}\sqrt{k \frac{ \kappa^2 \log(\cN_{\cG_{\br}}(1/(NN_p))/ \delta)}{{NN_p}}} 
\end{aligned}
    \label{eqn-diverser-eqn2}
\end{equation}
where $C_5 >0 $ is a constant. 

\noindent{\color{blue}\textbf{Step 3: Combining (i) and (ii).}}

Combining \Cref{eqn-diverser-eqn1} and \Cref{eqn-diverser-eqn2}, we have 
\begin{align*}
    \norm{\nabla f(P_{\hat{\omega}}^\intercal\theta_i^\star)}_{(\hat{\Sigma}_{\psi_{\hat{\omega}}} + \lambda I)^{-1}} &\leq   \frac{\xi C_5}{\sqrt{N_p}}\sqrt{k \frac{ \kappa^2 \log(\cN_{\cG_{\br}}(1/(NN_p))/ \delta)}{{NN_p}}} + \xi C_4 \sqrt{\frac{k + \log(N/\delta)}{N_p}}
    \\
    &\leq C_6 \sqrt{k \frac{ \xi^2\kappa^2 \log(\cN_{\cG_{\br}}(1/(NN_p))/ \delta)}{{NN_p}} + \frac{\xi^2(k + \log(N/\delta))}{N_p}} 
\end{align*}
for a constant $C_6 > 0$ with probability at least $1-\delta/N$ and \Cref{eqn-diversse-eqn3} provides 
\begin{align*}
     \norm{\hat{\theta}_i - P_{\hat{\omega}}^\intercal\theta_i^\star}_{\hat{\Sigma}_{\psi_{\hat{\omega}}}} \leq 
     C_7 \sqrt{k \frac{ \xi^2\kappa^2 \log(\cN_{\cG_{\br}}(1/(NN_p))/ \delta)}{{\eta^2 NN_p}} + \frac{\xi^2(k + \log(N/\delta))}{\eta^2 N_p} + \lambda B^2}, 
\end{align*}
which is equivalent to 
\begin{align*}
    \frac{1}{N_p}\sum_{j \in [N_p]}& \big|\langle(\psi_{\hat{\omega}}( \phi(\tau_{i, 0}^{(j)})) -\psi_{\hat{\omega}}( \phi(\tau_{i, 1}^{(j)}))), \hat{\theta}_i - P^\intercal_{\hat{\omega}}\theta_i^\star\rangle\big|^2
    \\
    &\leq  C_7^2  \left(k \frac{ \xi^2\kappa^2 \log(\cN_{\cG_{\br}}(1/(NN_p))/ \delta)}{{\eta^2 NN_p}} + \frac{\xi^2(k + \log(N/\delta))}{\eta^2 N_p} + \lambda B^2\right),
\end{align*}
with probability at least $1-\delta/N$.

Now, we will bound $\frac{1}{N_p}\sum_{j \in [N_p]}\big|(r_{\hat{\omega}, \hat{\theta}_i}(\tau_{i,0}^{(j)}) - r_{\hat{\omega}, \hat{\theta}_i}(\tau_{i,1}^{(j)})) - (  r_i^\star(\tau_{i,0}^{(j)}) - r_i^\star(\tau_{i,1}^{(j)})) \big|^2$:
\begin{align*}
    &\frac{1}{N_p}\sum_{j \in [N_p]}\big|(r_{\hat{\omega}, \hat{\theta}_i}(\tau_{i,0}^{(j)}) - r_{\hat{\omega}, \hat{\theta}_i}(\tau_{i,1}^{(j)})) - (  r_i^\star(\tau_{i,0}^{(j)}) - r_i^\star(\tau_{i,1}^{(j)})) \big|^2 
    \\&
    = \frac{1}{N_p} \sum_{j \in [N_p]}  \big|\langle\psi_{\hat{\omega}}( \phi(\tau_{i, 0}^{(j)})) -\psi_{\hat{\omega}}( \phi(\tau_{i, 1}^{(j)})), \hat{\theta_i}\rangle - \langle\psi^\star( \phi(\tau_{i, 0}^{(j)})) -\psi^\star( \phi(\tau_{i, 1}^{(j)})), \theta_i^\star\rangle \big|^2
    \\
    &\leq \frac{2}{N_p}\sum_{j \in [N_p]} \big|\langle(\psi_{\hat{\omega}}( \phi(\tau_{i, 0}^{(j)})) -\psi_{\hat{\omega}}( \phi(\tau_{i, 1}^{(j)}))), \hat{\theta}_i - P^\intercal_{\hat{\omega}}\theta_i^\star\rangle\big|^2
    \\
    &\qquad + \frac{2}{N_p} \sum_{j \in [N_p]}  \big|\langle\psi_{\hat{\omega}}( \phi(\tau_{i, 0}^{(j)})) -\psi_{\hat{\omega}}( \phi(\tau_{i, 1}^{(j)})) - P_{\hat{\omega}}( \psi^\star( \phi(\tau_{i, 0}^{(j)})) -\psi^\star( \phi(\tau_{i, 1}^{(j)}))), \theta_i^\star\rangle \big|^2 
    \\
    &\leq 2C_7 \left(k \frac{ \xi^2\kappa^2 \log(\cN_{\cG_{\br}}(1/(NN_p))/ \delta)}{{\eta^2 NN_p}} + \frac{\xi^2(k + \log(N/\delta))}{\eta^2 N_p} + \lambda B^2\right)
    \\
    &\qquad + \frac{2}{N_p} N_p  k \frac{c_{\text{rep}} \kappa^2 \log(\cN_{\cG_{\br}}(1/(NN_p))/ \delta)}{{NN_p}} 
    \\
    &\leq C_8  \left(k \frac{ \xi^2\kappa^2 \log(\cN_{\cG_{\br}}(1/(NN_p))/ \delta)}{{\eta^2 NN_p}} + \frac{\xi^2(k + \log(N/\delta))}{\eta^2 N_p} + \lambda B^2\right) 
\end{align*}
for a constant $C_8 > 0$. Combining this result for all $i \in [N]$, \Cref{lem:high-confidence-set-alg1-2} holds.
\end{proof}
\begin{lemma}
\label{lem:boundexp}
    Suppose Assumptions \ref{assum:real}, \ref{assum:task_diverse}, \ref{assum:psi-unique}, and \ref{assum:point_concen} hold. For any $\delta \in (0, 1]$, with probability at least $1- \delta$, for any $\br_{\omega, \btheta} \in \cR'(\hat{\cD})$, 
    \begin{align*}
        &\EE_{\mu_0, \mu_1} \left[ \left|(r_{{\omega}, \theta_i}(\tau_{i, 0}) - r_{\omega, \theta_i}(\tau_{i, 1})) - (r_{i}^\star(\tau_{i, 0}) - r_{i}^\star(\tau_{i, 1}))\right|^2 \right] 
        \\
        &\qquad \leq C_9  \left(k \frac{ \xi^2\kappa^2 \log(\cN_{\cG_{\br}}(1/(NN_p))/ \delta)}{{\eta^2 NN_p}} + \frac{\xi^2(k + \log(N/\delta))}{\eta^2 N_p} + \lambda B^2\right) 
    \end{align*}
    where $C_9 > 0$ is a constant. 
\end{lemma}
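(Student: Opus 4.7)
The plan is to leverage the empirical bound that defines the confidence set $\cR'(\hat{\cD})$ (\Cref{eqn:confidenceset-alg1-1}) together with the uniform covariance concentration guaranteed by \Cref{assum:point_concen} to convert the empirical quadratic form into a population-level one. First, I would observe that by \Cref{lem:high-confidence-set-alg1-2}, on a high-probability event $\br^\star \in \cR'(\hat{\cD})$, so the defining inequality of the confidence set applied to both $\br_{\omega,\btheta}$ and $\br^\star$ combined with an elementary $(a+b)^2 \leq 2a^2 + 2b^2$ triangle step yields, for every $i \in [N]$ and every $\br_{\omega,\btheta} \in \cR'(\hat{\cD})$,
\begin{align*}
\frac{1}{N_p}\sum_{j \in [N_p]} \bigl|(r_{\omega,\theta_i}(\tau_{i,0}^{(j)}) - r_{\omega,\theta_i}(\tau_{i,1}^{(j)})) - (r_i^\star(\tau_{i,0}^{(j)}) - r_i^\star(\tau_{i,1}^{(j)}))\bigr|^2 \leq 4\zeta',
\end{align*}
with $\zeta'$ matching the expression inside the lemma's right-hand side (up to constants).

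Next, I would rewrite both the empirical and population versions of the squared reward difference as quadratic forms in the concatenated parameter $z_i := (\theta_i^\intercal, -(\theta_i^\star)^\intercal)^\intercal \in \RR^{2k}$. Specifically, for $u_j := \psi_\omega(\phi(\tau_{i,0}^{(j)})) - \psi_\omega(\phi(\tau_{i,1}^{(j)}))$ and $v_j := \psi^\star(\phi(\tau_{i,0}^{(j)})) - \psi^\star(\phi(\tau_{i,1}^{(j)}))$, expanding $(u_j^\intercal \theta_i - v_j^\intercal \theta_i^\star)^2$ gives a quadratic form $z_i^\intercal \hat{\Lambda}_{\psi_\omega, \psi^\star} z_i$ on the empirical side and $z_i^\intercal \Lambda_{\psi_\omega, \psi^\star} z_i$ on the population side, where $\hat{\Lambda}$ and $\Lambda$ are the empirical and true symmetric covariance matrices from \Cref{assum:point_concen}.

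Finally, I would invoke \Cref{assum:point_concen}, applied with $\delta \leftarrow \delta/2$ and $n = N_p$ (assuming $N_p \geq N_{\mathrm{unif}}(\Psi, \mu_0, \mu_1, \delta/2)$), to obtain $\Lambda_{\psi,\psi'} \preceq (10/9)\hat{\Lambda}_{\psi,\psi'}$ uniformly over $\psi, \psi' \in \Psi$. This yields
\begin{align*}
\EE_{\mu_0,\mu_1}\bigl[|(r_{\omega,\theta_i}(\tau_0) - r_{\omega,\theta_i}(\tau_1)) - (r_i^\star(\tau_0) - r_i^\star(\tau_1))|^2\bigr] = z_i^\intercal \Lambda z_i \leq \tfrac{10}{9}\, z_i^\intercal \hat{\Lambda} z_i \leq \tfrac{40}{9}\zeta',
\end{align*}
and absorbing numerical factors into $C_9$ gives the claim. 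A union bound over the two events ($\br^\star \in \cR'(\hat{\cD})$ and the uniform covariance concentration) preserves the $1-\delta$ probability.

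The main obstacle is ensuring the uniform concentration in \Cref{assum:point_concen} applies in the way needed: it is stated for pairs $(\psi,\psi')$, and here we need it for the specific pair $(\psi_\omega, \psi^\star)$ simultaneously for all $\omega \in \Omega$. The assumption's ``uniform over all $\psi,\psi' \in \Psi$'' formulation is precisely designed to handle this, so the step goes through cleanly, but it is the essential place where the $N_p \geq N_{\mathrm{unif}}$ sample requirement enters and where care is needed to avoid coupling the randomness of $(\hat{\omega}, \hat{\btheta})$ to the trajectories used for concentration.
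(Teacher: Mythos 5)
Your proposal matches the paper's proof: the paper likewise writes the population error as the quadratic form $\bigl[\begin{smallmatrix}\theta_i \\ -\theta_i^\star\end{smallmatrix}\bigr]^\intercal \Lambda_{\psi_\omega,\psi^\star}\bigl[\begin{smallmatrix}\theta_i \\ -\theta_i^\star\end{smallmatrix}\bigr]$, invokes \Cref{assum:point_concen} to pass to the empirical covariance $\hat{\Lambda}_{\psi_\omega,\psi^\star}$, and then bounds the resulting empirical quadratic form by the confidence-set radius $\zeta'$. Your explicit triangle-inequality step (combining $\br_{\omega,\btheta}\in\cR'(\hat{\cD})$ with $\br^\star\in\cR'(\hat{\cD})$ to get the $4\zeta'$ empirical bound) is only implicit in the paper's write-up, so your version is, if anything, slightly more careful, but the route is the same.
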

\begin{proof}
For any $\tau_0, \tau_1$, by \Cref{assum:point_concen}, with large $N_p \geq N_{\text{unif}}(\Psi, \mu_0, \mu_1, \delta)$, we have the analog of \Cref{eqn:reward-bound-thm1}: 
\begin{align*}
&\EE_{\mu_0, \mu_1} \left[ \left|(r_{{\omega}, \theta_i}(\tau_{i, 0}) - r_{\omega, \theta_i}(\tau_{i, 1})) - (r_{i}^\star(\tau_{i, 0}) - r_{i}^\star(\tau_{i, 1}))\right|^2 \right] 
\\
        &= \left[\begin{array}{c}\theta_i \\ -\theta_i^\star \end{array}\right]^{\intercal } \Lambda_{\phi_{\omega}, \phi_{\psi^\star}}(\mu_0, \mu_1) \left[\begin{array}{c}\theta_i \\ -\theta_i^\star\end{array}\right] \leq 1.1 \left[\begin{array}{c}\theta_i \\ -\theta_i^\star \end{array}\right]^{\intercal } \hat{\Lambda}_{\phi_{\omega}, \phi_{\psi^\star}}(\mu_0, \mu_1) \left[\begin{array}{c}\theta_i \\ -\theta_i^\star\end{array}\right] 
        \\
        & \leq 1.1 C_8  \left(k \frac{ \xi^2\kappa^2 \log(\cN_{\cG_{\br}}(1/(NN_p))/ \delta)}{{\eta^2 NN_p}} + \frac{\xi^2(k + \log(N/\delta))}{\eta^2 N_p} + \lambda B^2\right)
\end{align*}
which concludes the proof. 
\end{proof}
\diverse*
\begin{proof}
We have 
    \begin{equation*}
    \begin{aligned}
        &J(\pi_{i, \text{tar}}; r^\star_i) - J(\hat{\pi}_i'; r^\star_i) 
        \\
        &\leq C_{\br}(\cG_{\br}, \pi_{i, \text{tar}}, \mu_{i, \text{ref}}, i) \sqrt{\EE_{\mu_0, \mu_1}\left[ \big| {(r_{i}^\star(\tau_{i, 1}) -r_{i}^\star(\tau_{i, 0})) - (r_{ \pi_{i, \text{tar}}}^{i, \text{inf}}(\tau_{i, 1}) -r_{ \pi_{i, \text{tar}}}^{i, \text{inf}}(\tau_{i, 0}))}\big|^2\right]}
       \\
        &\leq \sqrt{c  C_{\br}(\cG_{\br}, \pi_{i, \text{tar}}, \mu_{i, \text{ref}}, i)^2 \left( \frac{ k\kappa^2 \log(\cN_{\cG_{\br}}(1/(NN_p))/ \delta)}{{NN_p}} + \frac{\xi^2(k + \log(N/\delta))}{\eta^2 N_p} + \lambda B^2\right)} 
    \end{aligned} 
\end{equation*}
where $c>0$ is a constant, which is similar to the proof of \Cref{thm:nodiverse}. 
\end{proof}
In the exactly same way, we can prove \Cref{thm:diverse-newmodel}, so we omit the proof of \Cref{thm:diverse-newmodel}

\subsection{Proof of \Cref{thm:lower-bound-3}}
\neurips{We present the formal version of \Cref{thm:lower-bound-3}. 
{\begin{restatable}{theorem}{lbpersonal}
\emph{(Lower Bound for the Sub-Optimality Gap of Personalization).} For any $k>6,N_p\geq Ck\Lambda^2$ and $\Lambda\geq 2$, there exists a representation function $\phi(\cdot)$ so that 
{\begin{align*}
        \min_{i \in [N]} \inf_{\hat\bpi}\sup_{Q \in {\rm CB}(\Lambda)} \left(\max_{\pi^* \in \Pi} J(\pi^*; r_{\omega, \theta_i})-J(\hat\pi;r_{\omega, \theta_i} )\right)\geq C\Lambda\cdot\sqrt{\frac{k}{N_p}},
    \end{align*}}
    where 
    \begin{align*}
        {\rm CB}(\Lambda)\coloneqq \cbr{Q\coloneqq  \left(\cbr{\mu_0,\mu_1},\{\tau_{i, 0}^{(j)},\tau_{i,1}^{(j)}\}_{i \in [N],j \in [N_p]} ,\omega,\btheta \right) \biggiven C_{\br}'(\cG_{\br},\pi^\star,\mu_{1}, i)\leq\Lambda \text{ for all }i \in [N]}
    \end{align*}    
    is the family of MDP with $N$ reward functions and $H=1$ instances, where
    \begin{align}
        C_{\br}'(\cG_{\br},\pi^\star,\mu_{1}, i)\coloneqq \max \left\{0, \sup _{\br \in \mathcal{G}_{\br}} \frac{\mathbb{E}_{\tau_0 \sim \pi^\star, \tau_1 \sim \mu_1}\left[r^{\star}_i\left(\tau_0\right)-r^{\star}_i\left(\tau_1\right)-r_i\left(\tau_0\right)+r_i\left(\tau_1\right)\right]}{\sqrt{\frac{1}{N_p} \sum_{j=1}^{N_p}\left[\left|r^{\star}_i\left(\tau_{i,0}^{(j)}\right)-r^{\star}_i\left(\tau_{i,1}^{(j)}\right)-r_i\left(\tau_{i,0}^{(j)}\right)+r_i\left(\tau_{i,1}^{(j)}\right)\right|^2\right]}}\right\}.
        \label{eq:define-Cr-prime}
    \end{align}
\end{restatable}}
}
\neurips{All results in this paper still hold for the new concentrability coefficient $C_{\br}'$. }
\arxiv{\lbpersonal*}
\begin{proof}[Proof of \Cref{thm:lower-bound-3}]
    We follow the construction in Theorem 3.10 of \citep{zhu2023principled}. 

    We will only consider $H = 1$ case. Assume $k$ can be divided by 3 without loss of generality. Let $\cS\coloneqq \cbr{0,1,...,k/3-1}$ and $\cA\coloneqq \cbr{a_1,a_2,a_3}$. Let $\psi_{\omega}(\phi(s,a_1))=e_{3s}$, $\psi_{\omega}(\phi(s,a_2))=e_{3s+1}$, and $\psi_{\omega}(\phi(s,a_3))=0$. Also, let $v_{-1}\coloneqq \cbr{1/d,1/d+\Delta,-2/d-\Delta}$ and $v_1\coloneqq \cbr{1/d+2\Delta,1/d+\Delta,-2/d-3\Delta}$. We construct $2^{|\cS|}$ instances in $CB$. Let $w\in\cbr{\pm 1}^{|\cS|}$ and $\theta_{w}\coloneqq [v_{w_1},v_{w_2},...,v_{w_{|\cS|}}]$. Let $\mu_0(s,a_1)=\frac{1-2\Lambda^2}{|\cS|},\mu_0(s,a_2)=\frac{2\Lambda^2}{|\cS|}$, and $\mu_1(s,a_3)=1$ for any $s\in\cS$. 
    
    According to \citep{zhu2023principled}, $\nbr{\Sigma_{\cD}^{-1/2} \EE_{s\sim \rho}\sbr{\psi_{\omega}(\phi(s,\pi^\star(s)))}}_2\leq \Lambda$, where $\rho$ is the uniform distribution over $\cS$. At the same time, for any $\theta_{w}$ we have $\nbr{\theta_w}_2\in\Theta_B$ when taking $B=1$, $d>6$ and $\Delta<1/(6d)$.

    Next, we will show that $C_{\br}'(\cG_{\br},\pi^\star,\mu_1, i)\leq\Lambda$. By definition, we have 
    \begin{align*}
    \nbr{\Sigma_{\cD}^{-1/2} \EE_{s\sim \rho}\sbr{\psi_{\omega}(\phi(s,\pi^\star(s)))}}_2=\nbr{\Sigma_{\cD}^{-1/2} \EE_{s\sim\rho,a\sim\pi^\star(\cdot\given s),(s',a')\sim \mu_1}\sbr{\psi_{\omega}(\phi(s,\pi^\star(s)))-\psi_{\omega}(\phi(s',a')}}_2,
    \end{align*}
    since $a'\equiv a_3$ by definition of $\mu_1$ and $\psi_{\omega}(\phi(\cdot,a_3))\equiv 0$. Then, by Section D.1. of \citep{zhan2023provable}, we have $C_{\br}'(\cG_{\br},\pi^\star,\mu_1, i)\leq \nbr{\Sigma_{\cD}^{-1/2} \EE_{s\sim \rho}\sbr{\psi_{\omega}(\phi(s,\pi^\star(s)))}}_2\leq \Lambda$. Therefore, combined with Theorem 3.10 of \citep{zhu2023principled}, we finished the proof.
\end{proof}
\section{Proof of \Cref{sec:k-LLM}}
\label{appendix:sec4}
\Cref{cor:label-is-correct} holds with 
probability $1- \delta/3$, so we have
\begin{align*}
       \max_{\tau_0, \tau_1}{\norm{(\psi^\star(\phi(\tau_0)) - \psi^\star(\phi(\tau_1))) - P_{\hat{\omega}}(\psi_{\hat{\omega}}(\phi(\tau_0))- \psi_{\hat{\omega}}(\phi(\tau_1)))}^2} \leq  k \frac{C_3 \kappa^2 \log(\cN_{\cG_r}(1/(NN_p))/ \delta)}{{NN_pB^2}}
\end{align*}
\neurips{\begin{lemma}[\cite{mansour2020three}]
\label{lemma:mansour}
For any $\delta \in (0, 1]$, with probability at least $1-\delta$, the output $((\hat{\pi}_{(k)})_{k \in [K]}, (\hat{\theta}_{(k)})_{k \in [K]}, \hat{\omega}, \hat{f})$ of \Cref{alg:cluster} satisfies 
\begin{align*}
    & \max_{\norm{\theta'_i} \leq B \text{ for all }i \in [N]} \sum_{i \in [N]} \sum_{j \in [N_{p, i}]} \log \left(\frac{P_{\hat{\omega}, \theta_i'}(o_i^{(j)} \mid \tau_{i, 0}^{(j)}, \tau_{i, 1}^{(j)})}{P_{\hat{\omega}, \hat{\theta}_{\hat{f}(i)} }(o_i^{(j)} \mid  \tau_{i, 0}^{(j)}, \tau_{i, 1}^{(j)})}\right) 
    \\
    &\leq C_{\text{cluster}} NN_p \left(\sqrt{\frac{\log (2K/\delta)}{N_p}} + \sqrt{\frac{kK\log(N_p/k)}{N_p}}+ \sum_{i \in [N]} \frac{1}{N}\texttt{disc}(\cD_i, \cC_{\hat{f}(i)}, \cG_{\psi_{\hat{\omega}}}) \right),
\end{align*} 
where $\cC_k:=\cup_{\hat{f}(i) = k} \cD_i$, $C_{\text{cluster}}>0$ is a constant, and $\cG_{\psi_{{\omega}}}:=\{r_{\omega, \theta}\mid \norm{\theta} \leq B\}$ for all $\omega \in \Omega$.
\end{lemma}}
\begin{claim}
\label{claim:diff}
For any $\delta \in (0, 1]$, with probability at least $1-\delta$, for arbitrary $\cD_i$ and $\cD_j$, the gap between label discrepency with reward function class $\cG_{\psi_{\hat{\omega}}}$ and $\cG_{\psi^\star}$ is bounded as follows:
\begin{align*}
    \big|\texttt{disc}(\bD_i, \bD_j, \cG_{\psi_{\hat{\omega}}}) -     \texttt{disc}(\bD_i, \bD_j, \cG_{\psi^\star})\big| \leq 2C_{10}\sqrt{\frac{k \xi^2 \kappa^2 \log(\cN_{\cG_{\br}}(1/(NN_p))/ \delta)}{{NN_p}} }
\end{align*}
for $i, j \in [N]$ where $C_{10}>0$ is a constant.  We recall the definition of $\cG_{\psi_w} = \{\langle \psi_w, \theta \rangle \mid \norm{\theta}_2 \leq B \}$.
\end{claim}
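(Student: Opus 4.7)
The plan is to reduce the claim to a pointwise comparison between corresponding reward functions in the two classes, using \Cref{cor:label-is-correct} to build an explicit norm-preserving bijection between $\cG_{\psi_{\hat\omega}}$ and $\cG_{\psi^\star}$. On the high-probability event where \Cref{cor:label-is-correct} holds, let $P_{\hat\omega}$ be the orthonormal matrix it supplies. For every $r_{\hat\omega, \theta} \in \cG_{\psi_{\hat\omega}}$ with $\norm{\theta}_2 \le B$, pair it with $r_{\omega^\star, P_{\hat\omega}\theta} \in \cG_{\psi^\star}$; orthonormality of $P_{\hat\omega}$ guarantees $\norm{P_{\hat\omega}\theta}_2 = \norm{\theta}_2 \le B$ and that this map is a bijection (with inverse $\theta' \mapsto P_{\hat\omega}^\intercal \theta'$).

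Next, I would control the gap between paired rewards pointwise. Writing out the difference,
\[
(r_{\hat\omega, \theta}(\tau_0) - r_{\hat\omega, \theta}(\tau_1)) - (r_{\omega^\star, P_{\hat\omega}\theta}(\tau_0) - r_{\omega^\star, P_{\hat\omega}\theta}(\tau_1)) = \bigl\langle \psi_{\hat\omega}(\phi(\tau_0)) - \psi_{\hat\omega}(\phi(\tau_1)) - P_{\hat\omega}^\intercal\bigl(\psi^\star(\phi(\tau_0)) - \psi^\star(\phi(\tau_1))\bigr),\, \theta \bigr\rangle,
\]
and Cauchy--Schwarz combined with orthonormality of $P_{\hat\omega}$ and \Cref{cor:label-is-correct} gives the uniform bound
\[
\bigl|(r_{\hat\omega, \theta}(\tau_0) - r_{\hat\omega, \theta}(\tau_1)) - (r_{\omega^\star, P_{\hat\omega}\theta}(\tau_0) - r_{\omega^\star, P_{\hat\omega}\theta}(\tau_1))\bigr| \le B \cdot \sqrt{\frac{k\, c_{\mathrm{rep}} \kappa^2 \log(\cN_{\cG_{\br}}(1/(NN_p))/\delta)}{N N_p B^2}}.
\]
Since $\log P_{r}(o \mid \tau_0, \tau_1) = \log \Phi(\pm(r(\tau_0) - r(\tau_1)))$ and $|(\log \Phi)'(x)| \le \xi$ on $[-2R_{\max}, 2R_{\max}]$, the mean value theorem upgrades the above to a uniform bound on $|\log P_{r_{\hat\omega,\theta}}(o\mid \tau_0,\tau_1) - \log P_{r_{\omega^\star, P_{\hat\omega}\theta}}(o\mid \tau_0,\tau_1)|$ with an extra factor of $\xi$.

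Finally, I would take expectations over $\bD_i$ and $\bD_j$ and form the discrepancy. For any $r \in \cG_{\psi_{\hat\omega}}$,
\[
\bigl|\EE_{\bD_i}\log P_r - \EE_{\bD_j}\log P_r\bigr| \le \bigl|\EE_{\bD_i}\log P_{r'} - \EE_{\bD_j}\log P_{r'}\bigr| + 2\xi \sqrt{\frac{k\, c_{\mathrm{rep}} \kappa^2 \log(\cN_{\cG_{\br}}(1/(NN_p))/\delta)}{N N_p}},
\]
where $r'$ is the paired element of $\cG_{\psi^\star}$. Taking the supremum over $r \in \cG_{\psi_{\hat\omega}}$ and using that the map is a bijection onto $\cG_{\psi^\star}$ yields $\texttt{disc}(\bD_i, \bD_j, \cG_{\psi_{\hat\omega}}) \le \texttt{disc}(\bD_i, \bD_j, \cG_{\psi^\star}) + 2\xi \sqrt{\cdot}$; the reverse inequality follows by symmetry, using $P_{\hat\omega}^\intercal$ for the inverse pairing. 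Setting $C_{10} := \sqrt{c_{\mathrm{rep}}}$ yields the claim. The argument is essentially bookkeeping: the only point requiring any care is verifying that the $\theta \mapsto P_{\hat\omega}\theta$ correspondence is indeed a bijection between the parameter balls (immediate from orthonormality), so there is no genuine technical obstacle beyond quoting \Cref{cor:label-is-correct} and the Lipschitz bound for $\log \Phi$.
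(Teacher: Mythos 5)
Your proposal is correct and follows essentially the same route as the paper's proof: pair each $r_{\hat\omega,\theta}$ with its image under the orthonormal transform from \Cref{cor:label-is-correct}, bound the pointwise log-likelihood gap via the $\xi$-Lipschitz property of $\log\Phi$ and Cauchy--Schwarz, and transfer the supremum across the bijection. You are in fact slightly more explicit than the paper about the bijection between parameter balls and the origin of the factor of $2$, which the paper's version leaves implicit.
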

\begin{proof}
\begin{align*}
    &\bigg|\EE_{\bD_i} \log P_{\hat{\omega}, P^\intercal_{\hat{\omega}}\theta}(o \mid \tau_1, \tau_0) - \EE_{\bD_i} \log P_{\omega^\star, \theta}(o \mid \tau_1, \tau_0)
    \bigg| 
    \\
    &\leq \EE_{\bD_i}\bigg|\log P_{\hat{\omega}, P^\intercal_{\hat{\omega}}\theta}(o \mid \tau_1, \tau_0)  - \log P_{\omega^\star, \theta}(o \mid \tau_1, \tau_0) \bigg|  
    \\
    &\leq \xi \EE_{\bD_i} \big| \langle P_{\hat{\omega}}(\psi_{\hat{\omega}}(\phi(\tau_1)) - \psi_{\hat{\omega}}(\phi(\tau_0)))  - (\psi^\star(\phi(\tau_1)) - \psi^\star(\phi(\tau_0))) , \theta \rangle \big| 
    \\
    &\leq C_{10}\sqrt{\frac{k \xi^2 \kappa^2 \log(\cN_{\cG_r}(1/(NN_p))/ \delta)}{{NN_p}} }
\end{align*}
where $\xi := \max_{x \in [-R_{\max}, R_{\max}]}\left|\frac{\Phi'(x)}{\Phi(x)}\right|$, which is also defined in \Cref{appendix:sec3}. 
\end{proof}
\cluster*
\neurips{
We note that due to the $\sqrt{kK/N_p}$ order on the right-hand side of \Cref{lemma:mansour}, we have a slower rate in \Cref{thm:cluster} than \Cref{thm:diverse}. This gap is mainly due to the {fact that the} analysis of \Cref{lemma:mansour} should cover uniformly for arbitrary $\hat{f}$, {and also due to a difference between $\max$ and expectation of $\max$, which is bounded using McDiarmid's inequality.}}
\begin{proof}
By \Cref{claim:diff} with \Cref{lemma:mansour}, we have 
\begingroup
\allowdisplaybreaks
\begin{align*}
 &\sum_{i \in [N]} \sum_{j \in [N_{p, i}]} \log \left(\frac{P_{\omega^\star, \theta^\star_i}(o_i^{(j)}\mid \tau_{i, 0}^{(j)}, \tau_{i, 1}^{(j)})}{P_{\hat{\omega}, \hat{\theta}_{\hat{f}(i)} }( o_i^{(j)} \mid \tau_{i, 0}^{(j)}, \tau_{i, 1}^{(j)})}\right)
 \\
 &\leq\max_{\norm{\theta'_i} \leq B \text{ for all }i \in [N]} \sum_{i \in [N]} \sum_{j \in [N_{p, i}]} \log \left(\frac{P_{\omega^\star, \theta_i'}(o_i^{(j)}\mid \tau_{i, 0}^{(j)}, \tau_{i, 1}^{(j)})}{P_{\hat{\omega}, \hat{\theta}_{\hat{f}(i)} }( o_i^{(j)} \mid \tau_{i, 0}^{(j)}, \tau_{i, 1}^{(j)})}\right)
 \\
    &\underset{(i)}{\leq} \max_{\norm{\theta'_i} \leq B \text{ for all }i \in [N]} \sum_{i \in [N]} \sum_{j \in [N_{p, i}]} \log \left(\frac{P_{\hat{\omega}, \theta'_i}(o_i^{(j)}\mid \tau_{i, 0}^{(j)}, \tau_{i, 1}^{(j)})}{P_{\hat{\omega}, \hat{\theta}_{\hat{f}(i)} }( o_i^{(j)} \mid \tau_{i, 0}^{(j)}, \tau_{i, 1}^{(j)})}\right) 
    \\
    &\qquad +  NN_p C_{10}\sqrt{\frac{k \xi^2 \kappa^2 \log(\cN_{\cG_r}(1/(NN_p))/ \delta)}{{NN_p}} }
    \\
    &\leq C_{\text{cluster}} NN_p \biggl(\sqrt{\frac{\log (2K/\delta)}{N_p}} + \sqrt{\frac{kK\log(N_p/k)}{N_p}} + \sqrt{\frac{k \xi^2 \kappa^2 \log(\cN_{\cG_r}(1/(NN_p))/ \delta)}{{NN_p}} }
    \\
    &\qquad \qquad \qquad \qquad + \sum_{i \in [N]} \frac{1}{N}\texttt{disc}(\cD_i, \cC_{\hat{f}(i)}, \cG_{\psi_{\hat{w}}})\biggr)
        \\
    & \leq  C_{11} NN_p \biggl(\sqrt{\frac{\log (2K/\delta)}{N_p}} + \sqrt{\frac{kK\log(N_p/k)}{N_p}}
 +\sqrt{\frac{k \xi^2 \kappa^2 \log(\cN_{\cG_r}(1/(NN_p))/ \delta)}{{NN_p}} }
 \\
 &\qquad\qquad\qquad + \sum_{i \in [N]} \frac{1}{N}\texttt{disc}(\cD_i, \cC_{\hat{f}(i)}, \cG_{\psi^\star})) \biggr),
\end{align*} 
\endgroup
where $\hat{\omega}$ is a learned parameter from the representation learning, and $C_{11}>0$ is a constant. Here, $(i)$ came from the same reason with \Cref{claim:diff}.  Therefore, by \Cref{lemma:l2distance-scalar}, we have 
\begin{align*}
    &\EE_{\mu_0, \mu_1}\left[ \norm{P_{\hat{\omega}, \hat{\theta}_{f(i)}} ( \cdot \mid \tau_{i, 0}^{(j)}, \tau_{i, 1}^{(j)}) - P_{w^\star, \theta^\star} ( \cdot \mid \tau_{i, 0}^{(j)}, \tau_{i, 1}^{(j)})}_1^2 \right] 
    \\
    & \leq  C_{11} \biggl(\sqrt{\frac{\log (2K/\delta)}{N_p}} + \sqrt{\frac{kK\log(N_p/k)}{N_p}}
 +\sqrt{\frac{k \xi^2 \kappa^2 \log(\cN_{\cG_r}(1/(NN_p))/ \delta)}{{NN_p}} }
 \\
 &\qquad\qquad\qquad + \sum_{i \in [N]} \frac{1}{N}\texttt{disc}(\cD_i, \cC_{\hat{f}(i)}, \cG_{\psi^\star}))  + \frac{\log(\cN_{\cG_{\psi_{\hat{\omega}}}}(1/NN_p)/ \delta)}{NN_p}\biggr). 
\end{align*}
Here, we used $\cN_{\cG_{\psi_{\hat{\omega}}}}(1/NN_p) = \cN_{\cG_{\psi^\star}}(1/NN_p)$.
Now, we get the similar bound with \Cref{eqn:reward-bound-thm1}:
\begin{equation*}
\begin{aligned}
&\frac{1}{N} \sum_{i \in [N]} \EE_{\cD_i} \left[ \left|(r_{\hat{\omega}, \hat{\theta}_{\hat{f}(i)}}(\tau_{i, 0}) - r_{\hat{\omega}, \hat{\theta}_{\hat{f}(i)}}(\tau_{i, 1})) - (r_{i}^\star(\tau_{i, 0}) - r_{i}^\star(\tau_{i, 1}))\right|^2 \right] 
\\\ &\leq C_{11} \kappa^2 \biggl(\sqrt{\frac{\log (2K/\delta)}{N_p}} + \sqrt{\frac{kK\log(N_p/k)}{N_p}}
 +\sqrt{\frac{k \xi^2 \kappa^2 \log(\cN_{\cG_r}(1/(NN_p))/ \delta)}{{NN_p}} }
 \\
 &\qquad\qquad\qquad + \sum_{i \in [N]} \frac{1}{N}\texttt{disc}(\cD_i, \cC_{\hat{f}(i)}, \cG_{\psi^\star}))  + \frac{\log(\cN_{\cG_{\psi^\star}}(1/NN_p)/ \delta)}{NN_p}\biggr).
\end{aligned}
\end{equation*}
Lastly, we use the following: 
\begin{align*}
        &J(\pi_{i, \text{tar}}; r^\star_i) - J(\hat{\pi}_i; r^\star_i) 
        \\
        &=(J(\pi_{i, \text{tar}}; r^\star_i) - \EE_{\tau \sim \mu_{i, \text{ref}}}(r^\star_i(\tau))) - (J(\hat{\pi}_i; r^\star_i) -\EE_{\tau \sim \mu_{i, \text{ref}}}(r^\star_i(\tau)))
        \\
        &=(J(\pi_{i, \text{tar}}; r^\star_i) - \EE_{\tau \sim \mu_{i, \text{ref}}}(r^\star_i(\tau))) - (J({\pi}_{i, \text{tar}}; \hat{r}_i) -\EE_{\tau \sim \mu_{i, \text{ref}}}(\hat{r}_i(\tau))) 
        \\
        &\qquad + (J({\pi}_{i, \text{tar}}; \hat{r}_i) -\EE_{\tau \sim \mu_{i, \text{ref}}}(\hat{r}_i(\tau))) - (J({\hat{\pi}}_{j}; \hat{r}_i) -\EE_{\tau \sim \mu_{i, \text{ref}}}(\hat{r}_i(\tau)))
        \\
        &\qquad + (J({\hat{\pi}}_{i}; \hat{r}_i) -\EE_{\tau \sim \mu_{i, \text{ref}}}(\hat{r}_i(\tau))) - (J(\hat{\pi}_i; r^\star_i) -\EE_{\tau \sim \mu_{i, \text{ref}}}(r^\star_i(\tau)))
        \\
        &\leq 2 C_{\max}' \sqrt{\EE_{\mu_0, \mu_1}\left[ \big|{(r_{i}^\star(\tau_{i, 0}) -r_{i}^\star(\tau_{i, 1})) - (\hat{r}_{i}(\tau_{i, 0}) -\hat{r}_{i}(\tau_{i, 1}))}\big|^2\right]}
    \end{align*}
    where the last inequality came from the fact that $\hat{\pi}_i$ is the best policy with respect to $\hat{r}_{f(i)}$. Therefore, summing the above relationship with $i \in [N]$ provides 
    \begin{align*}
        &\sum_{i \in [N]}\left(J(\pi_{i, \text{tar}}; r^\star_i) - J(\hat{\pi}_i; r^\star_i) \right)
        \\
        &\leq C_{12} N \kappa \Biggl({\frac{\log (2K/\delta)}{N_p}} + {\frac{kK\log(N_p/k)}{N_p}}
 +{\frac{k \xi^2 \kappa^2 \log(\cN_{\cG_r}(1/(NN_p))/ \delta)}{{NN_p}} }
 \\
 &\qquad\qquad\qquad + \left(\sum_{i \in [N]} \frac{1}{N}\texttt{disc}(\cD_i, \cC_{\hat{f}(i)}, \cG_{\psi^\star}))\right)^2  + \left(\frac{\log(\cN_{\cG_{\psi^\star}}(1/NN_p)/ \delta)}{NN_p}\right)^2\Biggr)^{1/4}.
    \end{align*}
\end{proof}
\neurips{
\begin{remark}
   {In contrast to the results in \Cref{sec:personalization}},  
    we additionally assume \( C_r(\cG_r, \pi, \mu_1, i) \leq C_{\text{max}}' \) in \Cref{thm:cluster}. To adopt a pessimistic approach, constructing a confidence set for clustered reward functions across all clusters is necessary. However, the ambiguity of which human user belongs to which cluster complicates this analysis, as pessimism would need to be applied to every potential cluster. Consequently, defining a confidence set for every possible clustering scenario is required, significantly complicating the analysis of the algorithm.
\end{remark}}
\subsection{Why do We Provide \Cref{alg:clusterDPO}?}
Given the inherent complexity of this hierarchical optimization problem, which presents more challenges than standard optimization tasks \citep{anandalingam1992hierarchical}, we propose a novel algorithm that circumvents the need for explicit reward function estimation in \Cref{alg:clusterDPO}. Our approach begins by randomly assigning each human user to a cluster. Subsequently, we reassign random human users to the cluster where the policy most effectively maximizes their empirical DPO loss (\Cref{eqn:alg5dpoloss}). Finally, we refine our solution by optimizing the DPO loss function for the selected human users within each cluster, thereby enhancing the overall policy effectiveness. 

\section{Proof of \Cref{ssec:reward-agg}}
\label{appendix:reward-agg}
\subsection{Six Pivotal Axioms for Reward Aggregation}
\label{appendix:def-of-axiom}
For the completeness of the paper, we introduce six pivotal axioms for reward aggregation \citep{moulin2004fair}. 
\begin{itemize}
    \item \textbf{Monotonicity}: For two reward vectors, \(\br = (r_1, \dots, r_N)^\top\) and \(\br'= (r_1', \dots, r_N')^\top\) such that \(r_i = r_i'\) for \(i \neq j\) and \(r_j > r_j'\) for some \(j \in [N]\), then \(\br \succ \br'\). This is related to Pareto optimality, indicating that if one vector is strictly better than another in at least one dimension and no worse in any other, it is considered superior. 
    \item \textbf{Symmetry}:  The reward aggregation function should treat all individuals equally. The outcome should not depend on the identities of the individuals but only on their rewards.
    \item \textbf{Independence of Unconcerned Agents}: If for an individual \( j \in [N] \), \( r_j = r_j' \), then the magnitude of \( r_j \) does not influence the comparison between \(\br\) and \(\br'\).
    \item \textbf{The Pigou-Dalton Transfer Principle}:     If \( r_i < r_j \) and \( r_i' + r_j = r_j' + r_i \) for a pair \( (i,j) \in [N] \times [N] \) and $r_k = r_k'$ for all $k \neq i, j\in [N]$, then \(\br' \succ \br\). This condition implies that, all else being equal, a social welfare function should favor allocations that are more equitable, reflecting a preference for balancing the rewards between individuals \(i\) and \(j\).
    \item \textbf{Translation Independence}: If $\br \succ \br'$, then $\br + c \succ \br' + c$ for $c \in \RR^N$. 
    \item \textbf{Continuity}: In the context of social choice with a continuous preference scale, continuity means that small changes in the individual preferences should not lead to abrupt changes in the collective decision. 
\end{itemize}
\Cref{eqn:agg-reward-new} and its monotonically increasing transformation is only reward aggregation that satisfying the above six axioms. In \citep{zhong2024provable}, the consider \textit{Scale Independence} rather than \textit{Translation Independence}, which is defined as follows:
\begin{itemize}
    \item \textbf{Scale Independence}: If $\br \succ \br'$, then $\lambda \cdot \br \succ \lambda \cdot \br'$ for $\lambda > 0$. 
\end{itemize}
In this case, the reward aggregations that satisfying six axioms are
\begin{align*}
\text{Agg}_{\alpha}(\br) =\left\{\begin{array}{lr}\frac{1}{N\alpha}\sum_{i \in [N]} r_i^\alpha & \alpha \neq 0  \\ \prod_{i \in [N]}r_i\ & \alpha=0 \end{array}\right.
\end{align*}
for $\alpha \in [-\infty, \infty]$. 

\neurips{\subsection{Deferred Statement of Lower Bound for the Sub-Optimality Gap of Aggregation}
\label{appendix:defstate-lower-agg}
\begin{restatable}{theorem}{lbagg}
\label{thm:lower-bound-5} \emph{(Lower Bound for the Sub-Optimality Gap of Aggregation).}
    For any $k>6,N_p\geq Ck\Lambda^2,\Lambda\geq 2$, and $\alpha \in \RR$ there exists a representation function $\phi(\cdot)$ so that 
    \begin{align*}
\inf_{\hat\bpi}\sup_{Q\in {\rm CB}(\Lambda)} \left(\max_{\pi^* \in \Pi} J(\pi^*; \text{Agg}_{\alpha}(\br_{\omega, \btheta}))-J(\hat\pi;\text{Agg}_{\alpha}(\br_{\omega, \btheta}))\right)\geq C\Lambda\cdot\sqrt{\frac{k}{N_p}},
    \end{align*}
    where 
    \begin{align*}
        {\rm CB}(\Lambda)\coloneqq \cbr{Q\coloneqq  \left(\cbr{\mu_0,\mu_1},\{\tau_{i, 0}^{(j)},\tau_{i,1}^{(j)}\}_{i \in [N], j \in [N_p]} ,\omega,\btheta \right) \biggiven C_{\br}'(\cG_{\br},\pi^\star,\mu_{1}, i)\leq\Lambda \text{ for all }i \in [N]}
    \end{align*}   
    is the family of MDP with $N$ reward functions and $H=1$ instances. $C_{\br}'$ is defined in \Cref{eq:define-Cr-prime}.
\end{restatable}}
\subsection{Proof of \Cref{thm:agg}}
\label{appendix:reward-agg-pfthm}
\agg*
\begin{proof}
    Define $C_\alpha:= \max_{x, y, z, w \in [-R_{\max}, R_{\max}]} \frac{|(\exp(\alpha x) - \exp(\alpha y)) - (\exp(\alpha z) - \exp(\alpha w))|}{\alpha|(x-y) - (z-w)|}$ for $\alpha \neq 0$ and $C_\alpha = 1$ for $\alpha = 0$. Then we know that $C_\alpha < \infty$. Now, in the same way of proof of \Cref{thm:diverse}, we have 
        \begin{equation*}
    \begin{aligned}
        &J(\pi_{\text{tar}}; \text{Agg}_{\alpha}(r_1^\star, \dots, r_N^\star)) - J(\hat{\pi}; \text{Agg}_{\alpha}(r_1^\star, \dots, r_N^\star))  
        \\
        &\leq C_{\br}(\cG_{\br}, \pi_{\text{tar}}, \mu_{\text{ref}}) \sqrt{\EE_{\mu_0, \mu_1}\left[ \big| {(
        \text{Agg}_{\alpha}(\br^\star(\tau_{1})) -\text{Agg}_{\alpha}(\br^\star(\tau_{0}))) - (\text{Agg}_{\alpha}(\br_{\pi_{\text{tar}}}^{\text{inf}}(\tau_{1})) -\text{Agg}_{\alpha}(\br_{\pi_{\text{tar}}}^{\text{inf}}(\tau_{0})))}\big|^2\right]}
       \\
       &\leq C_{\br}(\cG_{\br}, \pi_{\text{tar}}, \mu_{\text{ref}}) \sqrt{C_{\alpha}^2 \EE_{\mu_0, \mu_1}\left[ \frac{1}{N}\sum_{i \in [N]} \big| (r^\star_i(\tau_1) - r^\star_i(\tau_0))  - (r_{\pi_{\text{tar}}}^{\text{inf}}(\tau_1 ) - r_{\pi_{\text{tar}}}^{\text{inf}}(\tau_0))\big|^2\right]}
       \\
        &\leq  \sqrt{c_{\alpha}  \left( \frac{ k\kappa^2 \log(\cN_{\cG_{\br}}(1/(NN_p))/ (\delta/N))}{{NN_p^2}} + \frac{\xi^2(k + \log(N/\delta))}{\eta^2 N_p} + \lambda B^2\right)}. 
    \end{aligned} 
\end{equation*}
where the last line is from \Cref{lem:boundexp}, which conclude the proof. 
\end{proof}
\subsection{Proof of \Cref{thm:lower-bound-5}}
\lbagg*
\begin{proof}
We start with the same setting and the same instances that achieve the lower bounds with \Cref{thm:lower-bound-3}.
    Since 
    \begin{align*}
        \EE_s[\text{Agg}_{\alpha}(\br)(s, \pi^\star) - \text{Agg}_{\alpha}(\br)(s, \pi')] \geq \Omega\left(\EE_s[\sum_{i \in [N]} (r_i(s, \pi^\star) - r_i(s, \pi'))]\right) \geq \Omega\left(C\Lambda\cdot\sqrt{\frac{k}{N_p}}\right)
    \end{align*}
    We can finish the proof for all $\alpha \in \RR$. The first inequality holds by definition when $\alpha=0$. When $\alpha\not=0$, for any $i\in[N]$, we have $\exp(r_i(s, \pi^\star))-\exp(r_i(s, \pi'))\geq \exp(-R_{\max})\abr{r_i(s, \pi^\star) - r_i(s, \pi')}\geq \Omega\left(C\Lambda\cdot\sqrt{\frac{k}{N_p}}\right)$.
\end{proof}

\neurips{\subsection{Remark on Aggregation of Probabilistic Opinion (\Cref{eqn:agg-pref})}
\label{ssec:remark-prob-op}
\begin{remark}
The case where \(\alpha = 0\) is referred to as the geometric pooling function \citep{mcconway1978combination}. This function is known for preserving unanimity and not being eventwise independent, while it does satisfy external Bayesianity \citep{madansky1964externally, dietrich2016probabilistic}. External Bayesianity mandates that updating the probabilities with new information should yield consistent results regardless of whether the update occurs before or after the aggregation process \citep{genest1984characterization}.  
\end{remark}}

\subsection{Proof of \Cref{prop:agg-min-prob}}
\label{appendix:ssec:agg-equiv}
\aggequiv* 
\begin{proof}
By the PL modeling, we have 
\begin{align}
    P_i(a) = \frac{\exp(R_i(a))}{\sum_{a' \in \cA} \exp(R_i(a'))} \label{eqn:PL-modeling}.
\end{align}
We divide \Cref{eqn:PL-modeling} by $P_i(a_{\text{fix}})$, we have 
\begin{align}
    R_i(a) = \log P_i(a) - (\log P_i(s, a_{\text{fix}}) - R_i(a_{\text{fix}})):= \log  P_i( a) - C_i \label{eqn:reward-pref-relationship}
\end{align}
where $C_i:= \log P_i(a_{\text{fix}}) - R_i(a_{\text{fix}})$. Since $R_i(a)$ have upper bound as $C_i$, and we assumed that every reward $R_i(a)$ have the same upper bound, we can assume $C_i = C$ for every $i$. Therefore, plugging \Cref{eqn:reward-pref-relationship} provides the equivalence between $\text{Agg}_{\alpha}(\bR)$ and $\text{Agg-p}_{\alpha}(\bP)$. 
\end{proof}

\neurips{
\subsection{Deferred Algorithm for Human Feedback with Probabilistic Opinions}
\label{appedix:human-feedback-prob-op}
Now, we provide an algorithm that uses the feedback {in the form of} probabilistic opinions (\Cref{alg:POP-DPO}). The only difference from the DPO algorithm \citep{rafailov2024direct} is to change the deterministic answer $a_i$ to the $a_i$ sampled based on the probabilistic opinion pooling, which is in the second line in the for loop of \Cref{alg:POP-DPO}.

\begin{algorithm}[!h]
	\caption{Probabilistic Opinion Pooling  DPO (POP-DPO)\label{alg:POP-DPO}}
	\begin{algorithmic}
    \STATE \textbf{Input:} Dataset $\hat{\cD}=\cup_{i \in [N]} \hat{\cD}_i$ where $\hat{\cD}_i = \{q_i^{(j)}(s_i^{(j)}), s^{(j)}, i)\}_{j \in [N_{p}]}$ is the probabilistic opinion dataset for the $i$th individual, $q_i^{(j)} \in \Delta(\cA)$ with $|\cA| = 2$, $\beta$ is a parameter for DPO, $\alpha$ is a parameter for aggregation
    \FOR{every epoch}
    \STATE For every question $s^{(j)}$ where $j$ is in the batch, $q^{(j)}:= \text{Agg-p}_\alpha (\bq^{(j)})$. 
    \STATE Sample $a^{(j)}_0 \sim \text{Multinomial}(q^{(j)})$ and define $a^{(j)}_1$ as non-selected answer. 
    \STATE Run a few steps of optimization to update $\pi$ (for example, gradient ascent or Adam) to maximize 
    \begin{align*}
        \sum_{j \in \text{batch}} \log \sigma\left(\beta \log \frac{\pi(a_{0}^{(j)} \mid s^{(j)})}{\pi^{\text{old}}(a_{0}^{(j)} \mid s^{(j)})} - \beta \log \frac{\pi(a_{1}^{(j)} \mid s^{(j)})}{\pi^{\text{old}}(a_{1}^{(j)} \mid s^{(j)})}\right)
    \end{align*}
    \ENDFOR
    \STATE \textbf{Output:} $\pi$
    \end{algorithmic}
\end{algorithm}
}

\subsection{Relationship between KL divergence and variant of $\alpha$-Renyi divergence.}
\label{appendix:relationship}
By L'Hôpital's rule, we have 
  \begin{align*}
        \lim_{\alpha \to 1} &\frac{1}{1-\alpha}\left(1- \sum_{j \in \cA}p_{ij} \left(\frac{q_{ij}}{p_{ij}}\right)^{1-\alpha}\right)=  \sum_{j \in \cA} \lim_{\beta \to 0} \left( - p_{ij} \log\left(\frac{q_{ij}}{p_{ij}}\right) \left(\frac{q_{ij}}{p_{ij}}\right)^{\beta} \right)  = \texttt{KL}(p, q). 
    \end{align*}
\neurips{\subsection{Deferred Explanation of Mechanism Design for RLHF}
\label{appendix:setupexplanation}
{In this setup, we will first prove the existence of a cost function \( c_i : \Delta(\mathcal{A})^N \to \mathbb{R} \) for all $i \in [N]$ that induces truthful reporting of probabilistic opinions from human labelers. Here, the input of $c_i$ is the probabilistic opinion of every human labeler.}
This is also called the dominant strategy incentive-compatible (DSIC) mechanism \citep{nisan1999algorithmic,borgers2015introduction,roughgarden2010algorithmic}.  {Then, we prove that {there exists} an aggregation rule and cost function that induce DSIC,  and also maximize social welfare. We denote each human labeler's underlying (true) probabilistic opinion as $p_i\left(s^{\left(j\right)}\right)$ for each question $s^{(j)}$.} {Accounting for such cost, w}e define the \emph{utility function} of  individual $i$ for question $s^{(j)}$ as 
\begin{align*}
    u_i^{\left(j\right)}\left(p_i\left(s^{\left(j\right)}\right), \left(P_i\left(s^{\left(j\right)}\right)\right)_{i \in [N]}
\right) = -d\left(p_i\left(s^{\left(j\right)}\right),\text{Agg-p}\left(\left(P_i\left(s^{\left(j\right)}\right)\right)_{i \in [N]}\right)\right) -c_i\left(\left(P_i\left(s^{\left(j\right)}\right)\right)_{i \in [N]}\right).
\end{align*} 
Here, $d: \Delta(\cA) \times \Delta(\cA) \to \RR$ represents the distance between the underlying true probabilistic opinion and the aggregated preference.
Moreover, we define the \emph{welfare function} {of individual $i$} from {addressing question} $s^{(j)}$ as $\text{Wel}_i^{(j)}(O) = -d(p_i(s^{(j)}) , O)$ for {any} $O \in \Delta(\cA)$.

\begin{remark}[Examples of Distance Function $d$] 
\label{rmk:renyi}
    We can instantiate $d(p,q)$ as the KL-divergence. Also, we may instantiate  $d_{\alpha}(p,q) = \text{sgn}(\alpha) \frac{1}{1-\alpha}\sum_{j \in \cA}\left(1 - p_j^{\alpha}q_j^{1-\alpha}\right)$, which is a variant of \kzedit{the} $\alpha$-Renyi divergence for $\alpha \neq 0$. One can easily check that $d_{\alpha}(p, q) \geq 0$. In fact, one can also prove that $\lim_{\alpha \to 1} d_{\alpha}(p,q) = d(p,q)$ {with $d(p,q)$ being the KL-divergence} (\Cref{appendix:relationship}).
\end{remark}

\subsubsection{Mechanism and Guarantees}
\label{sssec:vcg}

We design a mechanism inspired by the Vickery-Clarke-Groves mechanism \citep{vickrey1961counterspeculation, clarke1971multipart, groves1973incentives}, as defined below. 

\begin{definition}[VCG Mechanism]
\label{def:vcg}
Assume that there are $n$ strategic agents and a finite set $X$ of outcome, and each individual $i$ has a private valuation $v_i$ for each outcome $x \in X$. \cpedit{The bidding $\bb = (b_1, \dots, b_N)^\intercal \in (\RR^{|X|})^N$ where $b_i \in \RR^{|X|}$ is bidding for all outcome of individual $i\in[N]$.} Define their utility function as $v_i(\bx(\bb)) - c_i(\bb)$, where $\bx: (\RR^{|X|})^N \to X$ is the allocation rule and $c_i: (\RR^{|X|})^N \to \RR$ is the cost function. \cpedit{The summation of welfare function of all agents}
is defined as $\text{Wel}(x) = \sum_{i \in [N]}v_i(x)$ for all $x \in X$. The goal is to design $\bx$ 
and {$(c_i)_{i\in[N]}$} functions to make a DSIC {and} welfare{-}maximizing mechanism. The following $\bx$ and $c_i$ for $i \in [N]$ is DSIC welfare maximizing mechanism:
\begin{align*}
    \bx(\bb) = \argmax_{x \in X} \sum_{i \in [N]} b_i(x), \qquad c_i(\bb) = \max_{x \in X}\sum_{j \neq i} b_j(x) - \sum_{j \neq i} b_j(\bx(\bb)) \text{ for all } i \in [N].
\end{align*}
\end{definition}

\cpedit{Unfortunately, the {classical} VCG mechanism presents certain limitations such as it cannot be solved in polynomial time in general \citep{nisan1999algorithmic, borgers2015introduction, roughgarden2010algorithmic}.}
{We here adopt certain forms of allocation rule (which corresponds to the aggregation rule in our RLHF setting) and cost functions as follows, which allow the outcome set to be a simplex (with infinitely many outcomes)}: 
\begin{align}
    \text{Agg-p}(\bP) = \argmin_{p \in \Delta(\cA) } \sum_{i \in [N]}d(\bP, p), \qquad c_i(\bP) = \sum_{j \neq i}d(P_i, \text{Agg-p}(\bP)) - \min_{p \in \Delta(\cA)}\sum_{j \neq i}d(P_i, p)
    \label{eqn:agg-p-cost}.
\end{align}

\begin{restatable}{theorem}{vcgmain}
\emph{(DSIC Welfare-Maximizing Mechanism).}
\label{thm:dsic}
    The aggregation {rule} and the cost function as in  \Cref{eqn:agg-p-cost} provide a DSIC welfare-maximizing mechanism.  
\end{restatable}

Due to the modeling, we have an advantage compared to the original VCG mechanism.  The minimization in the aggregation function can be achieved using a simple optimization method such as gradient descent, which makes our aggregation rule and cost function computation easy, which is in contrast with the original VCG mechanism.

\cpedit{Now, we connect our mechanism design with pre-defined preference aggregation function ($\text{Agg-p}_\alpha$ in \Cref{eqn:agg-pref}). \Cref{thm:aggrenyi} implies that \Cref{eqn:agg-pref} is maximizing social welfare and also we are available to construct the cost function to make human feedback truthful. 
}
\begin{restatable}{theorem}{aggprenyi}
\label{thm:aggrenyi}
    \cpedit{If we set $d$ as a variant of {the} $\alpha$-Renyi distance for $\alpha \neq 0$ (\Cref{rmk:renyi}) and define $d$ as KL-divergence for $\alpha = 0$, the DSIC welfare-maximizing aggregation rule is \Cref{eqn:agg-pref}.} \cpedit{Therefore, aggregation rule \Cref{eqn:agg-pref} is also welfare-maximizing with appropriate cost function.}
\end{restatable}

If we assume the relationship between reward and preference {follows} the PL model (\Cref{def:PL}), then \Cref{eqn:agg-reward} implies a welfare-maximizing aggregation rule, \cpedit{which connects reward aggregation and mechanism design.}  We defer all proofs for the results in \Cref{sssec:vcg} to \Cref{appendix:vcg}. 
}
\neurips{\subsubsection{Proof of \Cref{sssec:vcg}}}
\arxiv{\subsection{Proof of \Cref{sssec:vcg}}}
\label{appendix:vcg}
The proof of \Cref{thm:dsic} is exactly the same as the proof of the fact that the VCG mechanism is DSIC welfare-maximizing. The difference with the proof of the original VCG mechanism's property is the parametrization of bidding, which will be explained in this section.  
\vcgmain*
\begin{proof}
    The aggregated result space $ \Delta(\cA)$ corresponds to the output space $X$ of \Cref{def:vcg}. We can interpret the bidding part, $b_j(x)$, of \Cref{def:vcg} as  $-d(P_j, p)$. So, instead of bidding on every output without any rule, we can interpret the bidding as the minus distance function between their own probabilistic opinion and aggregated probabilistic opinion. The underlying value function therefore corresponds to $-d(p_j, p)$. This interpretation provides the same line of proof of the VCG mechanism's property.
\end{proof}
By good parametrization of the VCG mechanism, we can also achieve the computational efficiency of our cost function computation. 
\aggprenyi*

\begin{proof}
    We solve the optimization problem as follows:
    \begin{align}
        \argmin_{p \in \Delta(\cA)}\sum_{i \in [N]} d_{\alpha}(P_i,p) \label{eqn:opt-renyi}
    \end{align}
    where $d_{\alpha}(p,q) = \text{sgn}(\alpha) \frac{1}{1-\alpha}\sum_{j \in \cA}\left(1 - p_j^{\alpha}q_j^{1-\alpha}\right).$ 
    We can check that $d_{\alpha}(p, q)$ is a convex function with respect to $q$, as 
    \begin{align*}
        \frac{d^2}{dq_j^2}d_{\alpha}(p,q) = \alpha \text{sgn}(\alpha) q_j^{-\alpha - 1} \geq 0.
    \end{align*}
    Therefore, \Cref{eqn:opt-renyi} can be solved with first-order condition:
    \begin{align*}
        \sum_{i \in [N]}\left(\frac{P_{ij}}{p_j}\right)^{\alpha} = \lambda \qquad \text{for all $j \in \cA$}
    \end{align*}
    which provides \Cref{eqn:agg-pref}. 
\end{proof}

\section{Experiment Details}
\label{appendix:exp-detail}
\neurips{For the Reddit TL;DR summarization dataset, \citep{stiennon2020learning} filtered the TL;DR summarization dataset \citep{volske2017tl} to ensure quality. The Reddit TL;DR human feedback dataset is constructed with two components: \texttt{comparison} and \texttt{axes evals}. The \texttt{comparison} component contains labeled comparisons between pairs of summaries with workers identified by unique IDs, while the \texttt{axes evals} component contains ratings of summaries along three axes: accuracy, coverage, and coherence.}

\neurips{In \Cref{ssec:reward-model-performance}, we fine-tuned the personalized reward model with \Cref{alg:personal} and \Cref{alg:cluster}, without pessimism. We ranked workers based on the number of annotated comparisons in the training split of the dataset and included the top 5 workers for training. To balance the number of samples for each worker, we took the worker with the fewest samples among the top 5 as the baseline. We then randomly sampled the same number of comparisons from the other workers so that each worker had 5,373 comparison samples, resulting in a total of 26,865 samples for training. Similarly, for the validation set, we applied the same method. We randomly sampled the same number of comparisons as the worker with the fewest samples from the top 5 workers used in training. Each worker had 1,238 samples for validation, resulting in a total of 6,190 samples for validation. In \Cref{ssec:examples}, we fine-tuned the personalized reward model using \Cref{alg:aggregation}, without incorporating pessimism. We considered three types of reward functions: accuracy-reward, coverage-reward, and coherence-reward. Since this dataset is only publicly available for the validation set (with 8,585 samples) and the test set (with 6,313 samples), we used the validation set for fine-tuning the training set of our model and validated it with the samples in the test set. }

For reward model training, we used the AdamW optimizer \citep{loshchilov2018fixing} with a learning rate of 1e-6 and a batch size of 8 for 1 epoch. The learning rate was linearly warmed up from 0 to 1e-6 over 150 steps. For fine-tuning the language model with the trained reward model, we used the AdamW optimizer with a learning rate of 5e-6 and a batch size of 4. We employed Proximal Policy Optimization (PPO) \citep{schulman2017proximal} with 128 rollouts, which is the default setting in the TRLX library \citep{havrilla2023trlx}. For SFT, for the GPT-J 6B model, we initialized a personalized language model using an open-source SFT by CarperAI \citep{havrilla2023trlx}, which fine-tuned the GPT-J 6B model \citep{wang2021gpt} with the Reddit TL;DR summarization dataset using the TRLX library \citep{havrilla2023trlx}. For the LLaMA 3, as there is no fine-tuned open model for the Reddit TL;DR summarization, we trained it with \citep{stiennon2020learning}'s dataset. We used NVIDIA A100 80GB for every experiment.

\neurips{For the reward model structure of the general representation function, we froze the first 70\% of the language model's layers, using the outputs of these layers as the representation. For the linear representation function, we froze the entire language model and only trained the additional final layer.}
\clearpage
\subsection{Detailed Aggregated Results in \Cref{ssec:examples}}
\label{ssec:detailed-ex}
We provide the first 5 examples in the test set, and compare the results. The examples might have undesirable vocabulary or topics.

\noindent\makebox[\linewidth]{\rule{0.6\paperwidth}{0.4pt}}

{\color{red}\textbf{Prompt 1}}
\\\texttt{SUBREDDIT: r/AskReddit\\TITLE: How do you get someone out of your head?\\POST: Hi,\\I'm 22, and I have been with my girlfriend for 5 years now. We recently moved together. We've always loved each other intensely.\\\\Problem, I recently started to have feelings for an other person (a friend). This person has had a boyfriend for now 3 years, and has absolutely no ideas. Those feelings were so strong, it was hard to hide them. After 2 months of me being distant and really sad, my girlfriend forced me to say what was bothering me. I'm not a good liar, and now she knows.\\\\We decided to give us a week alone, I went to my parents.~\\\\Now, I'm completely lost. I keep on thinking about this person, and I hate that. I would like for those feelings to go away, to leave me alone. But I can't.~~\\\\What do I do? It's been 3 months now, and I'm just desperate.\\TL;DR:}

{\color{red}\textbf{Aggregation with $\alpha = -\infty$}}
\\
\texttt{I'm in love with my girlfriend, but recently started to have feelings for another person (a friend). How do you get someone out of your head?}

{\color{red}\textbf{Aggregation with $\alpha = -1$}}
\\
\texttt{
 I don't know how to get out of my head, to get over my feelings for someone else, without cheating, without breaking up. Help.}
 
{\color{red}\textbf{Aggregation with $\alpha = 0$}}
\\
\texttt{
 Have in couple for 5 years, moved in together. It has the tendency to fall in love with other random people. How can I get those feelings to leave me alone?}

{\color{red}\textbf{Aggregation with $\alpha = 1$}}
\\
\texttt{
 I have a girlfriend I love, and also a friend that I have feelings for. I don't know how to get rid of those feelings. Please help.}
 
{\color{red}\textbf{Aggregation with $\alpha = \infty$}}
\\
\texttt{I'm with my girlfriend for 5 years, I have feelings for an other person and I need to get those feelings gone.}
\clearpage 
\noindent\makebox[\linewidth]{\rule{0.6\paperwidth}{0.4pt}}

{\color{red}\textbf{Prompt 2}}
\\\texttt{SUBREDDIT: r/pettyrevenge\\TITLE: So, my mom woke me up with a loud TV.\\POST: She was in her living room, watching TV. This was at about 8:30 in the morning, and she was exercising. She turned the TV up extra loud to hear it over her excercycle, and woke me up. I went in there asking for her to turn it down. She said she didn't have to; I explained that I always used headphones so she didn't have to deal with my noise and that she should give me a little more respect, given that I paid rent at the time.\\\\She disagreed. I went back to my room, rather pissed off at the lack of equality. I had no lock on my door; but I had a dresser right next to it, so I pulled one of the drawers out enough so that it caused the door to not be openable. Then, I turned my speakers up really loud and blasted Gangnam Style on repeat, with the bass cranked up as high as it could go.\\\\If you hate Gangnam Style for being overplayed, you will see why I chose that particular song. I personally don't mind it. But here's the thing about my bass; it vibrates the walls, making one hell of a lot of noise. Needless to say, my mom was not pleased and shut off the internet. But it was oh so worth it.\\TL;DR:}

{\color{red}\textbf{Aggregation with $\alpha = -\infty$}}
\\
\texttt{I make a racket when I sleep, then I play Gangnam Style at full volume to "wake me up" and piss my mom off.}

{\color{red}\textbf{Aggregation with $\alpha = -1$}}
\\
\texttt{ Moms wake me up by watching me play video games. She calls it disrespect for her having to listen to my music. I play Gangnam Style on repeat until she shuts off the Internet.}
  
{\color{red}\textbf{Aggregation with $\alpha = 0$}}
\\
\texttt{
  Mom was awake early and didn't have to deal with my noise, so I played Gangnam Style and left it at that.}

{\color{red}\textbf{Aggregation with $\alpha = 1$}}
\\
\texttt{
 Mom wakes me up to background noise even when she does not have headphones on. I crank up Gangnam style on repeat enough to cause some structural damage to her house. }
 
{\color{red}\textbf{Aggregation with $\alpha = \infty$}}
\\
\texttt{So, my mom wakes me up with my TV; I pay rent, but don't always have to bow to her wishes. Fuck yes.  }

\clearpage
\noindent\makebox[\linewidth]{\rule{0.6\paperwidth}{0.4pt}}

{\color{red}\textbf{Prompt 3}}
\\\texttt{SUBREDDIT: r/relationships\\TITLE: My girlfriend (20f) of two years cheated on me (20m) by kissing two guys at a Halloween party.\\POST: Lately her and I have been having a few problems, and these problems have been brought up before a few times. One problem being that I don't show enough affection. I don't tell her she's pretty very often or don't compliment her much. I feel terrible about it, but this time I was really trying to change for her.\\\\For Halloween she went to visit her step brother at a college and I got drunk with my friends and watched movies. Last night (11/1) we got in a huge fight about me not changing and how our relationship won't work out and basically broke up over the phone. So in an effort to try and fix it I drove to her house. She told me how at the parties she went to that two guys kissed her. The first one she pushed away, but the second one I asked her if she kissed him back and she said yes and that she did it because it made her feel wanted, which I guess I haven't been making her feel that way lately. We cried, we talked about everything, we had great sex, and I stayed over at her house just to sleep with her and then snuck out in the morning so her parents wouldn't know.\\\\We both obviously want to work things out but aren't sure if we should. I love this girl, but the more I think about it, all I can think about is her cheating on me, and more importantly, liking it. It makes me sick to my stomach. Should I even try to fix it or would I be better off cutting all ties.\\TL;DR:}

{\color{red}\textbf{Aggregation with $\alpha = -\infty$}}
\\
\texttt{Girlfriend cheated on me for a feeling of closeness, should I even try to fix things or am I just getting screwed? }

{\color{red}\textbf{Aggregation with $\alpha = -1$}}
\\
\texttt{
   girlfriend cheated on me by kissing two guys last night. I love her and don't want to break up, but don't know what to do. Is cheating even worth fighting for?}

{\color{red}\textbf{Aggregation with $\alpha = 0$}}
\\
\texttt{
   My girlfriend of two years, whom I love dearly, recently kissed two guys at a Halloween party. Should I try to salvage the relationship or cut everything off?}
   
{\color{red}\textbf{Aggregation with $\alpha = 1$}}
\\
\texttt{
    Girlfriend of two years cheated on me by kissing two guys, I don't know if I should even try to fix it.}
    
{\color{red}\textbf{Aggregation with $\alpha = \infty$}}
\\
\texttt{Girlfriend of 2 years cheated on me with two guys last night. I feel sick to my stomach. I do still love her, should I work things out or just cut all ties? }

\clearpage
\noindent\makebox[\linewidth]{\rule{0.6\paperwidth}{0.4pt}}

{\color{red}\textbf{Prompt 4}}
\\\texttt{SUBREDDIT: r/running\\TITLE: One year post-pregnancy, aches and pains in hips and knees, can't even comfortably do day 1 of C25k. Help?\\POST: I'll try to keep this short!\\\\**Background**\\\\* I've always been an on again/off again (very casual!) jogger, typically doing 3 - 5 k\\* My knees have always been finicky, and I went to a physio who thought I had "runner's knee"\\* Pre-pregnancy, my "runner's knee" would flare up when I got to the 8 - 10 k distance range, even if I had a decent base (doing a C210k type program)\\\\**Current Problem**\\\\I had my baby a year ago, so all in all I haven't run for about 1.5 years.~~I'm quite slim and have been doing aerobics-style classes for the past year, so I'm not totally out of shape.~~Body weight exercises, aerobics, bikes and ellipticals are all fine.~However, when I run even the tinyiest bit, or even go on a long walk or a hike, my pelvis gets very sore and tight, and my knees start hurting very quickly.~~I already am doing general squats/lunges/stretching type things.~\\\\I'm starting to feel like running just isn't for me anymore. Which is a bummer, because I really enjoy running!\\\\Has anyone had something similar? Can anyone recommend some stretches or exercises that might help?~~Should I see a Dr?~~Or should I just see a physio?~~Not quite sure how to proceed.\\\\Thanks!\\TL;DR:}

{\color{red}\textbf{Aggregation with $\alpha = -\infty$}}
\\
\texttt{Have always been a runner, always had knee troubles, no longer pregnant, aches and pains from running now.~~Help? }

{\color{red}\textbf{Aggregation with $\alpha = -1$}}
\\
\texttt{
     Was a fan of running.  Has your body ached?   What happened when you started running again?  How did you go about it? :)}
    
{\color{red}\textbf{Aggregation with $\alpha = 0$}}
\\
\texttt{
    Had runner's knee, started jogging again immediately post-pregnancy, getting back to where I was at pre-pregnancy.  Now pelvis/knee pain after even tiny runs.  What do?}

{\color{red}\textbf{Aggregation with $\alpha = 1$}}
\\
\texttt{
     running post-pregnancy, pelvis/knees are very unforgiving.  How best to proceed (stretches? exercises? etc.)?  And what can I do to make it better?}

{\color{red}\textbf{Aggregation with $\alpha = \infty$}}
\\
\texttt{1.5 year post-pregnancy, hip/knee pain while "running" even a few feet, no injuries/illnesses/pain beforehand, not out of shape.~~Advice? }

\clearpage
\noindent\makebox[\linewidth]{\rule{0.6\paperwidth}{0.4pt}}

{\color{red}\textbf{Prompt 5}}
\\\texttt{SUBREDDIT: r/tifu\\TITLE: TIFU by getting scared for no reason and peeing on myself\\POST: (Pre-apology for formatting errors since i'm on my phone)\\\\Well, this was embarrassing and happened a few minutes ago.~\\I work at a small law office which shares a building with a few other businesses, a dentist's office and a military recruitment center. There is no drinking fountain in the building, so I bring a giant 2L water bottle which I sip on all day to keep hydrated and counteract all the copious amounts of alcohol i drink (after work, of course). Unfortunately, I have been cursed with a small bladder, and have to go pee sometimes 8-10 times a day. Luckily the back door of my office opens up right near the bathroom, so I can usually drain the main vein without embarrassing myself by being frequently seen there.\\\\As I was wrapping up work for the day, I realized I would have to pee one more time before leaving. It was slightly after 5, so when I walked to the bathroom, I wasn't expecting to see anyone in there (it is the kind of bathroom with a urinal and a stall).\\\\Fuckup begins when I open the door and there is a dude who had just finished shitting and was leaving the stall. For whatever reason this terrified me for a split second and I jumped and gasped.~\\\\The guy was amused and chuckled and was like, did I scare you? I muttered something like "haha, yeah, a little bit" and I kept my head down, beelined to the urinal and unleashed the krakan.~\\\\I was super embarrassed and very preoccupied so I didn't notice that the bottom part of my untucked, button-down shirt had slipped into my urine stream and was getting soaked. I didn't realize it until it began to sort of soak back onto my pant leg and getting my thigh wet.~\\\\Well, fuck, i thought. This sucks. Then i remembered my fiancee was coming to pick me up straight from work so we could go grocery shopping. I cleaned up as best as i could, but its still very obvious. Luckily I have a P coat which is long enough to cover my pee-soaked netherregion, so i may get through this unscathed. My girl didn't\\TL;DR:}

{\color{red}\textbf{Aggregation with $\alpha = -\infty$}}
\\
\texttt{was scared for no reason, and pissed on myself when doing so. Not sure if I'm going to be able to avoid a major wardrobe change before tonight... }

{\color{red}\textbf{Aggregation with $\alpha = -1$}}
\\
\texttt{
     I was startled by someone, lost my bladder control and pee'd all over myself, after I finished up and was getting ready to leave for my fiancees house. Fortunately for me, it looks like i just peed myself. }
     
{\color{red}\textbf{Aggregation with $\alpha = 0$}}
\\
\texttt{
     I got scared for no reason and peed on myself, then covered it up with my fiancee as we're about to leave work, but it's still visible.}

{\color{red}\textbf{Aggregation with $\alpha = 1$}}
\\
\texttt{ work didn't have a toilet, had to use bathroom at my office. Sharted in the bathroom stall and got my shirt stuck to me, but may have gotten rid of all evidence of the incident by bringing a P coat }

{\color{red}\textbf{Aggregation with $\alpha = \infty$}}
\\
\texttt{I jumped, freaked out, urinated, and forgot to change my wet shirt. When I got home, I had to change the dress so my fiancee wouldn't catch a glimpse. }

\neurips{\newpage
\section*{NeurIPS Paper Checklist}
\begin{enumerate}

\item {\bf Claims}
    \item[] Question: Do the main claims made in the abstract and introduction accurately reflect the paper's contributions and scope?
    \item[] Answer: \answerYes{}
    \item[] Justification: We provided theorems and corresponding proofs. Also, we have extensive experiments. 
    \item[] Guidelines:
    \begin{itemize}
        \item The answer NA means that the abstract and introduction do not include the claims made in the paper.
        \item The abstract and/or introduction should clearly state the claims made, including the contributions made in the paper and important assumptions and limitations. A No or NA answer to this question will not be perceived well by the reviewers. 
        \item The claims made should match theoretical and experimental results, and reflect how much the results can be expected to generalize to other settings. 
        \item It is fine to include aspirational goals as motivation as long as it is clear that these goals are not attained by the paper. 
    \end{itemize}

\item {\bf Limitations}
    \item[] Question: Does the paper discuss the limitations of the work performed by the authors?
    \item[] Answer: \answerYes{}
    \item[] Justification: We provided the explanation about the lower bound results - which might have a gap with upper bound results. 
    \item[] Guidelines:
    \begin{itemize}
        \item The answer NA means that the paper has no limitation while the answer No means that the paper has limitations, but those are not discussed in the paper. 
        \item The authors are encouraged to create a separate "Limitations" section in their paper.
        \item The paper should point out any strong assumptions and how robust the results are to violations of these assumptions (e.g., independence assumptions, noiseless settings, model well-specification, asymptotic approximations only holding locally). The authors should reflect on how these assumptions might be violated in practice and what the implications would be.
        \item The authors should reflect on the scope of the claims made, e.g., if the approach was only tested on a few datasets or with a few runs. In general, empirical results often depend on implicit assumptions, which should be articulated.
        \item The authors should reflect on the factors that influence the performance of the approach. For example, a facial recognition algorithm may perform poorly when image resolution is low or images are taken in low lighting. Or a speech-to-text system might not be used reliably to provide closed captions for online lectures because it fails to handle technical jargon.
        \item The authors should discuss the computational efficiency of the proposed algorithms and how they scale with dataset size.
        \item If applicable, the authors should discuss possible limitations of their approach to address problems of privacy and fairness.
        \item While the authors might fear that complete honesty about limitations might be used by reviewers as grounds for rejection, a worse outcome might be that reviewers discover limitations that aren't acknowledged in the paper. The authors should use their best judgment and recognize that individual actions in favor of transparency play an important role in developing norms that preserve the integrity of the community. Reviewers will be specifically instructed to not penalize honesty concerning limitations.
    \end{itemize}

\item {\bf Theory Assumptions and Proofs}
    \item[] Question: For each theoretical result, does the paper provide the full set of assumptions and a complete (and correct) proof?
    \item[] Answer: \answerYes{}
    \item[] Justification: We provided every proof and every assumption that we used. 
    \item[] Guidelines:
    \begin{itemize}
        \item The answer NA means that the paper does not include theoretical results. 
        \item All the theorems, formulas, and proofs in the paper should be numbered and cross-referenced.
        \item All assumptions should be clearly stated or referenced in the statement of any theorems.
        \item The proofs can either appear in the main paper or the supplemental material, but if they appear in the supplemental material, the authors are encouraged to provide a short proof sketch to provide intuition. 
        \item Inversely, any informal proof provided in the core of the paper should be complemented by formal proofs provided in appendix or supplemental material.
        \item Theorems and Lemmas that the proof relies upon should be properly referenced. 
    \end{itemize}

    \item {\bf Experimental Result Reproducibility}
    \item[] Question: Does the paper fully disclose all the information needed to reproduce the main experimental results of the paper to the extent that it affects the main claims and/or conclusions of the paper (regardless of whether the code and data are provided or not)?
    \item[] Answer: \answerYes{}
    \item[] Justification: 
    We provided how to implement our algorithm very carefully. We disclose what package we used too. 
    \item[] Guidelines:
    \begin{itemize}
        \item The answer NA means that the paper does not include experiments.
        \item If the paper includes experiments, a No answer to this question will not be perceived well by the reviewers: Making the paper reproducible is important, regardless of whether the code and data are provided or not.
        \item If the contribution is a dataset and/or model, the authors should describe the steps taken to make their results reproducible or verifiable. 
        \item Depending on the contribution, reproducibility can be accomplished in various ways. For example, if the contribution is a novel architecture, describing the architecture fully might suffice, or if the contribution is a specific model and empirical evaluation, it may be necessary to either make it possible for others to replicate the model with the same dataset, or provide access to the model. In general. releasing code and data is often one good way to accomplish this, but reproducibility can also be provided via detailed instructions for how to replicate the results, access to a hosted model (e.g., in the case of a large language model), releasing of a model checkpoint, or other means that are appropriate to the research performed.
        \item While NeurIPS does not require releasing code, the conference does require all submissions to provide some reasonable avenue for reproducibility, which may depend on the nature of the contribution. For example
        \begin{enumerate}
            \item If the contribution is primarily a new algorithm, the paper should make it clear how to reproduce that algorithm.
            \item If the contribution is primarily a new model architecture, the paper should describe the architecture clearly and fully.
            \item If the contribution is a new model (e.g., a large language model), then there should either be a way to access this model for reproducing the results or a way to reproduce the model (e.g., with an open-source dataset or instructions for how to construct the dataset).
            \item We recognize that reproducibility may be tricky in some cases, in which case authors are welcome to describe the particular way they provide for reproducibility. In the case of closed-source models, it may be that access to the model is limited in some way (e.g., to registered users), but it should be possible for other researchers to have some path to reproducing or verifying the results.
        \end{enumerate}
    \end{itemize}

\item {\bf Open access to data and code}
    \item[] Question: Does the paper provide open access to the data and code, with sufficient instructions to faithfully reproduce the main experimental results, as described in supplemental material?
    \item[] Answer: \answerYes{}
    \item[] We provided the explanation and we will also include the reward function module in our GitHub after rebuttal. Other codes are already open data and open models.  
    \item[] Guidelines:
    \begin{itemize}
        \item The answer NA means that paper does not include experiments requiring code.
        \item Please see the NeurIPS code and data submission guidelines (\url{https://nips.cc/public/guides/CodeSubmissionPolicy}) for more details.
        \item While we encourage the release of code and data, we understand that this might not be possible, so “No” is an acceptable answer. Papers cannot be rejected simply for not including code, unless this is central to the contribution (e.g., for a new open-source benchmark).
        \item The instructions should contain the exact command and environment needed to run to reproduce the results. See the NeurIPS code and data submission guidelines (\url{https://nips.cc/public/guides/CodeSubmissionPolicy}) for more details.
        \item The authors should provide instructions on data access and preparation, including how to access the raw data, preprocessed data, intermediate data, and generated data, etc.
        \item The authors should provide scripts to reproduce all experimental results for the new proposed method and baselines. If only a subset of experiments are reproducible, they should state which ones are omitted from the script and why.
        \item At submission time, to preserve anonymity, the authors should release anonymized versions (if applicable).
        \item Providing as much information as possible in supplemental material (appended to the paper) is recommended, but including URLs to data and code is permitted.
    \end{itemize}

\item {\bf Experimental Setting/Details}
    \item[] Question: Does the paper specify all the training and test details (e.g., data splits, hyperparameters, how they were chosen, type of optimizer, etc.) necessary to understand the results?
    \item[] Answer: \answerYes{}
    \item[] Justification: We provided it in the Appendix about every training and test detail. 
    \item[] Guidelines:
    \begin{itemize}
        \item The answer NA means that the paper does not include experiments.
        \item The experimental setting should be presented in the core of the paper to a level of detail that is necessary to appreciate the results and make sense of them.
        \item The full details can be provided either with the code, in appendix, or as supplemental material.
    \end{itemize}

\item {\bf Experiment Statistical Significance}
    \item[] Question: Does the paper report error bars suitably and correctly defined or other appropriate information about the statistical significance of the experiments?
    \item[] Answer: \answerYes{}
    \item[] Justification: We provided 3 times repeat on our experiments and provided error bars in the plot. 
    \item[] Guidelines:
    \begin{itemize}
        \item The answer NA means that the paper does not include experiments.
        \item The authors should answer "Yes" if the results are accompanied by error bars, confidence intervals, or statistical significance tests, at least for the experiments that support the main claims of the paper.
        \item The factors of variability that the error bars are capturing should be clearly stated (for example, train/test split, initialization, random drawing of some parameter, or overall run with given experimental conditions).
        \item The method for calculating the error bars should be explained (closed form formula, call to a library function, bootstrap, etc.)
        \item The assumptions made should be given (e.g., Normally distributed errors).
        \item It should be clear whether the error bar is the standard deviation or the standard error of the mean.
        \item It is OK to report 1-sigma error bars, but one should state it. The authors should preferably report a 2-sigma error bar than state that they have a 96\% CI, if the hypothesis of Normality of errors is not verified.
        \item For asymmetric distributions, the authors should be careful not to show in tables or figures symmetric error bars that would yield results that are out of range (e.g. negative error rates).
        \item If error bars are reported in tables or plots, The authors should explain in the text how they were calculated and reference the corresponding figures or tables in the text.
    \end{itemize}

\item {\bf Experiments Compute Resources}
    \item[] Question: For each experiment, does the paper provide sufficient information on the computer resources (type of compute workers, memory, time of execution) needed to reproduce the experiments?
    \item[] Answer:\answerYes{}
    \item[] Justification: We provided what type of GPU we used. 
    \item[] Guidelines:
    \begin{itemize}
        \item The answer NA means that the paper does not include experiments.
        \item The paper should indicate the type of compute workers CPU or GPU, internal cluster, or cloud provider, including relevant memory and storage.
        \item The paper should provide the amount of compute required for each of the individual experimental runs as well as estimate the total compute. 
        \item The paper should disclose whether the full research project required more compute than the experiments reported in the paper (e.g., preliminary or failed experiments that didn't make it into the paper). 
    \end{itemize}
    
\item {\bf Code Of Ethics}
    \item[] Question: Does the research conducted in the paper conform, in every respect, with the NeurIPS Code of Ethics \url{https://neurips.cc/public/EthicsGuidelines}?
    \item[] Answer: \answerYes{}
    \item[] Justification: We followed every instruction in \url{https://neurips.cc/public/EthicsGuidelines}. 
    \item[] Guidelines:
    \begin{itemize}
        \item The answer NA means that the authors have not reviewed the NeurIPS Code of Ethics.
        \item If the authors answer No, they should explain the special circumstances that require a deviation from the Code of Ethics.
        \item The authors should make sure to preserve anonymity (e.g., if there is a special consideration due to laws or regulations in their jurisdiction).
    \end{itemize}

\item {\bf Broader Impacts}
    \item[] Question: Does the paper discuss both potential positive societal impacts and negative societal impacts of the work performed?
    \item[] Answer: \answerYes{}.
    \item[] Justification: We provided societal impacts by discussing imposing a cost of RLHF labeling for truthful responses. We provided an example of why this might be possible. 
    \item[] Guidelines:
    \begin{itemize}
        \item The answer NA means that there is no societal impact of the work performed.
        \item If the authors answer NA or No, they should explain why their work has no societal impact or why the paper does not address societal impact.
        \item Examples of negative societal impacts include potential malicious or unintended uses (e.g., disinformation, generating fake profiles, surveillance), fairness considerations (e.g., deployment of technologies that could make decisions that unfairly impact specific groups), privacy considerations, and security considerations.
        \item The conference expects that many papers will be foundational research and not tied to particular applications, let alone deployments. However, if there is a direct path to any negative applications, the authors should point it out. For example, it is legitimate to point out that an improvement in the quality of generative models could be used to generate deepfakes for disinformation. On the other hand, it is not needed to point out that a generic algorithm for optimizing neural networks could enable people to train models that generate Deepfakes faster.
        \item The authors should consider possible harms that could arise when the technology is being used as intended and functioning correctly, harms that could arise when the technology is being used as intended but gives incorrect results, and harms following from (intentional or unintentional) misuse of the technology.
        \item If there are negative societal impacts, the authors could also discuss possible mitigation strategies (e.g., gated release of models, providing defenses in addition to attacks, mechanisms for monitoring misuse, mechanisms to monitor how a system learns from feedback over time, improving the efficiency and accessibility of ML).
    \end{itemize}
    
\item {\bf Safeguards}
    \item[] Question: Does the paper describe safeguards that have been put in place for responsible release of data or models that have a high risk for misuse (e.g., pretrained language models, image generators, or scraped datasets)?
    \item[] Answer: \answerNA{}.
    \item[] Justification: We used an open dataset and open weight model. 
    \item[] Guidelines:
    \begin{itemize}
        \item The answer NA means that the paper poses no such risks.
        \item Released models that have a high risk for misuse or dual-use should be released with necessary safeguards to allow for controlled use of the model, for example by requiring that users adhere to usage guidelines or restrictions to access the model or implementing safety filters. 
        \item Datasets that have been scraped from the Internet could pose safety risks. The authors should describe how they avoided releasing unsafe images.
        \item We recognize that providing effective safeguards is challenging, and many papers do not require this, but we encourage authors to take this into account and make a best faith effort.
    \end{itemize}

\item {\bf Licenses for existing assets}
    \item[] Question: Are the creators or original owners of assets (e.g., code, data, models), used in the paper, properly credited and are the license and terms of use explicitly mentioned and properly respected?
    \item[] Answer: \answerYes{}
    \item[] Justification: We referred to every dataset in our paper, which is also an open dataset and model. As some datasets do not mention the exact license, we rather choose to properly refer to the paper and dataset websites. 
    \item[] Guidelines:
    \begin{itemize}
        \item The answer NA means that the paper does not use existing assets.
        \item The authors should cite the original paper that produced the code package or dataset.
        \item The authors should state which version of the asset is used and, if possible, include a URL.
        \item The name of the license (e.g., CC-BY 4.0) should be included for each asset.
        \item For scraped data from a particular source (e.g., website), the copyright and terms of service of that source should be provided.
        \item If assets are released, the license, copyright information, and terms of use in the package should be provided. For popular datasets, \url{paperswithcode.com/datasets} has curated licenses for some datasets. Their licensing guide can help determine the license of a dataset.
        \item For existing datasets that are re-packaged, both the original license and the license of the derived asset (if it has changed) should be provided.
        \item If this information is not available online, the authors are encouraged to reach out to the asset's creators.
    \end{itemize}

\item {\bf New Assets}
    \item[] Question: Are new assets introduced in the paper well documented and is the documentation provided alongside the assets?
    \item[] Answer: \answerNA{}.
    \item[] Justification: We do not make a new open dataset. 
    \item[] Guidelines:
    \begin{itemize}
        \item The answer NA means that the paper does not release new assets.
        \item Researchers should communicate the details of the dataset/code/model as part of their submissions via structured templates. This includes details about training, license, limitations, etc. 
        \item The paper should discuss whether and how consent was obtained from people whose asset is used.
        \item At submission time, remember to anonymize your assets (if applicable). You can either create an anonymized URL or include an anonymized zip file.
    \end{itemize}

\item {\bf Crowdsourcing and Research with Human Subjects}
    \item[] Question: For crowdsourcing experiments and research with human subjects, does the paper include the full text of instructions given to participants and screenshots, if applicable, as well as details about compensation (if any)? 
    \item[] Answer: \answerNA{}.
    \item[] Justification: We do not have crowdsourcing, and we used an open dataset for human feedback, which is fully anonymous. 
    \item[] Guidelines:
    \begin{itemize}
        \item The answer NA means that the paper does not involve crowdsourcing nor research with human subjects.
        \item Including this information in the supplemental material is fine, but if the main contribution of the paper involves human subjects, then as much detail as possible should be included in the main paper. 
        \item According to the NeurIPS Code of Ethics, workers involved in data collection, curation, or other labor should be paid at least the minimum wage in the country of the data collector. 
    \end{itemize}

\item {\bf Institutional Review Board (IRB) Approvals or Equivalent for Research with Human Subjects}
    \item[] Question: Does the paper describe potential risks incurred by study participants, whether such risks were disclosed to the subjects, and whether Institutional Review Board (IRB) approvals (or an equivalent approval/review based on the requirements of your country or institution) were obtained?
    \item[] Answer: \answerNA{}.
    \item[] Justification: We do not have crowdsourcing.
    \item[] Guidelines:
    \begin{itemize}
        \item The answer NA means that the paper does not involve crowdsourcing nor research with human subjects.
        \item Depending on the country in which research is conducted, IRB approval (or equivalent) may be required for any human subjects research. If you obtained IRB approval, you should clearly state this in the paper. 
        \item We recognize that the procedures for this may vary significantly between institutions and locations, and we expect authors to adhere to the NeurIPS Code of Ethics and the guidelines for their institution. 
        \item For initial submissions, do not include any information that would break anonymity (if applicable), such as the institution conducting the review.
    \end{itemize}

\end{enumerate}}
\end{document}